\def\eqref#1{equation~\ref{#1}}
\def\1{\bm{1}}
\def\vmu{{\bm{\mu}}}
\def\vtheta{{\bm{\theta}}}
\def\vsigma{{\bm{\sigma}}}
\def\vSigma{{\bm{\Sigma}}}
\def\vtau{{\bm{\tau}}}
\def\va{{\bm{a}}}
\def\vc{{\bm{c}}}
\def\vh{{\bm{h}}}
\def\vt{{\bm{t}}}
\def\vx{{\bm{x}}}
\def\vy{{\bm{y}}}
\def\vz{{\bm{z}}}
\def\mX{{\bm{X}}}
\def\mY{{\bm{Y}}}
\DeclareMathAlphabet{\mathsfit}{\encodingdefault}{\sfdefault}{m}{sl}
\SetMathAlphabet{\mathsfit}{bold}{\encodingdefault}{\sfdefault}{bx}{n}
\def\sS{{\mathbb{S}}}
\newcommand*\colvec[1]{
        \global\colveccount#1
        \begin{bmatrix}
        \colvecnext
}
\def\colvecnext#1{
        #1
        \global\advance\colveccount-1
        \ifnum\colveccount>0
                \\
                \expandafter\colvecnext
        \else
                \end{bmatrix}
        \fi
}
  \newlist{inlinelist}{enumerate*}{1}
  \setlist*[inlinelist,1]{%
          label=(\roman*),
      }
\theoremstyle:=theorem,corollary,lemma,model,definition,remark,plain\do{%
        \expandafter\g@addto@macro\csname th@\theoremstyle\endcsname{%
            \setlength\thm@preskip\parskip
            \setlength\thm@postskip\parskip
            \addtolength\thm@preskip\parskip
            }%
        }
\theoremstyle{definition}
\newtheorem{theorem}{Theorem}
\newtheorem*{theorem*}{Theorem}
\newtheorem*{lemma*}{Lemma}
\newtheorem{remark}{Remark}
\newtheorem*{remark*}{Remark}
\newtheorem{definition}{Definition}
\newtheorem*{definition*}{Definition}
\newtheorem*{proposition*}{Proposition}
\newtheorem{property}{Property}
\newtheorem*{property*}{Property}
\crefname{thm}{Theorem}{Theorems}
\Crefname{thm}{Theorem}{Theorems}
\crefname{lem}{Lemma}{Lemmas}
\Crefname{lem}{Lemma}{Lemmas}
\crefname{prop}{Proposition}{Propositions}
\Crefname{prop}{Proposition}{Propositions}
\crefname{definition}{Definition}{Definitions}
\Crefname{definition}{Definition}{Definitions}
\crefname{property}{Property}{Properties}
\Crefname{property}{Property}{Properties}
\newcolumntype{L}{>{\RaggedRight\arraybackslash}X}
\newcommand{\Z}{\mathbb{Z}}   
\newcommand{\R}{\mathbb{R}}   
\newcommand{\N}{\mathbb{N}}   
\newcommand{\e}{\varepsilon}   
\newcommand{\sub}{\subseteq}   
\DeclareMathOperator*{\argmax}{arg\,max}
\newcommand{\comp}{\circ}
\newcommand{\set}[1]{\{#1\}}              
\newcommand{\vardot}{\,\cdot\,}           
\definecolor{tabgreen}{HTML}{59a14f}
\newcommand{\convcp}{\textsc{ConvCNP}}
\newcommand{\anp}{\textsc{AttnCNP}}
\titlespacing{\section}{0pt}{.5\parskip}{.25\parskip}
\newcommandx{\wpbnote}[2][1=]{\todo[linecolor=red,backgroundcolor=red!25,bordercolor=red,#1]{WPB: #2}}
\newcommandx{\jrrnote}[2][1=]{\todo[linecolor=blue,backgroundcolor=blue!25,bordercolor=blue,#1]{JRR: #2}}
\newcommandx{\aykfnote}[2][1=]{\todo[linecolor=purple,backgroundcolor=purple!25,bordercolor=purple,#1]{AYKF: #2}}
\newcommandx{\jgnote}[2][1=]{\todo[linecolor=green,backgroundcolor=green!25,bordercolor=green,#1]{JG: #2}}
\newcommandx{\retnote}[2][1=]{\todo[linecolor=yellow,backgroundcolor=yellow!25,bordercolor=yellow,#1]{RET: #2}}
\newcommandx{\ydnote}[2][1=]{\todo[linecolor=orange,backgroundcolor=orange!25,bordercolor=orange,#1]{YD: #2}}
\newcommandx{\thiswillnotshow}[2][1=]{\todo[disable,#1]{#2}}
\renewcommand{\paragraph}[1]{\textbf{#1}}
\newcommand{\emdash}{ --- }
\title{Convolutional Conditional Neural Processes} 
\author{%
    Jonathan Gordon\thanks{Authors contributed equally. Complete description of author contributions in \cref{app:author_contributions}} \\
    University of Cambridge \\
    \texttt{jg801@cam.ac.uk} \\[-0.5em]
    \And
    Wessel P.~Bruinsma\footnotemark[1] \\
    University of Cambridge \\
    Invenia Labs \\
    \texttt{wpb23@cam.ac.uk} \\[-0.5em]
    \And
    Andrew Y.~K.~Foong \\
    University of Cambridge \\
    \texttt{ykf21@cam.ac.uk} \\[-0.5em]
    \And
    James Requeima \\
    University of Cambridge \\
    Invenia Labs \\
    \texttt{jrr41@cam.ac.uk} \\[-2em]
    \And
    Yann Dubois \\
    University of Cambridge \\
    \texttt{yanndubois96@gmail.com} \\[-2em]
    \And
    Richard E.~Turner \\
    University of Cambridge \\
    Microsoft Research \\
    \texttt{ret26@cam.ac.uk} \\[-2em]
}
\newcommand{\conditionaltextstyle}{\textstyle}
\begin{document}

\maketitle

\begin{abstract}
    We introduce the Convolutional Conditional Neural Process (\convcp{}), a new member of the Neural Process family that models \textit{translation equivariance} in the data. Translation equivariance is an important inductive bias for many learning problems including time series modelling, spatial data, and images.
    The model embeds data sets into an infinite-dimensional function space as opposed to a finite-dimensional vector space.
    To formalize this notion, we extend the theory of neural representations of sets to include \textit{functional representations}, and demonstrate that any translation-equivariant embedding can be represented using a \textit{convolutional deep set}.
    We evaluate \convcp{}s in several settings, demonstrating that they achieve state-of-the-art performance compared to existing NPs.
    We demonstrate that building in translation equivariance enables zero-shot generalization to challenging, out-of-domain tasks. 
\end{abstract}

\section{Introduction}
\label{sec:introduction}

Neural Processes \citep[NPs;][]{garnelo2018neural,garnelo2018conditional} are a rich class of models that define a conditional distribution $p(\vy| \vx, Z, \vtheta)$ over output variables $\vy$ given input variables $\vx$, parameters $\vtheta$, and a set of observed data points in a \emph{context set} $Z = \{\vx_m, \vy_m \}_{m=1}^M $.
A key component of NPs is the embedding of context sets $Z$ into a representation space through an encoder $Z \mapsto E(Z)$, which is achieved using a \textsc{DeepSets} function approximator \citep{zaheer2017deep}.
This simple model specification allows NPs to be used for
\begin{inlinelist}
    \item meta-learning \citep{thrun2012learning, schmidhuber1987evolutionary}, since predictions can be generated on the fly from new context sets at test time; and
    \item multi-task or transfer learning \citep{requeima2019fast}, since they provide a natural way of sharing information between data sets.
\end{inlinelist}
Moreover, conditional NPs \citep[CNPs;][]{garnelo2018conditional}, a deterministic variant of NPs, can be trained in a particularly simple way with maximum likelihood learning of the parameters $\vtheta$, which mimics how the system is used at test time, leading to strong performance \citep{gordon2018meta}.

Natural application areas of NPs include time series, spatial data, and images with missing values. Consequently, such domains have been used extensively to benchmark current NPs \citep{garnelo2018conditional, garnelo2018neural, kim2018attentive}. Often, ideal solutions to prediction problems in such domains should be translation equivariant: if the data are translated in time or space, then the predictions should be translated correspondingly \citep{kondor2018generalization, cohen2016group}. This relates to the notion of stationarity.
As such, NPs would ideally have translation equivariance built directly into the modelling assumptions as an inductive bias. Unfortunately, current NP models must learn this structure from the data set instead, which is sample and parameter inefficient as well as impacting the ability of the models to generalize. 

The goal of this paper is to build translation equivariance into NPs. Famously, convolutional neural networks (CNNs) added translation equivariance to standard multilayer perceptrons \citep{lecun1998gradient,cohen2016group}. However, it is not straightforward to generalize NPs in an analogous way:
\begin{inlinelist}
    \item CNNs require data to live ``on the grid'' (e.g.~image pixels form a regularly spaced grid), while many of the above domains have data that live ``off the grid'' (e.g.\ time series data may be observed irregularly at any time $t \in \R$). 
    \item NPs operate on partially observed context sets whereas CNNs typically do not.
    \item NPs rely on embedding sets into a finite-dimensional vector space for which the notion of equivariance with respect to input translations is not natural, as we detail in \cref{sec:conv_deep_sets}.
\end{inlinelist}
In this work, we introduce the \convcp{}, a new member of the NP family that accounts for translation equivariance.\footnote{Source code available at \url{https://github.com/cambridge-mlg/convcnp}.}
This is achieved by extending the theory of learning on sets to include \textit{functional representations}, which in turn can be used to express any translation-equivariant NP model. Our key contributions can be summarized as follows.
\begin{enumerate}[label=(\roman*)]
    \item We provide a representation theorem for translation-equivariant functions on sets, extending a key result of \citet{zaheer2017deep} to \textit{functional embeddings}, including sets of varying size.
    \item We extend the NP family of models to include translation equivariance.
    \item We evaluate the \convcp{} and demonstrate that it exhibits excellent performance on several synthetic and real-world benchmarks. 
\end{enumerate}

\section{Background and Formal Problem Statement}
\label{sec:background}

In this section we introduce the notation and precisely define the problem this paper addresses.

\textbf{Notation.} In the following, let $\mathcal{X} = \R^d$ and $\mathcal{Y} \sub \R^{d'}$ (with $\mathcal{Y}$ compact) be the spaces of inputs and outputs respectively.
To ease notation, we often assume scalar outputs $\mathcal{Y} \sub \R$. 
Define
$\mathcal{Z}_M = (\mathcal{X} \times \mathcal{Y})^M$ as the collection of $M$ input--output pairs,
$\mathcal{Z}_{\le M} = \bigcup_{m=1}^M \mathcal{Z}_m$ as the collection of at most $M$ pairs,
and $\mathcal{Z} = \bigcup_{m=1}^\infty \mathcal{Z}_m$ as the collection of finitely many pairs.
Since we will consider \emph{permutation-invariant} functions on $\mathcal{Z}$ (defined later in \cref{property:sn_invariance}), we may refer to elements of $\mathcal{Z}$ as sets or data sets.
Furthermore, we will use the notation $[n] = \set{1, \ldots, n}$.

\textbf{Conditional Neural Processes (CNPs)}. CNPs model predictive distributions as $p(\vy | \vx, Z) = p(\vy| \Phi(\vx, Z), \vtheta)$,
where $\Phi$ is defined as a composition $\rho \comp E$ of an encoder $E\colon \mathcal{Z} \to \R^e$ mapping into the \emph{embedding space} $\R^e$ and a decoder $\rho\colon \R^e \to C_b(\mathcal{X}, \mathcal{Y})$. Here $E(Z) \in \R^e$ is a \emph{vector representation} of the set $Z$, and $C_b(\mathcal{X}, \mathcal{Y})$ is the space of continuous, bounded functions $\mathcal{X} \to \mathcal{Y}$ endowed with the supremum norm. While NPs \citep{garnelo2018neural} employ latent variables to indirectly specify predictive distributions, in this work we focus on CNP models which do not. 

As noted by \citet{lee2019set, bloem2019probabilistic}, since $E$ is a function on sets, the form of $\Phi$ in CNPs tightly relates to the growing literature on learning and representing functions on sets \citep{zaheer2017deep, qi2017pointnet, wagstaff2019limitations}.
Central to this body of work is the notion that, because the elements of a set have no order, functions on sets are naturally permutation invariant.
Hence, to view functions on $\mathcal{Z}$ as functions on sets, we require such functions to be permutation invariant.
This notion is formalized in \cref{property:sn_invariance}.
\begin{property}[$\sS_n$-invariant and $\sS$-invariant functions]
 \label{property:sn_invariance}
    Let $\sS_n$ be the group of permutations of $n$ symbols for $n \in \N$.
    A function $\Phi$ on $\mathcal{Z}_n$ is called \emph{$\sS_n$-invariant} if
    \begin{equation*}
        \Phi(Z_n) = \Phi(\pi Z_n)
        \quad
        \text{for all $\pi \in \mathbb{S}_n$ and $Z_n \in \mathcal{Z}_n$,}
    \end{equation*}
    where the application of $\pi$ to $Z_n$ is defined as $\pi Z_n = ( (\vx_{\pi(1)}, \vy_{\pi(1)}), \hdots, (\vx_{\pi(n)}, \vy_{\pi(n)}))$.
    A function $\Phi$ on $\mathcal{Z}$ is called \emph{$\sS$-invariant} if the restrictions $\Phi|_{\mathcal{Z}_n}$ are \emph{$\sS_n$-invariant} for all $n$.
\end{property}

\citet{zaheer2017deep} demonstrate that any continuous $\sS_M$-invariant function $f \colon \mathcal{Z}_M \to \R$ has a \textit{sum-decomposition} \citep{wagstaff2019limitations}, i.e.\ a representation of the form $f(Z) = \rho(\sum_{\vz \in Z} \phi(\vz))$ for appropriate $\rho$ and $\phi$ (though this could only be shown for fixed-sized sets). This is indeed the form employed by the NP family for the encoder that embeds sets into a latent representation. 

\paragraph{Translation equivariance.}
The focus of this work is on models that are \emph{translation equivariant}:
if the input locations of the data are translated by an amount $\vtau$, then the predictions should be translated correspondingly.
Translation equivariance for functions operating on sets is formalized in \cref{property:translation_equivariance}.

\begin{property}[Translation equivariant mappings on sets] 
\label{property:translation_equivariance}
    Let $\mathcal{H}$ be an appropriate space of functions on $\mathcal{X}$, 
    and define $T$ and $T'$ as follows:
    \begin{align*}
        \label{eqn:group_actions}
        T&: \mathcal{X} \times \mathcal{Z} \to   \mathcal{Z}, &
        T_\vtau Z &= ((\vx_1 + \vtau, \vy_1), \ldots, (\vx_m + \vtau, \vy_m)), \\
        T'&: \mathcal{X} \times \mathcal{H} \to \mathcal{H}, &
        T'_\vtau h(\vx) &= h(\vx - \vtau).
    \end{align*}
    Then a mapping $\Phi \colon \mathcal{Z} \to \mathcal{H}$ is called \emph{translation equivariant} if $\Phi(T_\vtau Z) = T'_\vtau \Phi(Z)$ for all $\vtau \in \mathcal{X}$ and $Z \in \mathcal{Z}$.
\end{property}
Having formalized the problem, we now describe how to construct CNPs that translation equivariant. 

\section{Convolutional Deep Sets}
\label{sec:conv_deep_sets}

We are interested in translation equivariance (\cref{property:translation_equivariance}) with respect to translations on $\mathcal{X}$.
The NP family encoder maps sets $Z$ to an embedding in a vector space $\R^d$, for which the notion of equivariance with respect to input translations in $\mathcal{X}$ is not well defined.
For example, a function $f$ on $\mathcal{X}$ can be translated by $\vtau \in \mathcal{X}$: $f(\vardot - \vtau)$.
However, for a vector $\vx \in \R^d$, which can be seen as a function $[d] \to \R$, $\vx(i) = x_i$, the translation $\vx(\vardot - \vtau)$ is not well-defined.
To overcome this issue, we enrich the encoder $E \colon \mathcal{Z} \to \mathcal{H}$ to map into a \emph{function space} $\mathcal{H}$ containing functions on $\mathcal{X}$.
Since functions in $\mathcal{H}$ map from $\mathcal{X}$, our notion of translation equivariance (\cref{property:translation_equivariance}) is now also well defined for $E(Z)$.
As we demonstrate below, every translation-equivariant function on sets has a representation in terms of a specific functional embedding.

\begin{definition}[Functional mappings on sets and functional representations of sets]
\label{defn:functional_representations}
    Call a map $E\colon\mathcal{Z} \to \mathcal{H}$ a \emph{functional mapping on sets} if it maps from sets $\mathcal{Z}$ to an appropriate space of functions $\mathcal{H}$.
    Furthermore, call $E(Z)$ the \emph{functional representation} of the set $Z$.
\end{definition}

Considering functional representations of sets leads to the key result of this work, which can be summarized as follows.
For $\mathcal{Z}'\sub \mathcal{Z}$ appropriate, a continuous function $\Phi \colon \mathcal{Z}'  \to C_b(\mathcal{X}, \mathcal{Y})$ satisfies \cref{property:sn_invariance,property:translation_equivariance} if and only if it has a representation of the form
\begin{equation} \conditionaltextstyle
\label{eqn:phi_form}
    \Phi(Z) = \rho\left( E(Z) \right), \quad
    E(Z) = {\conditionaltextstyle\sum_{(\vx, \vy) \in Z}} \phi(\vy) \psi(\vardot - \vx) \in \mathcal{H},
\end{equation}
for some continuous and translation-equivariant $\rho\colon \mathcal{H} \to C_b(\mathcal{X}, \mathcal{Y})$, and appropriate $\phi$ and $\psi$.
Note that $\rho$ is a map between function spaces.
We also remark that continuity of $\Phi$ is not in the usual sense; we return to this below.

\cref{eqn:phi_form} defines the encoder used by our proposed model, the \convcp{}. In \cref{sec:convcps_theory}, we present our theoretical results in more detail. In particular, \cref{thm:representation} establishes equivalence between any function satisfying \cref{property:sn_invariance,property:translation_equivariance} and the representational form in \cref{eqn:phi_form}. In doing so, we provide an extension of the key result of \citet{zaheer2017deep} to functional representations on sets, and show that it can naturally be extended to handle varying-size sets. The practical implementation of \convcp{}s\emdash{}the design of $\rho$, $\phi$, and $\psi$\emdash{}is informed by our results in \cref{sec:convcps_theory} (as well as the proofs, provided in \cref{app:proofs}), and is discussed for domains of interest in \cref{sec:convcps_practice}.

\subsection{Representations of Translation Equivariant Functions on Sets}
\label{sec:convcps_theory}

In this section we establish the theoretical foundation of the \convcp{}. We begin by stating a definition that is used in our main result.
We denote $[m] = \set{1, \ldots, m}$. 

\begin{definition}[Multiplicity]
    A collection $\mathcal{Z}' \sub \mathcal{Z}$ is said to have \emph{multiplicity $K$} if, for every set $Z \in \mathcal{Z}'$, every $\vx$ occurs at most $K$ times:
    \[
        \operatorname{mult} \mathcal{Z}'
        := \sup\,\set{
                \sup\,\set{
                \underbracket{
                    |\set{i \in [m] : \vx_i = \hat \vx}|
                    : \hat \vx = \vx_1, \ldots, \vx_m
                }_{\smash{\text{number of times every $\vx$ occurs}}}
            }
            : (\vx_i, y_i)_{i=1}^m \in \mathcal{Z}'
        } = K.
    \]
\end{definition}
For example, in the case of real-world data like time series and images, we often observe only one (possibly multi-dimensional) observation per input location, which corresponds to multiplicity one. We are now ready to state our key theorem.

\begin{restatable}{thm}{representationtheorem}
\label{thm:representation}
    Consider an appropriate\footnote{
        For every $m \in [M]$, $\mathcal{Z}'_{\le M} \cap \mathcal{Z}_m$ must be topologically closed and closed under permutations and translations.
    } collection $\mathcal{Z}'_{\le M} \subseteq \mathcal{Z}_{\le M}$ with multiplicity $K$.
    Then a function $\Phi\colon \mathcal{Z}'_{\le M} \to C_b(\mathcal{X}, \mathcal{Y})$ is continuous\footnote{
        For every $m \in [M]$, the restriction $\Phi|_{\mathcal{Z}'_{\le M} \cap \mathcal{Z}_m}$ is continuous.
    }, permutation invariant (\cref{property:sn_invariance}), and translation equivariant (\cref{property:translation_equivariance}) if and only if it has a representation of the form 
    \[\conditionaltextstyle
        \Phi(Z) = \rho\left(
            E(Z)
        \right), \quad
        E((\vx_1, y_1), \ldots, (\vx_m, y_m)) = \sum_{i=1}^m \phi(y_i) \psi(\vardot - \vx_i)
    \]
    for some continuous and translation-equivariant $\rho\colon \mathcal{H} \to C_b(\mathcal{X}, \mathcal{Y})$ and some continuous $\phi\colon\mathcal{Y} \to \R^{K+1}$ and $\psi\colon\mathcal{X} \to \R$, where $\mathcal{H}$ is an appropriate space of functions that includes the image of $E$.
    We call a function $\Phi$ of the above form \textsc{ConvDeepSet}.
\end{restatable}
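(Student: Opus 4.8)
The plan is to prove the two implications separately, with the forward direction (representation $\Rightarrow$ properties) being routine and the converse carrying the substance. For the easy direction, assume $\Phi = \rho \comp E$ with $E(Z) = \sum_i \phi(y_i)\psi(\vardot - \vx_i)$. Permutation invariance is immediate, since $E$ is a sum over the elements of $Z$ and hence independent of their ordering. Translation equivariance follows from $E(T_\vtau Z) = \sum_i \phi(y_i)\psi(\vardot - \vx_i - \vtau) = (E(Z))(\vardot - \vtau) = T'_\vtau E(Z)$, whence $\Phi(T_\vtau Z) = \rho(T'_\vtau E(Z)) = T'_\vtau \rho(E(Z)) = T'_\vtau \Phi(Z)$ by the assumed equivariance of $\rho$. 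Continuity of $\Phi$ on each stratum $\mathcal{Z}'_{\le M}\cap\mathcal{Z}_m$ then follows by composing continuity of $\phi$, $\psi$, and $\rho$, once the topology on sets is fixed (addressed below).

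For the converse the key idea is to engineer $\phi$ and $\psi$ so that $E$ is injective on $\mathcal{Z}'_{\le M}$; then $\rho := \Phi \comp E^{-1}$ is forced on the image of $E$, and it only remains to verify its equivariance and continuity. I would take $\psi$ to be a strictly positive-definite kernel (e.g.\ a Gaussian RBF) so that any finite collection of distinct translates $\{\psi(\vardot - \vx^{(l)})\}_l$ is linearly independent; this lets one read off, from the function $E(Z)$, the active input locations together with the summed feature vector at each location. To disentangle the multiset of outputs sharing a common location, I would exploit the multiplicity bound: at any location at most $K$ outputs coincide, and the choice $\phi(y) = (1, y, \ldots, y^K)$ sends the summed feature to $(k, \sum_j y_j, \ldots, \sum_j y_j^K)$, i.e.\ the count together with the power sums up to order $K$. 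Newton's identities show these determine a multiset of size $\le K$ uniquely, which is exactly why $\phi$ must map into $\R^{K+1}$; this is the point at which the multiplicity hypothesis\emdash{}and the extension beyond the fixed-size result of \citet{zaheer2017deep}\emdash{}enters. Composing these two recoveries shows $E$ is injective.

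Given injectivity, define $\rho$ on $\text{im}(E)$ by $\rho(E(Z)) = \Phi(Z)$, which is well defined. Translation equivariance of $\rho$ transfers directly from that of $\Phi$ and $E$: since $E(T_\vtau Z) = T'_\vtau E(Z)$, we get $\rho(T'_\vtau E(Z)) = \Phi(T_\vtau Z) = T'_\vtau \Phi(Z) = T'_\vtau \rho(E(Z))$. One then takes $\mathcal{H}$ to be an appropriate space containing $\text{im}(E)$ and extends $\rho$ continuously and equivariantly to $\mathcal{H}$ (e.g.\ by a Tietze-type extension respecting the group action, or by working on the closure of the image).

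The main obstacle is continuity of $\rho$, which requires that $E^{-1}$ be continuous, i.e.\ that $E$ be a homeomorphism onto its image. I would argue stratum by stratum: on each closed piece $\mathcal{Z}'_{\le M} \cap \mathcal{Z}_m$, equipped with the quotient (Hausdorff-type) metric induced by permutations, $E$ is a continuous injection, and I would invoke the principle that a continuous bijection from a compact space to a Hausdorff space is a homeomorphism. The difficulty is that $\mathcal{X} = \R^d$ is not compact, so the stratum itself is not compact; here I would use translation equivariance to reduce to a compact configuration (anchoring, say, one input location), together with compactness of $\mathcal{Y}$ and the closedness assumptions on $\mathcal{Z}'$, to recover enough compactness for the homeomorphism argument. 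Making the topology on varying-size sets precise and verifying continuity of $E^{-1}$ across this stratified structure is the delicate part, and is also why the theorem's notion of continuity is the stratified one flagged in the statement rather than continuity on all of $\mathcal{Z}'_{\le M}$ simultaneously.
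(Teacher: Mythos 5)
Your forward direction, your injectivity argument (strictly positive-definite/interpolating kernel to recover the locations and their counts, then the sum-of-powers map with the multiplicity bound $K$ to recover the multiset of outputs at each location), your definition $\rho = \Phi \comp E^{-1}$ with equivariance transferring because $\psi$ is stationary, and your stratum-by-stratum treatment of varying set sizes all match the paper's proof closely. The paper even confirms your Tietze-style extension remark, citing Dugundji's theorem.

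The genuine gap is in the one step you correctly flag as the main obstacle: continuity of $E^{-1}$ on a non-compact stratum. Your proposed fix\emdash{}``use translation equivariance to reduce to a compact configuration by anchoring one input location''\emdash{}does not work. Anchoring controls the configuration only modulo a global translation; it does nothing to bound the \emph{diameter} of the configuration, so a sequence in which one input location escapes to infinity relative to the others still defeats the compactness reduction. Worse, translating each $Z^{(n)}$ by $-\vx_1^{(n)}$ translates $E(Z^{(n)})$ by an amount that need not converge, so the hypothesis $E(Z^{(n)}) \to f$ is not preserved by the anchoring, and there is no fixed compact space on which to invoke the compact-to-Hausdorff homeomorphism principle. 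The paper closes this gap with a different and essential idea: the \emph{density channel}. Because $\phi_0 \equiv 1$, the first component of $E(Z)$ is $\sum_{i=1}^m \psi(\vardot - \vx_i)$, a sum of bumps each of height at least $\sigma^2 = \psi(\vx,\vx)$ at its own centre (using pointwise non-negativity of $\psi$). The reproducing property gives $|g(\vx)| \le \sigma \|g\|_{\mathcal{H}}$, so RKHS convergence of the density channels implies uniform pointwise convergence; since the limit's density channel is itself a finite sum of bumps and hence falls below $\tfrac{2}{3}\sigma^2$ outside a bounded region (using $\psi(\vx,\vx') \to 0$ as $\|\vx\|\to\infty$), no input location along the sequence can leave a fixed bounded set. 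This forces the preimages of any convergent sequence into a genuinely compact subset $([-J,J]^d \times \mathcal{Y})^m \cap \mathcal{Z}'_m$, after which your compact-to-Hausdorff argument applies verbatim. Without this (or an equivalent boundedness argument), your proof of continuity of $\rho$ is incomplete; note also that it is exactly this step that explains \emph{why} the constant component of $\phi$ and the conditions $\psi \ge 0$, $\psi(\vx,\vx) = \sigma^2 > 0$, $\psi \to 0$ at infinity appear in the hypotheses, which your choice of ``any strictly positive-definite kernel'' happens to satisfy for the Gaussian but does not isolate as the load-bearing assumptions.
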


The proof of \cref{thm:representation} is provided in \cref{app:proofs}.
We here discuss several key points from the proof that have practical implications and provide insights for the design of \convcp{}s:
\begin{inlinelist}
    \item For the construction of $\rho$ and $E$, $\psi$ is set to a flexible positive-definite kernel associated with a Reproducing Kernel Hilbert Space (RKHS; \citet{aronszajn1950theory}), which results in desirable properties for $E$.
    \item Using the work by \citet{zaheer2017deep}, we set $\phi(y) = (y^0, y^1, \cdots, y^K)$ to be the powers of $y$ up to order $K$.
    \item \cref{thm:representation} requires $\rho$ to be a powerful function approximator of continuous, translation-equivariant maps between functions. 
\end{inlinelist}
In \cref{sec:convcps_practice}, we discuss how these theoretical results inform our implementations of \convcp{}s.

\cref{thm:representation} extends the result of \citet{zaheer2017deep} discussed in \cref{sec:background} by embedding the set into an infinite-dimensional space\emdash{}the RKHS\emdash{}instead of a finite-dimensional space.
Beyond allowing the model to exhibit translation equivariance, the RKHS formalism allows us to naturally deal with finite sets of varying sizes, which turns out to be challenging with finite-dimensional embeddings.
Furthermore, our formalism requires $\phi(y)=(y^0,y^1,y^2,\ldots,y^{K})$ to expand up to order no more than the \emph{multiplicity} of the sets $K$;
if $K$ is bounded, then our results hold for sets up to any arbitrarily large finite size $M$, while fixing $\phi$ to be only $(K+1)$-dimensional.

\section{Convolutional Conditional Neural Processes}
\label{sec:convcps_practice}
\begin{figure}[t]
    \begin{subfigure}{\linewidth}
        \centering
        \centerline{\includegraphics[width=1.05\linewidth]{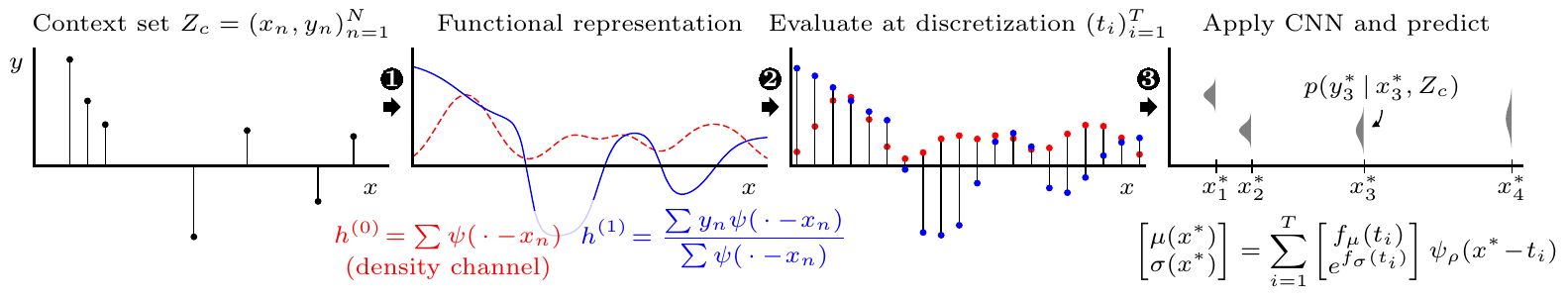}}
        \vspace{-1em}
        \caption{}
        \label{fig:convps_in_practice}
    \end{subfigure} \\[0.25em]
    \newcommand{\vspacing}{$\vphantom{\sum_{i}^T}$}
    \makebox[0.49\linewidth][l]{\hspace{12pt}\begin{subfigure}{0.49\linewidth}
        \small
        \begin{algorithm}[H]
            \SetKwInOut{Input}{require\!}
            \SetKwInOut{Output}{return}
            \Input{ \vspacing$\rho = (\text{CNN}, \psi_\rho)$, $\psi$, and density $\gamma$}
            \Input{ \vspacing{}context $(\vx_n, y_n)_{n=1}^N$, target $(\vx^\ast_m)_{m=1}^M$}
            \vspacing\Begin{
                \vspacing$\text{lower, upper} \leftarrow \text{range}\!\left( (\vx_n)_{n=1}^N \!\cup\! (\vx^\ast_m)_{m=1}^M \right)$\\
                \vspacing$(\vt_i)_{i=1}^T \leftarrow \text{uniform\_grid(lower, upper} ; \gamma)$\\
                \vspacing$\vh_i \leftarrow \sum_{n=1}^N \begin{bmatrix} 1 & y_n \end{bmatrix}^\top \psi(\vt_i - \vx_n)$\\
                \vspacing$\vh^{(1)}_i \leftarrow \vh^{(1)}_i / \vh^{(0)}_i$\\
                \vspacing$(f_\mu(\vt_i), f_\sigma(\vt_i))_{i=1}^T \leftarrow \textsc{CNN}((\vt_i, \vh_i)_{i=1}^T)$\\
                \vspacing$\vmu_m \leftarrow \sum_{i=1}^T f_\mu(\vt_i) \psi_\rho(\vx^\ast_m - \vt_i)$\\
                \vspacing$\vsigma_m \leftarrow \sum_{i=1}^T \text{pos}(f_\sigma(\vt_i)) \psi_\rho(\vx^\ast_m - \vt_i)$\\
                \vspacing\KwRet{$(\vmu_m, \vsigma_m)_{m=1}^M$}
            }
        \end{algorithm}
        \vspace{-1em}
        \caption{}
        \label{alg:forward_pass_uncountable}
    \end{subfigure}}
    \hfill
    \makebox[0.49\linewidth][l]{\hspace{15pt}\begin{subfigure}{0.49\linewidth}
        \small
        \begin{algorithm}[H]
            \SetKwInOut{Input}{require\!}
            \SetKwInOut{Output}{return}
            \Input{ \vspacing$\rho = \text{CNN}$ and $E = \textsc{conv}_\vtheta$ }
            \Input{ \vspacing{}image $\mathrm{I}$, context $\mathrm{M}_c$, and target mask $\mathrm{M}_t$}
            \vspacing\Begin{
                \vspacing// We discretize at the pixel locations. \\
                \vspacing$\mathrm{Z}_c \leftarrow \mathrm{M}_c \odot \mathrm{I}$ \hfill // Extract context set.\\
                \vspacing$\vh \leftarrow \textsc{conv}_\vtheta ([\mathrm{M}_c, \mathrm{Z}_c]^\top ) $\\
                \vspacing$\vh^{(1:C)} \leftarrow \vh^{(1:C)} / \vh^{(0)} $\\
                \vspacing$f_t \leftarrow \mathrm{M}_t \odot \text{CNN}(\vh)$\\
                \vspacing$\vmu \leftarrow f_t^{(1:C)}$ \\
                \vspacing$\vsigma \leftarrow \text{pos}( f_t^{(C+1:2C)})$ 
                \\
                \vspacing\KwRet{$(\vmu, \vsigma)$}
            }
        \end{algorithm}
        \vspace{-1em}
        \caption{}
        \label{alg:forward_pass_on-the-grid}
    \end{subfigure}}
    \setlength{\belowcaptionskip}{-10pt}
    \vspace{-0.5em}
    \caption{
        (a) Illustration of the \convcp{} forward pass in the off-the-grid case and pseudo-code for
        (b) off-the-grid and 
        (c) on-the-grid data.
        The function $\operatorname{pos}\colon \R \to (0,\infty)$ is used to enforce positivity.
    }
\end{figure}

In this section we discuss the architectures and implementation details for \convcp{}s.
Similar to NPs, \convcp{}s model the conditional distribution as 
\begin{equation} \label{eq:likelihood}
    \conditionaltextstyle
    p(\mY | \mX, Z)
    = \prod_{n=1}^N p(\vy_n | \Phi_\vtheta(Z)(\vx_n))
    = \prod_{n=1}^N \mathcal{N}(\vy_n; \vmu_n, \vSigma_n)
    \text{ with }
    (\vmu_n, \vSigma_n) = \Phi_\vtheta(Z)(\vx_n),
\end{equation}
where $Z$ is the observed data and $\Phi$ a \textsc{ConvDeepSet}.
The key considerations are the design of $\rho$, $\phi$, and $\psi$ for $\Phi$. We provide separate models for data that lie on-the-grid and data that lie off-the-grid. 

\textbf{Form of $\phi$.}
The applications considered in this work have a single (potentially multi-dimensional) output per input location, so the multiplicity of $\mathcal{Z}$ is one (i.e., $K=1$).
It then suffices to let $\phi$ be a power series of order one, which is equivalent to appending a constant to $\vy$ in all data sets, i.e.~$\phi(\vy) = [1\; \vy]^\top$. 
The first output $\phi_1$ thus provides the model with information regarding where data has been observed, which is necessary to distinguish between no observed datapoint at $\vx$ and a datapoint at $\vx$ with $\vy= \mathbf{0}$. 
Denoting the functional representation as $\vh$, we can think of the first channel $\vh^{(0)}$ as a ``density channel''. 
We found it helpful to divide the remaining channels $\vh^{(1:)}$ by $\vh^{(0)}$ (\cref{alg:forward_pass_uncountable}, line 5), as this improved performance when there is large variation in the density of input locations. 
In the image processing literature, this is known as \textit{normalized convolution} \citep{knutsson1993normalized}. The normalization operation can be reversed by $\rho$ and is therefore not restrictive.

\textbf{\convcp{}s for off-the-grid data.}
Having specified $\phi$, it remains to specify the form of $\psi$ and $\rho$. Our proof of \cref{thm:representation} suggests that $\psi$ should be a stationary, non-negative, positive-definite kernel. The exponentiated-quadratic (EQ) kernel with a learnable length scale parameter is a natural choice. This kernel is multiplied by $\phi$ to form the functional representation $E(Z)$ (\cref{alg:forward_pass_uncountable}, line 4; and \cref{fig:convps_in_practice}, arrow 1).

Next, \cref{thm:representation} suggests that $\rho$ should be a continuous, translation-equivariant map between function spaces. \citet{kondor2018generalization} show that, in deep learning, any translation-equivariant model has a representation as a CNN. However, CNNs operate on discrete (on-the-grid) input spaces and produce discrete outputs. In order to approximate $\rho$ with a CNN, we discretize the input of $\rho$, apply the CNN, and finally transform the CNN output back to a continuous function $\mathcal{X} \to \mathcal{Y}$. To do this, for each context and test set, we space points $(\vt_i)_{i=1}^n \sub \mathcal{X}$ on a uniform grid (at a pre-specified density) over a hyper-cube that covers both the context and target inputs. We then evaluate $(E(Z)(\vt_i))_{i=1}^n$ (\cref{alg:forward_pass_uncountable}, lines 2--3; \cref{fig:convps_in_practice}, arrow 2). This discretized representation of $E(Z)$ is then passed through a CNN (\cref{alg:forward_pass_uncountable}, line 6; \cref{fig:convps_in_practice}, arrow 3).

To map the output of the CNN back to a continuous function $\mathcal{X} \to \mathcal{Y}$, we use the CNN outputs as weights for evenly-spaced basis functions (again employing the EQ kernel), which we denote by $\psi_{\rho}$ (\cref{alg:forward_pass_uncountable}, lines 7--8; \cref{fig:convps_in_practice}, arrow 3). The resulting approximation to $\rho$ is not perfectly translation equivariant, but will be approximately so for length scales larger than the spacing of $(E(Z)(\vt_i))_{i=1}^n$. The resulting continuous functions are then used to generate the (Gaussian) predictive mean and variance at any input. This, in turn, can be used to evaluate the log-likelihood. 

\paragraph{\convcp{} for on-the-grid data.}
While \convcp{} is readily applicable to many settings where data live on a grid, in this work we focus on the image setting. 
As such, the following description uses the image completion task as an example, which is often used to benchmark NPs \citep{garnelo2018conditional, kim2018attentive}.
Compared to the off-the-grid case, the implementation becomes simpler as we can choose the discretization $(\vt_i)_{i=1}^n$ to be the pixel locations. 
%
%

Let $ \mathrm{I} \in \R^{H \times W \times C}$ be an image\emdash{}$H, W, C$ denote the height, width, and number of channels, respectively\emdash{}%
and let $\mathrm{M}_c$ be the context mask, which is such that $[\mathrm{M}_c]_{i,j} = 1$ if pixel location $(i,j)$ is in the context set, and $0$ otherwise.
To implement $\phi$, we select all context points, $\mathrm{Z}_c \coloneqq \mathrm{M}_c \odot \mathrm{I}$, and prepend the context mask: $\phi = [\mathrm{M}_c, \mathrm{Z}_c]^\top$ (\cref{alg:forward_pass_on-the-grid}, line 4). 
%
%

Next, we apply a convolution to the context mask to form the density channel: $\vh^{(0)} = \textsc{conv}_{\vtheta}(\mathrm{M}_c)$ (\cref{alg:forward_pass_on-the-grid}, line 4).
To all other channels, we apply a normalized convolution: $\vh^{(1:C)} = \textsc{conv}_{\vtheta}(\vy)/\vh^{(0)}$ (\cref{alg:forward_pass_on-the-grid}, line 5), where the division is element-wise.
The filter of the convolution is analogous to $\psi$, which means that $\vh$ is the functional representation, with the convolution performing the role of $E$ (the summation in \cref{alg:forward_pass_uncountable}, line 4).
Although the theory suggests using a non-negative, positive-definite kernel, we did not find significant empirical differences between an EQ kernel and using a fully trainable kernel restricted to positive values to enforce non-negativity (see \cref{app:imgs_ablation,app:first_filter} for details).
%
%

Lastly, we describe the on-the-grid version of $\rho(\cdot)$, which consists of two stages. 
First, we apply a $\textsc{CNN}$ to $E(Z)$ (\cref{alg:forward_pass_on-the-grid}, line 6). 
Second, we apply a shared, pointwise $\mathrm{MLP}$
that maps the output of the CNN at each pixel location in the target set to $\R^{2C}$, where we absorb $\mathrm{MLP}$ into the CNN ($\mathrm{MLP}$ can be viewed as an $1\!\!\times\!\!1$ convolution).
The first $C$ outputs are the means of a Gaussian predictive distribution and the second $C$ the standard deviations, which then pass through a positivity-enforcing function (\cref{alg:forward_pass_on-the-grid}, line 7--8). 
%
To summarise, the on-the-grid algorithm is given by
\begin{equation}
    (\vmu, \text{pos}^{-1}(\vsigma)) = \underbracket{\vphantom{(}\mathrm{CNN}}_{\rho}(
        \overbracket{
            [
                \underbracket{\textsc{conv}(\mathrm{M}_c)}_{\text{density channel}} ;
                \textsc{conv}(\mathrm{M}_c \odot \mathrm{I}) /
                    \underbracket{\vphantom{(}\textsc{conv}}_{
                        \mathclap{\text{multiplies by $\psi$ and sums}}
                    }
                    (\mathrm{M}_c)
            ]^\top
        }^{E(\text{context set})}
    ),
\end{equation}
where $(\vmu, \vsigma)$ are the image mean and standard deviation, $\rho$ is implemented with $\mathrm{CNN}$, and $E$ is implemented with the mask $\mathrm{M}_c$ and convolution $\textsc{conv}$.

\textbf{Training.} Denoting the data set $D = \{Z_n\}_{n=1}^N \sub \mathcal{Z}$ and the parameters by $\vtheta$, maximum-likelihood training involves \citep{garnelo2018conditional,garnelo2018neural}
\begin{equation}\conditionaltextstyle
\label{eqn:meta-learning_objective}
    \vtheta^\ast = \argmax_{\vtheta \in \Theta} \sum_{n=1}^N \sum_{(\vx, \vy) \in Z_{n, t}}\log
    p(\vy \,|\,
    \Phi_{\vtheta}(Z_{n,c})(\vx)
    ),
\end{equation}
where we have split $Z_n$ into context ($Z_{n, c}$) and target ($Z_{n, t})$ sets. This is standard practice in the NP \citep{garnelo2018conditional, garnelo2018neural} and meta-learning settings \citep{finn17model, gordon2018meta} and relates to neural auto-regressive models \citep{requeima2019fast}.
Note that the context set and target set are disjoint ($Z_{n, c} \cap Z_{n, t} = \emptyset$), which differs from the protocol for the NP \citep{garnelo2018conditional}.
Practically, stochastic gradient descent methods \citep{bottou2010large} can be used for optimization.

\section{Experiments and Results}
\label{sec:experiments}

We evaluate the performance of \convcp{}s in both on-the-grid and off-the-grid settings focusing on two central questions:
\begin{inlinelist}
    \item Do translation-equivariant models improve performance in appropriate domains? 
    \item Can translation equivariance enable \convcp{}s to generalize to settings outside of those encountered during training?
\end{inlinelist}
We use several off-the-grid data-sets which are irregularly sampled time series ($\mathcal{X} = \R$), comparing to Gaussian processes (GPs; \citet{williams2006gaussian}) and \anp (which is identical to the ANP \citep{kim2018attentive}, but without the latent path in the encoder), the best performing member of the CNP family. 
We then evaluate on several on-the-grid image data sets ($\mathcal{X} = \Z^2$). In all settings we demonstrate substantial improvements over existing neural process models. 
For the CNN component of our model, we propose a small and large architecture for each experiment (in the experimental sections named \convcp{} and \convcp{}XL, respectively). We note that these architectures are different for off-the-grid and on-the-grid experiments, with full details regarding the architectures given in the appendices.

\subsection{Synthetic 1D Experiments}
\label{sec:synth_gp}

First we consider synthetic regression problems. At each iteration, a function is sampled, followed by context and target sets.  Beyond EQ-kernel GPs (as proposed in \citet{garnelo2018conditional,kim2018attentive}), we consider more complex data arising from Matern--$\tfrac52$ and weakly-periodic kernels, as well as a challenging, non-Gaussian sawtooth process with random shift and frequency (see \cref{fig:matern_sawtooth}, for example). \convcp{} is compared to CNP \citep{garnelo2018conditional} and \anp{}. Training and testing procedures are fixed across all models. Full details on models, data generation, and training procedures are provided in \cref{app:1d_experiments}.

\begin{table}[!htb]
\centering
\caption{Log-likelihood from synthetic 1-dimensional experiments.}
\label{table:synthetic_results}
\begin{tabular}{@{}lrrrrr@{}}
\toprule
Model         & 
Params        & 
EQ            & 
Weak Periodic & 
Matern        & 
Sawtooth      \\
\midrule
CNP                  & 
   66818             & 
  -0.86 $\pm$ 3e-3   & 
  -1.23 $\pm$ 2e-3   & 
  -0.95 $\pm$ 1e-3   & 
  -0.16 $\pm$ 1e-5   \\
\anp                 & 
   149250            & 
   0.72 $\pm$  4e-3  & 
  -1.20 $\pm$  2e-3  &  
   0.10 $\pm$  2e-3  & 
  -0.16 $\pm$  2e-3  \\
\convcp{}            & 
   6537              & 
   0.70 $\pm$  5e-3  & 
  -0.92 $\pm$  2e-3  &  
   0.32 $\pm$  4e-3  &  
   1.43 $\pm$  4e-3  \\
\convcp{}XL          & 
   50617                       & 
   \textbf{1.06 $\pm$ 4e-3}    & 
   \textbf{-0.65 $\pm$ 2e-3}   & 
   \textbf{0.53 $\pm$ 4e-3}    & 
   \textbf{1.94 $\pm$ 1e-3}   \\ 
\bottomrule
\end{tabular}
\end{table}

\begin{figure}[htb]
\rotatebox{90}{\hspace{-2mm}    \tiny \anp}
\begin{subfigure}{0.24\textwidth}
  \centering
  \includegraphics[width=\linewidth]{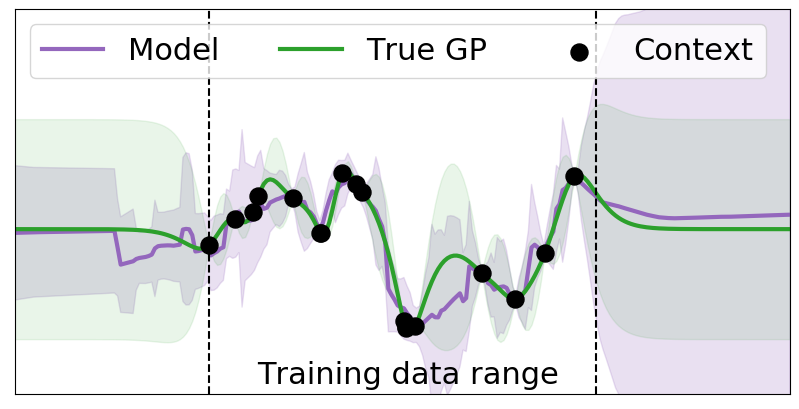}
  \label{fig:matern_anp_extrapolation_no_data}
\end{subfigure}\hfil
\begin{subfigure}{0.24\textwidth}
  \centering
  \includegraphics[width=\linewidth]{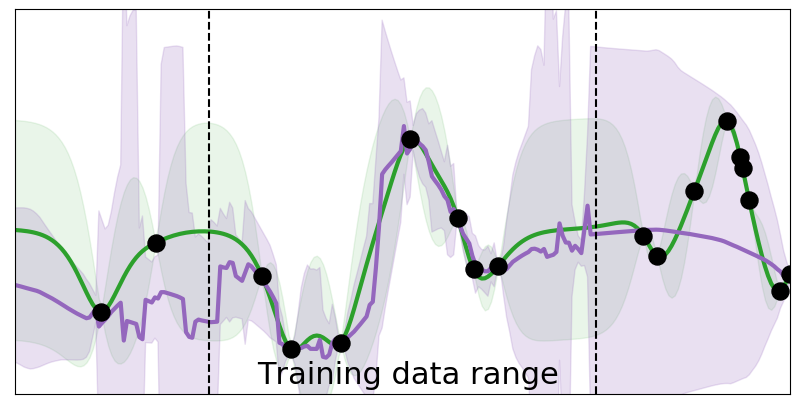}
  \label{fig:matern_anp_extrapolation}
\end{subfigure}
\begin{subfigure}{0.24\textwidth}
  \includegraphics[width=\linewidth]{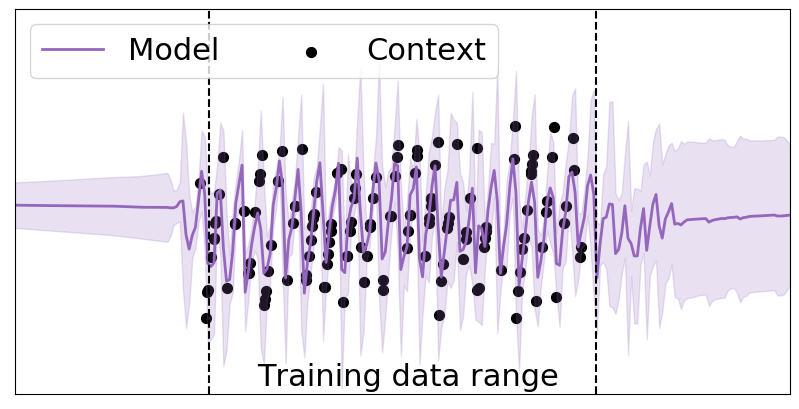}
  \label{fig:sawtooth_anp_extrapolation_no_data}
\end{subfigure}\hfil
\begin{subfigure}{0.24\textwidth}
  \includegraphics[width=\linewidth]{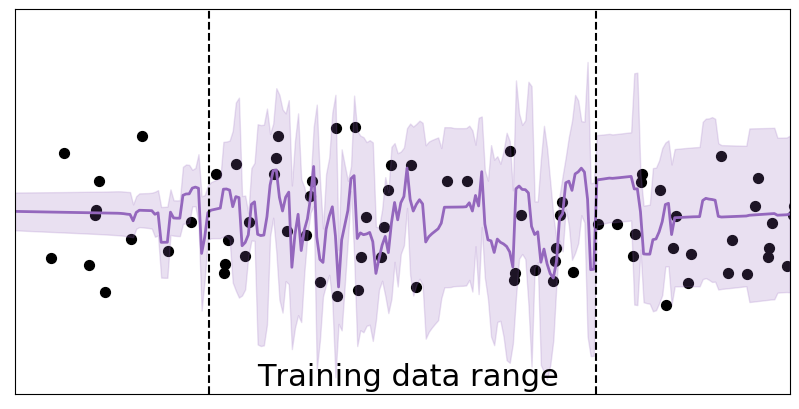}
  \label{fig:sawtooth_anp_extrapolation}
\end{subfigure}\\

\rotatebox{90}{\hspace{-2mm}\tiny \textsc{ConvCNP}}
\begin{subfigure}{0.24\textwidth}
  \centering
  \includegraphics[width=\linewidth]{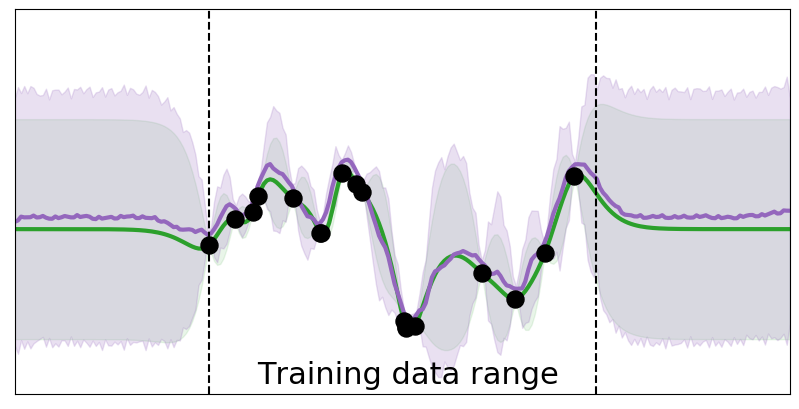}
  \label{fig:matern_convnp_extrapolation_no_data}
\end{subfigure}\hfil 
\begin{subfigure}{0.24\textwidth}
  \centering
  \includegraphics[width=\linewidth]{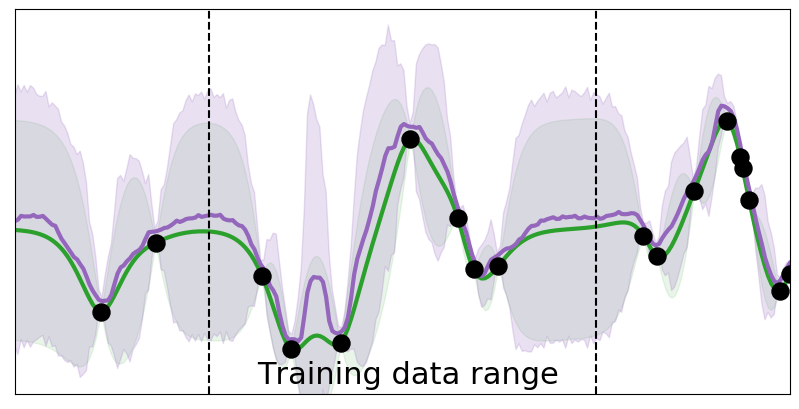}
  \label{fig:matern_convnp_extrapolation}
\end{subfigure}
\begin{subfigure}{0.24\textwidth}
  \includegraphics[width=\linewidth]{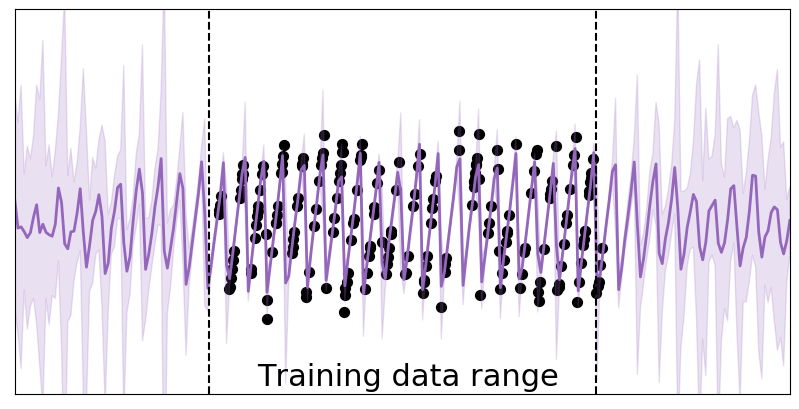}
  \label{fig:sawtooth_convnp_extrapolation_no_data}
\end{subfigure}\hfil 
\begin{subfigure}{0.24\textwidth}
  \includegraphics[width=\linewidth]{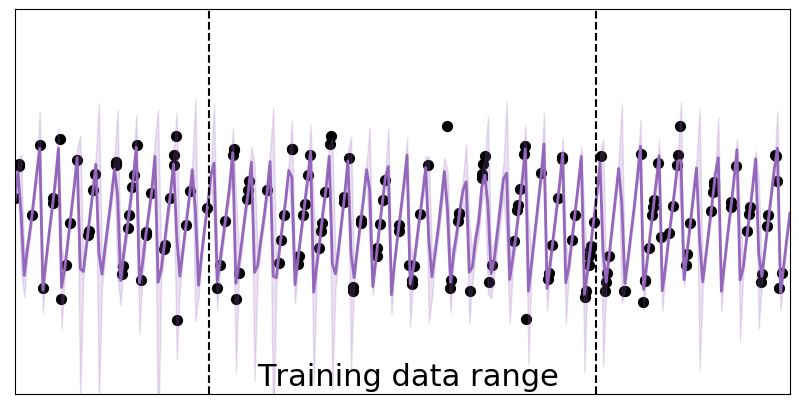}
  \label{fig:sawtooth_convnp_extrapolation}
\end{subfigure}

\caption{Example functions learned by the \textsc{AttnCNP} (top row), and \textsc{ConvCNP} (bottom row), when trained on a Matern--$\tfrac52$ kernel with length scale 0.25 (first and second column) and sawtooth function (third and fourth column). Columns one and three show the predictive posterior of the models when data is presented in same range as training, with predictive posteriors continuing beyond that range on either side. Columns two and four show model predictive posteriors when presented with data outside the training data range. Plots show means and two standard deviations.}
\label{fig:matern_sawtooth}
  \vspace{-1em}
\end{figure}

\cref{table:synthetic_results} reports the log-likelihood means and standard errors of the models over 1000 tasks. The context and target points for both training and testing lie within the interval $[-2, 2]$ where training data was observed (marked ``training data range'' in \cref{fig:matern_sawtooth}). \cref{table:synthetic_results} demonstrates that, even when extrapolation is not required, \convcp{} significantly outperforms other models in all cases, despite having fewer parameters. 

\cref{fig:matern_sawtooth} demonstrates that \convcp{} generates excellent fits, even for challenging functions such as from the Matern--$\tfrac52$ kernel and sawtooth. Moreover, \cref{fig:matern_sawtooth} compares the performance of \convcp{} and \anp{} when data is observed outside the range where the models were trained: translation equivariance enables \convcp{} to elegantly generalize to this setting, whereas \anp{} is unable to generate reasonable predictions. 

\subsection{PLAsTiCC Experiments}
\label{sec:plasticc}

The PLAsTiCC data set \citep{allam2018photometric} is a simulation of transients observed by the LSST telescope under realistic observational conditions. The data set contains 3,500,734 ``light curves'', where each measurement is of an object’s brightness as a function of time, taken by measuring the photon flux in six different astronomical filters. The data can be treated as a six-dimensional time series. The data set was introduced in a Kaggle competition,\footnote{https://www.kaggle.com/c/PLAsTiCC-2018} where the task was to use these light curves to classify the variable sources. 
The winning entry \citep[Avocado,][]{boone2019avocado} modeled the light curves with GPs and used these models to generate features for a gradient boosted decision tree classifier. We compare a multi-input--multi-output \convcp{} with the GP models used in Avocado.\footnote{Full code for Avocado, including GP models, is available at https://github.com/kboone/avocado.} \convcp{} accepts six channels as inputs, one for each astronomical filter, and returns 12 outputs\emdash{}the means and standard deviations of six Gaussians. Full experimental details are given in \cref{app:plasticc}. The mean squared error of both approaches is similar, but the held-out log-likelihood from the \convcp{} is far higher (see \cref{table:plasticc_results}). 
\begin{table}[!htb]
\centering
\caption{Mean and standard errors of log-likelihood and root mean squared error over 1000 test objects from the PLastiCC dataset.}
\label{table:plasticc_results}
\begin{tabular}{lrc}
\toprule
\multicolumn{1}{c}{Model}          & Log-likelihood                        & MSE              \\ \midrule
Kaggle GP \citep{boone2019avocado} &  -0.335          $\pm$  0.09          & 0.037 $\pm$ 4e-3 \\
ConvCP (ours)                      &  \textbf{1.31}   $\pm$ 0.30           & 0.040 $\pm$ 5e-3 \\ \bottomrule
\end{tabular}
\vspace{-1em}
\end{table}

\subsection{Predator-Prey Models: Sim2Real}
\label{sec:predator_prey}

The \convcp{} model is well suited for applications where simulation data is plentiful, but real world training data is scarce (Sim2Real). The \convcp{} can be trained on a large amount of simulation data and then be deployed with real-world training data as the context set. We consider the Lotka--Volterra model \citep{wilkinson2011stochastic}, which is used to describe the evolution of predator--prey populations. This model has been used in the Approximate Bayesian Computation literature where the task is to infer the parameters from samples drawn from the Lotka--Volterra process \citep{papamakarios2016fast}. These methods do not simply extend to prediction problems such as interpolation or forecasting. In contrast, we train \convcp{} on synthetic data sampled from the Lotka--Volterra model and can then condition on real-world data from the Hudson's Bay lynx--hare data set \citep{leigh1968ecological} to perform interpolation (see \cref{fig:lynxhare}; full experimental details are given in \cref{app:sim2real}). The \convcp{} performs accurate interpolation as shown in \cref{fig:lynxhare}. We were unable to successfully train the \anp{} for this task. We suspect this is because the simulation data are variable length-time series, which requires models to leverage translation equivariance at training time. As shown in \cref{sec:synth_gp}, the \anp{} struggles to do this (see \cref{app:sim2real} for complete details).

\begin{figure}[htb]
\centering
\begin{subfigure}{0.33\textwidth}
  \includegraphics[width=\linewidth]{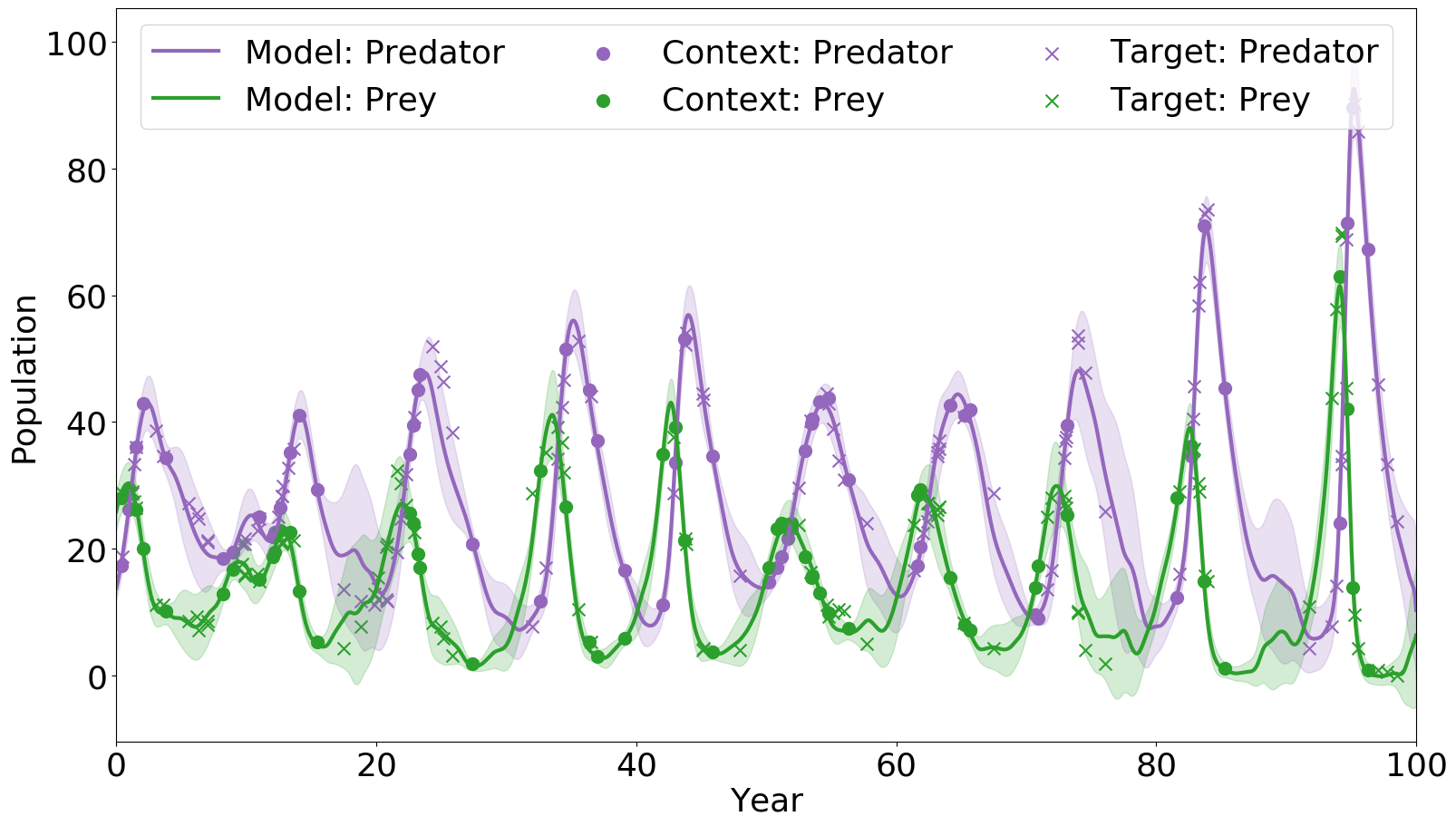}
\end{subfigure}\hfill
\begin{subfigure}{0.33\textwidth}
  \includegraphics[width=\linewidth]{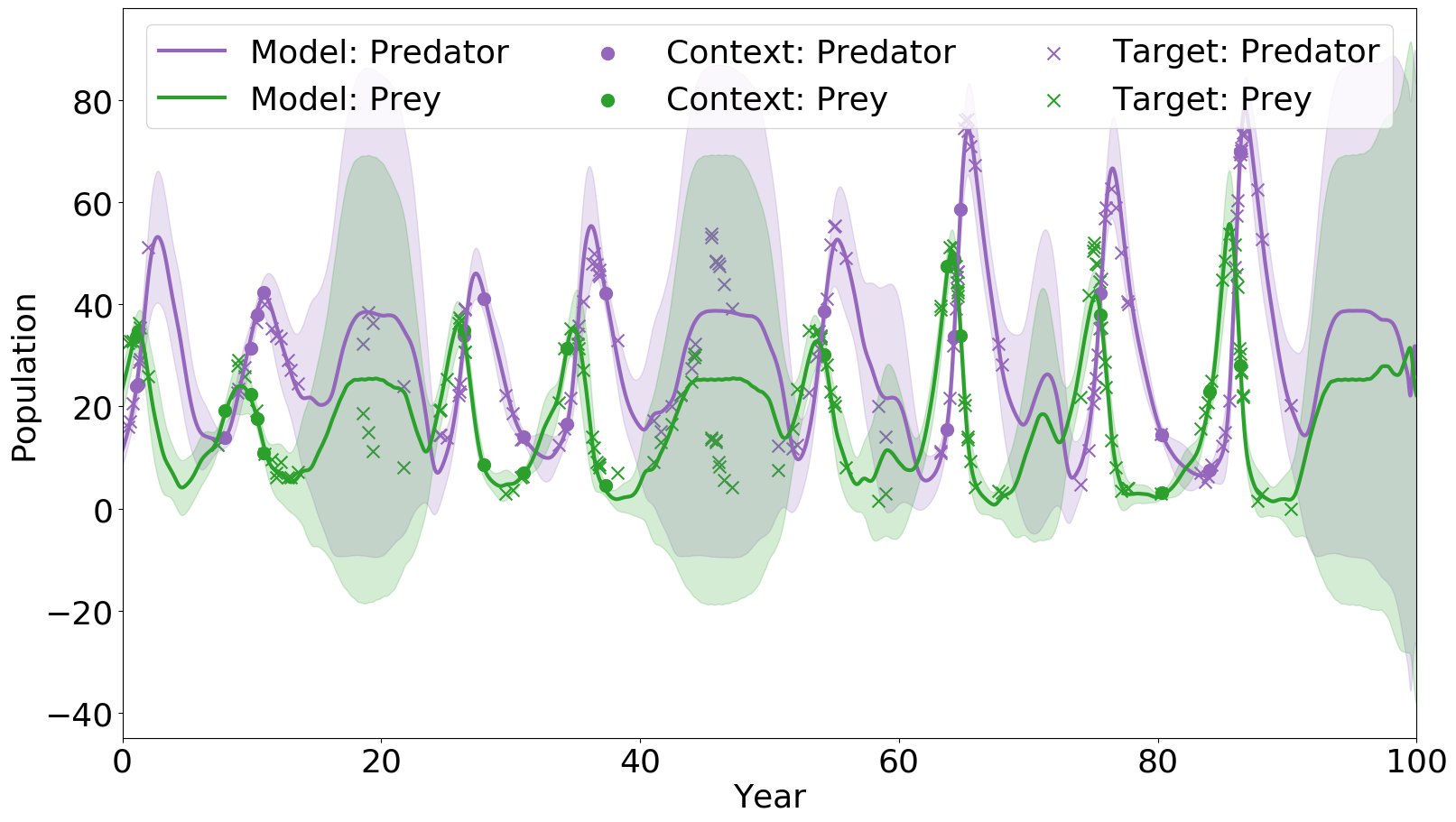}
\end{subfigure}\hfill
\begin{subfigure}{0.33\textwidth}
  \includegraphics[width=\linewidth]{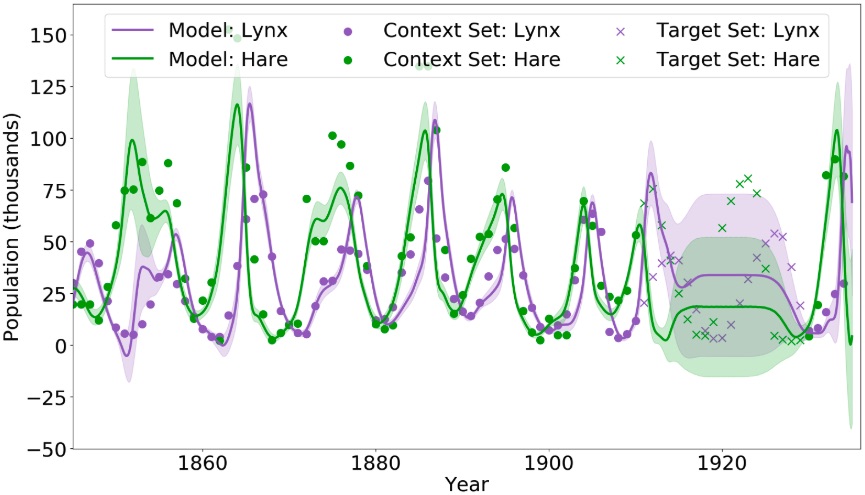}
\end{subfigure}\hfill
    \caption{Left and centre: two samples from the Lotka–Volterra process (sim). Right: \convcp{} trained on simulations and applied to the Hudson's Bay lynx-hare dataset (real). Plots show means and two standard deviations.}
    \label{fig:lynxhare}
\end{figure}

\subsection{2D Image Completion Experiments}
\label{sec:images}

To test \convcp{} beyond one-dimensional features, we evaluate our model on on-the-grid image completion tasks and compare it to \anp{}.
Image completion can be cast as a prediction of pixel intensities $\vy_i^*$ ($\in \R^3$ for RGB, $\in \R$ for greyscale) given a target 2D pixel location $\vx_i^*$ conditioned on an observed (context) set of pixel values $Z=(\vx_n, \vy_n)_{n=1}^N$.
In the following experiments, the context set can vary but the target set contains all pixels from the image.
Further experimental details are in \cref{app:image_details}.

\label{sec:image_benchmarks}

\begin{table}[!htb]
\centering
\caption{Log-likelihood from image experiments (6 runs).}
\label{table:images_loglike}
\begin{tabular}{@{}lrrrrrr@{}}
\toprule
  Model           & 
  Params          & 
  MNIST           & 
  SVHN            & 
  CelebA32        & 
  CelebA64        & 
  ZSMM            \\ 
\midrule
\anp{}          & 
    410k              & 
    1.08 $\pm${0.04}  & 
    3.94 $\pm${0.02}  & 
    3.18 $\pm${0.02}  & 
                      & 
   -0.83 $\pm${0.08}  \\
\convcp{}       & 
    113k              & 
    1.21 $\pm${0.00}  & 
    3.89 $\pm${0.01}  & 
    3.22 $\pm${0.02}  & 
    3.66 $\pm${0.01}  & 
    \textbf{1.18 $\pm${0.04}} \\        
\convcp{}XL     & 
    400k        & 
    \textbf{1.27 $\pm${0.01}}  & 
    \textbf{3.97 $\pm${0.02}}  & 
    \textbf{3.39 $\pm${0.02}}  & 
    \textbf{3.73 $\pm${0.01}}  & 
    0.86 $\pm${0.12}           \\
\bottomrule
\end{tabular}
\end{table}

\paragraph{Standard benchmarks.}
We first evaluate the model on four common benchmarks: MNIST \citep{lecun1998gradient}, SVHN \citep{netzer2011reading}, and $32 \times 32$ and $64 \times 64$ CelebA \citep{liu2018large}.
Importantly, these data sets are biased towards images containing a single, well-centered object. 
As a result, perfect translation-equivariance might hinder the performance of the model when the test data are similarly structured. 
We therefore also evaluated a larger \convcp{} that can learn such non-stationarity, while still sharing parameters across the input space (\convcp{}XL). 
%

\Cref{table:images_loglike} shows that \convcp{} significantly outperforms \anp{} when it has a large receptive field size, while being at least as good with a small receptive field size.
Qualitative samples for various context sets can be seen in \cref{fig:images_all}. Further qualitative comparisons and ablation studies can be found in \cref{app:anp_vs_ccp} and \cref{app:imgs_ablation} respectively.

\textbf{Generalization to multiple, non-centered objects.}
\label{sec:natural_imgs}
The data sets from the previous paragraphs were centered and contained single objects. Here we test whether \convcp{}s trained on such data can generalize to images containing multiple, non-centered objects.  

The last column of \cref{table:images_loglike} evaluates the models in a zero shot multi-MNIST (ZSMM) setting, where images contain multiple digits at test time (\cref{app:zsmm}).
\convcp{} significantly outperforms \anp{} on such tasks. 
\Cref{fig:qualitative_ZSMM} shows a histogram of the image log-likelihoods for \convcp{} and \anp{}, as well as qualitative results at different percentiles of the \convcp{} distribution.
\convcp{} is able to extrapolate to this out-of-distribution test set, while \anp{} appears to model the bias of the training data and predict a centered ``mean'' digit independently of the context.
Interestingly, \convcp{}XL does not perform as well on this task.
In particular, we find that, as the receptive field becomes very large, performance on this task decreases. We hypothesize that this has to do with behavior of the model at the edges of the image.
CNNs with larger receptive fields\emdash{}the region of input pixels that affect a particular output pixel\emdash{}are able to model non-stationary behavior by looking at the distance from any pixel to the image boundary.
We expand on this discussion and provide further experimental evidence regarding the effects of receptive field on the ZSMM task in \cref{app:receptive_field_zsmm}.

Although ZSMM is a contrived task, note that our field of view usually contains multiple independent objects, thereby requiring translation equivariance.
As a more realistic example, 
we took a \convcp{} model trained on CelebA and tested it on a natural image of different shape which contains multiple people (\cref{fig:qualitative_ellen}). 
Even with 95\% of the pixels removed, the \convcp{} was able to produce a qualitatively reasonable reconstruction.
A comparison with \anp{} is given in \cref{app:anp_vs_ccp}.

\textbf{Computational efficiency.} Beyond the performance and generalization improvements, a key advantage of the \convcp{} is its computational efficiency.
The memory and time complexity of a single self-attention layer grows quadratically with the number of inputs $M$ (the number of pixels for images) but only linearly for a convolutional layer. 

Empirically, with a batch size of 16 on $32\times 32$ MNIST, \convcp{}XL requires 945MB of VRAM, while \anp{} requires 5839 MB. 
For the $56\times56$ ZSMM \convcp{}XL increases its requirements to 1443 MB, while \anp{} could not fit onto a 32GB GPU.
Ultimately, \anp{} had to be trained with a batch size of 6 (using 19139 MB) and we were not able to fit it for CelebA64. 
Recently, restricted attention has been proposed to overcome this computational issue \citep{parmar2018image}, but we leave an investigation of this and its relationship to \convcp{}s to future work.

\section{Related Work and Discussion}
\label{sec:conclusions}

We have introduced \convcp{}, a new member of the CNP family that leverages embedding sets into function space to achieve translation equivariance. The relationship to 
\begin{inlinelist}
    \item the NP family, and
    \item representing functions on sets,
\end{inlinelist} 
each imply extensions and avenues for future work. 

\paragraph{Deep sets.} 
Two key issues in the existing theory on learning with sets \citep{zaheer2017deep, qi2017pointnet, wagstaff2019limitations} are 
\begin{inlinelist}
    \item the restriction to fixed-size sets, and
    \item that the dimensionality of the embedding space must be no less than the cardinality of the embedded sets.
\end{inlinelist}
Our work implies that by considering appropriate embeddings into a function space, both issues are alleviated. In future work, we aim to further this analysis and formalize it in a more general context. 

\paragraph{Point-cloud models.}
Another line of related research focuses on 3D point-cloud modelling \citep{qi2017pointnet, qi2017pointnetplus}. While original work focused on permutation invariance \citep{qi2017pointnet, zaheer2017deep}, more recent work has considered translation equivariance as well \citep{wu2019pointconv}, leading to a model closely resembling \textsc{ConvDeepSets}.
The key differences with our work are the following:
\begin{inlinelist}
    \item \citet{wu2019pointconv} implement $\psi$ as an MLP with learned weights, resulting in a more flexible parameterization of the convolutional weights.
    \item \citet{wu2019pointconv} interpret the computations as Monte Carlo approximations to an underlying continuous convolution, whereas we consider the problem of function approximation directly on sets.
    \item \citet{wu2019pointconv} only consider the point-cloud application, whereas our derivation and modelling work considers general sets.
\end{inlinelist}
%


\paragraph{Correlated samples and consistency under marginalization.}
In the predictive distribution of \convcp{} (\cref{eq:likelihood}), predicted $\vy$s are conditionally independent given the context set.
Consequently, samples from the predictive distribution lack correlations and appear noisy.
One solution is to instead define the predictive distribution in an autoregressive way, like e.g.\ PixelCNN++ \citep{salimans2017pixelcnn++}.
Although samples are now correlated, the quality of the samples depends on the order in which the points are sampled.
Moreover, the predicted $\vy$s are then not consistent under marginalization \citep{garnelo2018neural,kim2018attentive}.
Consistency under marginalization is more generally an issue for neural autoregressive models \citep{salimans2017pixelcnn++,parmar2018image}, although consistent variants have been devised \citep{louizos2019functional}.
To overcome the consistency issue for \convcp{}, exchangeable neural process models \citep[e.g.][]{korshunova2018conditional, louizos2019functional} may provide an interesting avenue.
Another way to introduce dependencies between $\vy$s is to employ latent variables as is done in neural processes \citep{garnelo2018neural}.
However, such an approach only achieves \emph{conditional consistency}: given a context set, the predicted $\vy$s will be dependent and consistent under marginalization, but this does not lead to a consistent joint model that also includes the context set itself.

\clearpage
\newpage

\begin{figure}
  \centering
  
  \label{fig:image_data}
  
\begin{subfigure}{0.55\textwidth}
\includegraphics[width=\textwidth]{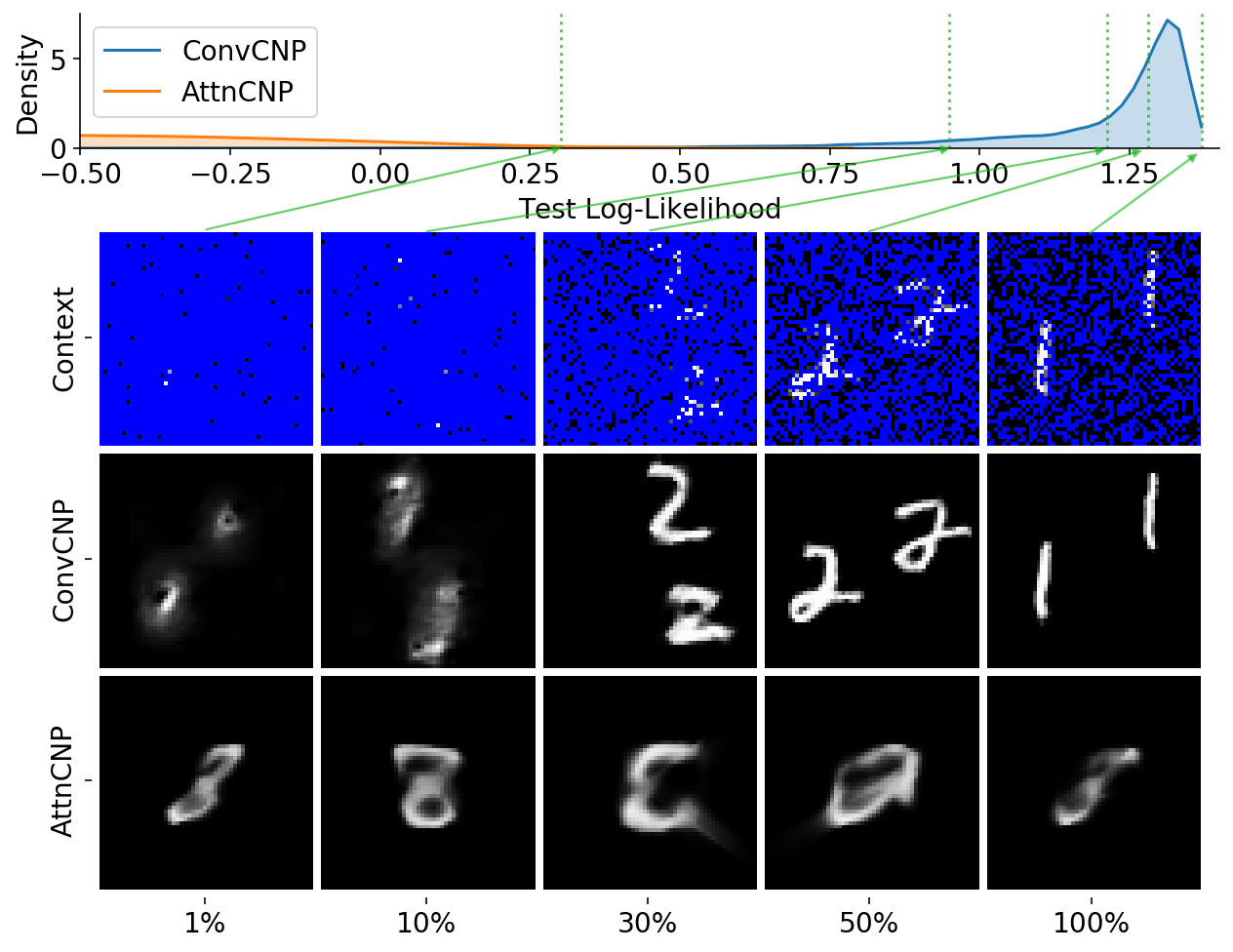}
\subcaption{Log-likelihood and qualitative results on ZSMM. 
The top row shows the log-likelihood distribution for both models.
The images below correspond to the context points (top), \convcp{} target predictions  (middle), and \anp{} target predictions (bottom). 
Each column corresponds to a given percentile of the \convcp{} distribution. 
}
\label{fig:qualitative_ZSMM}
\end{subfigure} \hfill
\begin{subfigure}{0.35\textwidth}
\includegraphics[width=\linewidth]{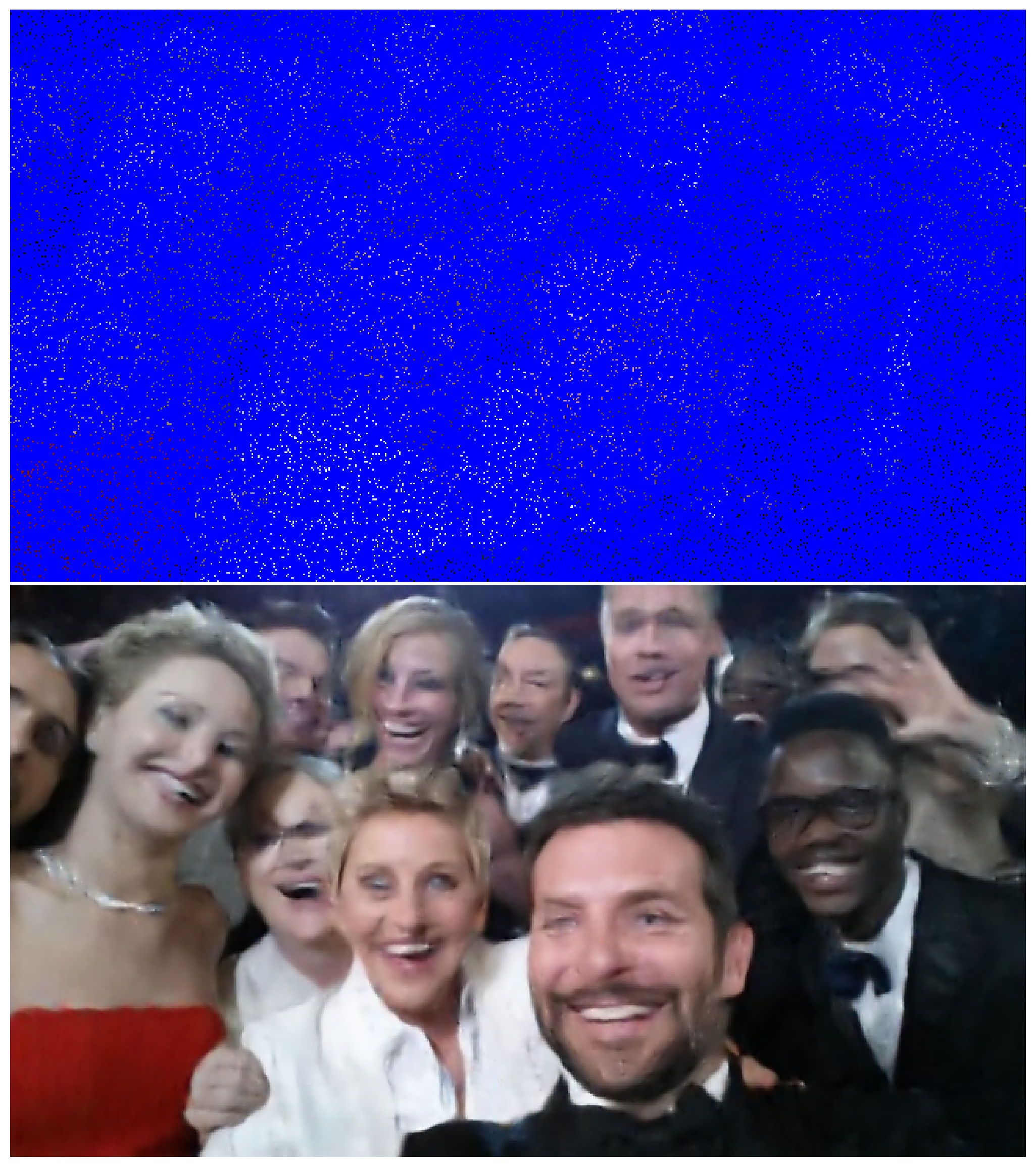}
\subcaption{Qualitative evaluation of a \convcp{}XL trained on the unscaled CelebA ($218\times178$) and tested on Ellen's Oscar unscaled ($337\times599$) selfie \citep{degeneres} with $5\%$ of the pixels as context (top).}
\label{fig:qualitative_ellen}
\end{subfigure} 
  
\caption{Zero shot generalization to tasks that require translation equivariance.}
\end{figure}
\begin{figure}[htb]
\centering

\includegraphics[width=0.61\linewidth]{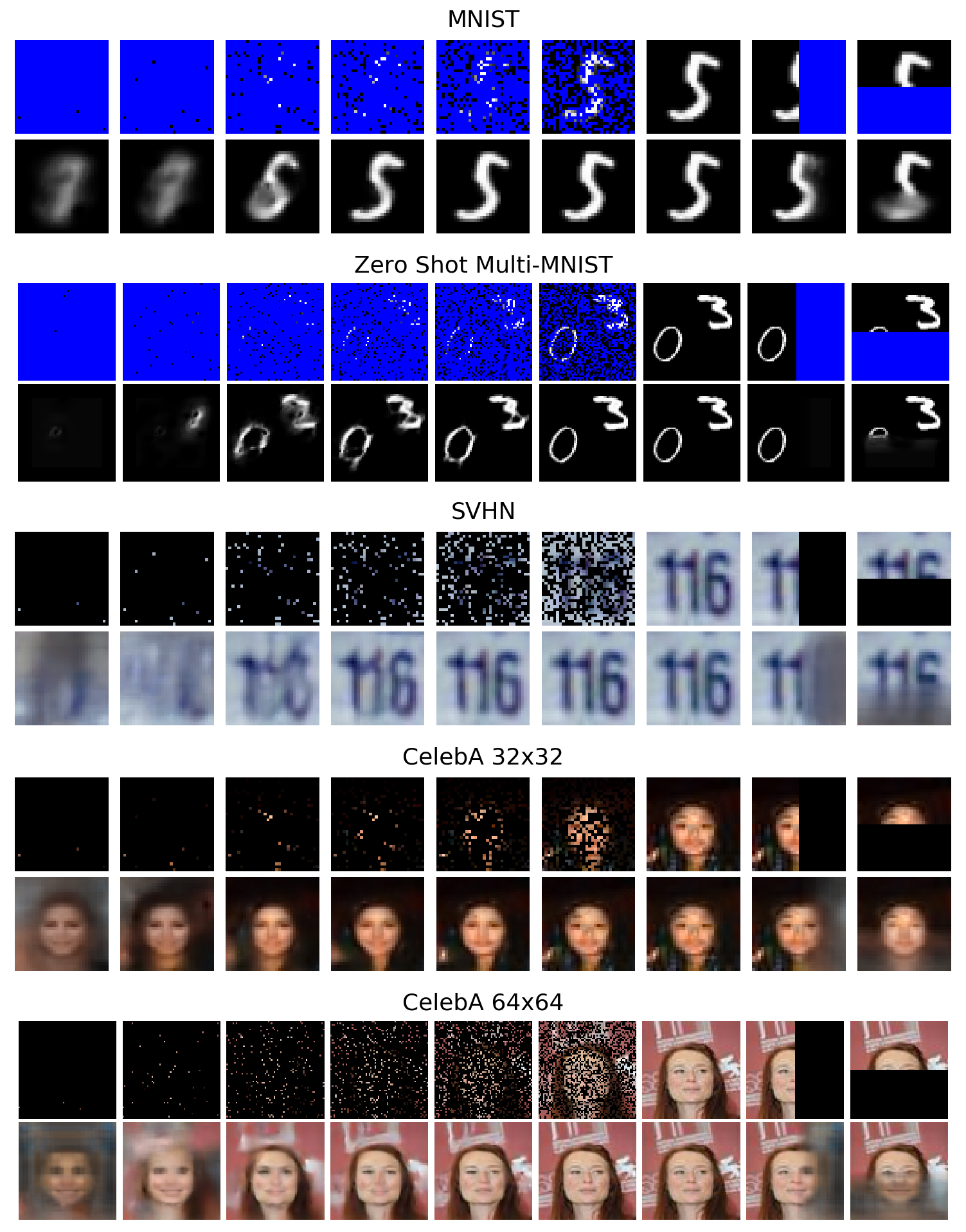}

\caption{Qualitative evaluation of the \convcp{}(XL). For each dataset, an image is randomly sampled, the first row shows the given context points while the second is the mean of the estimated conditional distribution. 
From left to right the first seven columns correspond to a context set with 3, 1\%, 5\%, 10\%, 20\%, 30\%, 50\%, 100\% randomly sampled context points. 
In the last two columns, the context sets respectively contain all the pixels in the left and top half of the image.
\convcp{}XL is shown for all datasets besides ZSMM, for which we show the fully translation equivariant \convcp{}.
}
\label{fig:images_all}
\end{figure}

\clearpage
\newpage
\subsubsection*{Acknowledgements}
We would like to thank Mark Rowland for help with checking the proofs, and David R.~Burt, Will Tebbutt, Robert Pinsler, and Cozmin Ududec for helpful comments on the manuscript. Andrew Y.~K.~Foong is supported by a Trinity Hall Research Studentship and the George and Lilian Schiff Foundation. Richard E.~Turner is supported by Google, Amazon, ARM, Improbable and EPSRC grants EP/M0269571 and EP/L000776/1.

\bibliography{convprocesses}
\bibliographystyle{iclr2020_conference}

\clearpage
\newpage
\appendix

\section{Theoretical Results and Proofs}
\label{app:proofs}

In this section, we provide the proof of \cref{thm:representation}. Our proof strategy is as follows. We first define an appropriate topology for fixed-sized sets (\cref{app:quotient_space}). With this topology in place, we demonstrate that our proposed embedding into function space is homeomorphic (\cref{lem:compact_injection,lem:closed_injection}). We then show that the embeddings of fixed-sized sets can be extended to varying-sized sets by ``pasting'' the embeddings together while maintaining their homeomorphic properties (\cref{lemma:encoding_varying_size}). Following this, we demonstrate that the resulting embedding may be composed with a continuous mapping to our desired target space, resulting in a continuous mapping between two metric spaces (\cref{lemma:total_continuity}). Finally, in \cref{app:representation_thm_proof} we combine the above-mentioned results to prove \cref{thm:representation}.

We begin with definitions that we will use throughout the section and then present our results. Let $\mathcal{X} = \R^d$ and let $\mathcal{Y} \sub \R$ be compact.
Let $\psi$ be a symmetric, positive-definite kernel on $\mathcal{X}$.
By the Moore--Aronszajn Theorem, 
there is a unique Hilbert space $(\mathcal{H}, \langle\vardot, \vardot\rangle_{\mathcal{H}})$ of real-valued functions on $\mathcal{X}$ for which $\psi$ is a reproducing kernel.
This means that
\begin{inlinelist}
    \item $\psi(\vardot, \vx) \in \mathcal{H}$ for all $\vx \in \mathcal{X}$ and
    \item $\langle f, \psi(\vardot, \vx) \rangle_{\mathcal{H}} = f(\vx)$ for all $f \in \mathcal{H}$ and $\vx \in \mathcal{X}$ (reproducing property).
\end{inlinelist}
For $\psi\colon\mathcal{X}\times\mathcal{X}\to\R$, $\mX = (\vx_1,\ldots,\vx_n) \in \mathcal{X}^n$, and $\mX' = (\vx'_1,\ldots,\vx'_n) \in \mathcal{X}^n$, we denote
\[
    \psi(\mX, \mX')
    = \begin{bmatrix}
        \psi(\vx_1, \vx_1') & \cdots & \psi(\vx_1, \vx_n') \\
        \vdots & \ddots & \vdots \\
        \psi(\vx_n, \vx_1') & \cdots & \psi(\vx_n, \vx_n')
    \end{bmatrix}.
\]

\begin{definition}[Interpolating RKHS]
    Call $\mathcal{H}$ \emph{interpolating} if it interpolates any finite number of points:
    for every $((\vx_i, y_i))_{i=1}^n \sub \mathcal{X} \times \mathcal{Y}$ with $(\vx_i)_{i=1}^n$ all distinct, there is an $f \in \mathcal{H}$ such that $f(\vx_1) = y_1, \ldots, f(\vx_n) = y_n$.
\end{definition}
For example, the RKHS induced by any strictly positive-definite kernel, e.g.\ the exponentiated quadratic (EQ) kernel $\psi(\vx, \vx') = \sigma^2 \exp(-\frac{1}{2\ell^2}\|\vx - \vx'\|^2)$, is interpolating:
Let $\vc = \psi(\mX, \mX)^{-1}\vy$ and consider $f = \sum_{i=1}^n c_i \psi(\vardot, \vx_i) \in \mathcal{H}$.
Then $f(\mX) = \psi(\mX, \mX)\vc = \vy$. 

\subsection{The Quotient Space $\mathcal{A}^n/\,\mathbb{S}_n$} \label{app:quotient_space} 
Let $\mathcal{A}$ be a Banach space.
For $\vx = (x_1, \ldots, x_n) \in \mathcal{A}^n$ and $\vy = (y_1, \ldots, y_n) \in \mathcal{A}^n$, let $\vx \sim \vy$ if $\vx$ is a permutation of $\vy$;
that is, $\vx \sim \vy$ if and only if $\vx = \pi \vy$ for some $\pi \in \mathbb{S}_n$ where
\[
    \pi \vy = (y_{\pi(1)}, \ldots, y_{\pi(n)}).
\]
Let $\mathcal{A}^n/\,\mathbb{S}_n$ be the collection of equivalence classes of $\sim$.
Denote the equivalence class of $\vx$ by $[\vx]$;
for $A \sub \mathcal{A}^n$, denote $[A] = \set{[\va] : \va \in A}$.
Call the map $\vx \mapsto [\vx]\colon \mathcal{A}^n \to \mathcal{A}^n/\,\mathbb{S}_n$ the canonical map.
The natural topology on $\mathcal{A}^n/\,\mathbb{S}_n$ is the quotient topology, in which a subset of $\mathcal{A}^n/\,\mathbb{S}_n$ is open if and only if its preimage under the canonical map is open in $\mathcal{A}^n$.
In what follows, we show that the quotient topology is metrizable.

On $\mathcal{A}^n$, since all norms on finite-dimensional vector spaces are equivalent, without loss of generality consider
\[\conditionaltextstyle
    \|\vx\|_{\mathcal{A}^n}^2 = \sum_{i=1}^n \|x_i\|_{\mathcal{A}}^2.
\]
Note that $\|\vardot\|_{\mathcal{A}^n}$ is permutation invariant: $\|\pi \vardot\|_{\mathcal{A}^n} = \|\vardot\|_{\mathcal{A}^n}$ for all $\pi \in \mathbb{S}_n$.
On $\mathcal{A}^n/\,\mathbb{S}_n$, define
\[
    d \colon \mathcal{A}^n/\,\mathbb{S}_n \times \mathcal{A}^n/\,\mathbb{S}_n \to [0, \infty),
    \quad d([\vx], [\vy]) = \operatorname{min}_{\pi \in \mathbb{S}_n} \|\vx - \pi \vy\|_{\mathcal{A}^n}.
\]
Call a set $[A] \sub \mathcal{A}^n/\,\mathbb{S}_n$ bounded if $\set{d([\vx],[0]):[\vx]\in [A]}$ is bounded.

\begin{restatable}{prop}{propositionmetric} \label{proposition:quotient_space_metric}
    The function $d$ is a metric.
\end{restatable}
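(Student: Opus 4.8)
The plan is to verify that $d$ is well defined on equivalence classes and satisfies the metric axioms, using two structural facts throughout. First, the group $\mathbb{S}_n$ is finite, so every minimization over $\pi$ is attained by some permutation. Second, the norm $\|\vardot\|_{\mathcal{A}^n}$ is permutation invariant, $\|\pi \va\|_{\mathcal{A}^n} = \|\va\|_{\mathcal{A}^n}$ for all $\pi \in \mathbb{S}_n$; since $\pi$ acts linearly, this also yields $\|\pi\va - \pi\vb\|_{\mathcal{A}^n} = \|\pi(\va - \vb)\|_{\mathcal{A}^n} = \|\va - \vb\|_{\mathcal{A}^n}$.

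I would first check well-definedness, since $d$ is written in terms of representatives. Replacing $\vx,\vy$ by other representatives $\sigma\vx, \tau\vy$ and applying $\sigma^{-1}$ inside the norm turns $\|\sigma\vx - \pi\tau\vy\|_{\mathcal{A}^n}$ into $\|\vx - \sigma^{-1}\pi\tau\vy\|_{\mathcal{A}^n}$; as $\pi$ ranges over $\mathbb{S}_n$, so does $\sigma^{-1}\pi\tau$, so the minimum is unchanged. Nonnegativity is immediate, as $d$ is a minimum of norms. For the identity of indiscernibles, $d([\vx],[\vy]) = 0$ forces $\|\vx - \pi\vy\|_{\mathcal{A}^n} = 0$ for some $\pi$ (the minimum is attained), hence $\vx = \pi\vy$ and $[\vx] = [\vy]$; the converse is clear. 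Symmetry follows by reindexing: applying $\pi^{-1}$ gives $\|\vx - \pi\vy\|_{\mathcal{A}^n} = \|\vy - \pi^{-1}\vx\|_{\mathcal{A}^n}$, and $\pi \mapsto \pi^{-1}$ is a bijection of $\mathbb{S}_n$, so the two minima agree.

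The substantive step is the triangle inequality, and the key idea is to use group multiplication to compose the optimal permutations. Let $\pi_1$ attain $d([\vx],[\vy])$ and $\pi_2$ attain $d([\vy],[\vz])$. Since $\pi_1\pi_2 \in \mathbb{S}_n$ is an admissible competitor in the minimization defining $d([\vx],[\vz])$,
\[
    d([\vx],[\vz]) \le \|\vx - \pi_1\pi_2\vz\|_{\mathcal{A}^n} \le \|\vx - \pi_1\vy\|_{\mathcal{A}^n} + \|\pi_1\vy - \pi_1\pi_2\vz\|_{\mathcal{A}^n}.
\]
Permutation invariance collapses the last term, $\|\pi_1\vy - \pi_1\pi_2\vz\|_{\mathcal{A}^n} = \|\pi_1(\vy - \pi_2\vz)\|_{\mathcal{A}^n} = \|\vy - \pi_2\vz\|_{\mathcal{A}^n}$, so the right-hand side equals $d([\vx],[\vy]) + d([\vy],[\vz])$.

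I expect the only point requiring care to be the interplay between the group action and the minimization: ensuring that the composite $\pi_1\pi_2$ is admissible for the combined distance, and that invariance is applied to the correct linear action. This is exactly where finiteness of $\mathbb{S}_n$ and invariance of $\|\vardot\|_{\mathcal{A}^n}$ are genuinely used; the remaining axioms are essentially bookkeeping.
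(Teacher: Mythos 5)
Your proof is correct and follows essentially the same route as the paper's: well-definedness via reindexing the minimization with the group structure, and the triangle inequality via the composite permutation $\pi_1\pi_2$ together with permutation invariance of $\|\vardot\|_{\mathcal{A}^n}$. The only cosmetic difference is that you invoke finiteness of $\mathbb{S}_n$ to attain the minima explicitly (and spell out symmetry and the identity of indiscernibles, which the paper declares clear), whereas the paper takes the two minima successively; both are valid.
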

\begin{proof}
We first show that $d$ is well defined on $\mathcal{A}^n /\, \sS_n$. Assume $\vx \sim \vx'$ and $\vy \sim \vy'$. Then, $\vx' = \pi_{\vx}\vx$ and $\vy' = \pi_{\vy}\vy$. Using the group properties of $\mathbb{S}_n$ and the permutation invariance of $\|\vardot\|_{\mathcal{A}^n}$:
    \begin{align*}
        d([\vx'], [\vy']) &= \operatorname{min}_{\pi \in \mathbb{S}_n} \|\pi_{\vx}\vx - \pi \pi_{\vy}\vy\|_{\mathcal{A}^n}\\
        &= \operatorname{min}_{\pi \in \mathbb{S}_n} \|\pi_{\vx}\vx - \pi \vy\|_{\mathcal{A}^n}\\
        &= \operatorname{min}_{\pi \in \mathbb{S}_n} \|\vx - \pi_{\vx}^{-1} \pi \vy\|_{\mathcal{A}^n}\\
        &= \operatorname{min}_{\pi \in \mathbb{S}_n} \|\vx - \pi \vy\|_{\mathcal{A}^n}\\
        &= d([\vx], [\vy]).
    \end{align*}
    It is clear that $d([\vx], [\vy]) = d([\vy], [\vx])$ and that $d([\vx], [\vy]) = 0$ if and only if $[\vx] = [\vy]$.
    To show the triangle inequality, note that
    \[
        \|\vx - \pi_1 \pi_2 \vy\|_{\mathcal{A}^n}
        \le \|\vx - \pi_1 \vz\|_{\mathcal{A}^n} + \|\pi_1 \vz -  \pi_1 \pi_2 \vy\|_{\mathcal{A}^n}
        = \|\vx - \pi_1 \vz\|_{\mathcal{A}^n} + \|\vz - \pi_2 \vy\|_{\mathcal{A}^n},
    \]
    using permutation invariance of $\|\vardot\|_{\mathcal{A}^n}$. 
    Hence, taking the minimum over $\pi_1$,
    \[
        d([\vx], [\vy]) \le d([\vx], [\vz]) + \|\vz - \pi_2 \vy\|_{\mathcal{A}^n},
    \]
    so taking the minimum over $\pi_2$ gives the triangle inequality for $d$.
\end{proof}

\begin{restatable}{prop}{propositioncontinuousembedding}
    The canonical map $\mathcal{A}^n \to \mathcal{A}^n /\, \mathbb{S}_n$ is continuous under the metric topology induced by $d$. 
\end{restatable}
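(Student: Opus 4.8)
The plan is to show that the canonical map $q\colon \vx \mapsto [\vx]$ is \emph{non-expanding} (1-Lipschitz) from $(\mathcal{A}^n, \|\vardot\|_{\mathcal{A}^n})$ to $(\mathcal{A}^n/\,\mathbb{S}_n, d)$, from which continuity is immediate. The whole argument rests on a single observation: since $d$ is defined as a minimum over all permutations, I can bound it above by the value attained at any particular permutation, and in particular at the identity $\mathrm{id} \in \mathbb{S}_n$.

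Concretely, first I would note that for arbitrary $\vx, \vy \in \mathcal{A}^n$,
\[
    d([\vx], [\vy]) = \operatorname{min}_{\pi \in \mathbb{S}_n} \|\vx - \pi \vy\|_{\mathcal{A}^n} \le \|\vx - \mathrm{id}\,\vy\|_{\mathcal{A}^n} = \|\vx - \vy\|_{\mathcal{A}^n}.
\]
This establishes that $q$ is non-expanding. Given this estimate, continuity follows by the usual $\epsilon$--$\delta$ argument: fixing $\vx_0 \in \mathcal{A}^n$ and $\epsilon > 0$, the choice $\delta = \epsilon$ ensures that whenever $\|\vx - \vx_0\|_{\mathcal{A}^n} < \delta$ we have $d([\vx],[\vx_0]) \le \|\vx - \vx_0\|_{\mathcal{A}^n} < \epsilon$. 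Since $\vx_0$ was arbitrary, $q$ is continuous with respect to the metric topology induced by $d$.

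There is essentially no genuine obstacle here; the result is a short consequence of the previous proposition (which certifies that $d$ is a bona fide metric, so the target is a legitimate metric space) together with the elementary bound above. The only subtlety worth flagging explicitly is the legitimacy of upper-bounding the minimum over $\mathbb{S}_n$ by its value at a single permutation, which is what converts the combinatorial definition of $d$ into a clean Lipschitz estimate. I would also remark that this continuity is the ``easy half'' of the relationship between the metric and quotient topologies; the more substantive content—that these two topologies in fact coincide—is handled by the metrizability discussion accompanying \cref{proposition:quotient_space_metric}.
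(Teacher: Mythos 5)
Your proof is correct and takes essentially the same route as the paper, which simply notes that continuity follows from the bound $d([\vx],[\vy]) \le \|\vx - \vy\|_{\mathcal{A}^n}$; your write-up just makes explicit that this bound comes from evaluating the minimum at the identity permutation and then runs the standard $\epsilon$--$\delta$ argument.
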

\begin{proof}
    Follows directly from $d([\vx], [\vy]) \le \|\vx - \vy\|_{\mathcal{A}^n}$.
\end{proof}

\begin{restatable}{prop}{propositionclosedembedding}
    Let $A \sub \mathcal{A}^n$ be topologically closed and closed under permutations.
    Then $[A]$ is topologically closed in $\mathcal{A}^n /\, \mathbb{S}_n$ under the metric topology.
\end{restatable}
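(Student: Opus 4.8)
The plan is to exploit the fact that $(\mathcal{A}^n/\,\mathbb{S}_n, d)$ is a metric space (by \cref{proposition:quotient_space_metric}), so that closedness of $[A]$ is equivalent to \emph{sequential} closedness. I would therefore take an arbitrary sequence $([\vx_k])_k$ in $[A]$ converging to some $[\vy] \in \mathcal{A}^n/\,\mathbb{S}_n$ and show that $[\vy] \in [A]$.

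First, for each $k$ I would choose a representative $\vx_k \in A$. This is legitimate: $[\vx_k] \in [A]$ means the class contains a point of $A$, and since $A$ is closed under permutations the entire class in fact lies in $A$. Convergence $d([\vx_k],[\vy]) \to 0$ unfolds, by the definition of $d$, to $\min_{\pi \in \mathbb{S}_n} \|\vx_k - \pi \vy\|_{\mathcal{A}^n} \to 0$. Because $\mathbb{S}_n$ is finite, for each $k$ the minimum is attained at some $\pi_k \in \mathbb{S}_n$, so that $\|\vx_k - \pi_k \vy\|_{\mathcal{A}^n} \to 0$.

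The key manoeuvre is to remove the dependence of $\pi_k$ on $k$. Since $\mathbb{S}_n$ is finite, some permutation $\pi$ occurs infinitely often among the $\pi_k$; passing to the corresponding subsequence, I obtain $\|\vx_k - \pi \vy\|_{\mathcal{A}^n} \to 0$, i.e.\ $\vx_k \to \pi \vy$ in $\mathcal{A}^n$. As $A$ is topologically closed and each $\vx_k \in A$, the limit satisfies $\pi \vy \in A$; and since $A$ is closed under permutations, $\vy = \pi^{-1}(\pi \vy) \in A$. Hence $[\vy] = [\pi \vy] \in [A]$, which shows $[A]$ is sequentially, and therefore topologically, closed.

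The main obstacle is the interaction between the minimisation over permutations in the definition of $d$ and the passage to the limit: a priori the optimal permutation $\pi_k$ could change with $k$, so one cannot directly read off convergence of $(\vx_k)$ in $\mathcal{A}^n$. Finiteness of $\mathbb{S}_n$ is exactly what resolves this, by permitting passage to a subsequence along which the permutation is constant. An alternative, shorter route becomes available once the metric topology is known to coincide with the quotient topology: then $[A]$ is closed precisely when its preimage under the canonical map $q$, namely $q^{-1}([A]) = \bigcup_{\pi \in \mathbb{S}_n} \pi A = A$, is closed, which holds by hypothesis. I would nonetheless present the sequential argument, since it is self-contained and does not presuppose that the two topologies agree.
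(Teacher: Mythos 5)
Your proof is correct and follows essentially the same route as the paper's: a sequential closedness argument in the metric space that unwinds the definition of $d$ to extract permutations, then applies closure under permutations and topological closedness of $A$. The only cosmetic difference is that the paper applies the minimizing permutations to the sequence terms (so $\pi_n \va_n \to \vx$ directly, with no subsequence extraction), whereas you apply them to the limit and then use finiteness of $\mathbb{S}_n$ to pass to a subsequence with a constant permutation; both are valid.
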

\begin{proof}
    Recall that a subset $[A]$ of a metric space is closed iff every limit point of $[A]$ is also in $[A]$. Consider a sequence $([\va_n])_{n=1}^\infty \sub [A]$ converging to some $[\vx] \in \mathcal{A}^n /\, \mathbb{S}_n$.
    Then there are permutations $(\pi_n)_{n=1}^\infty \sub \mathbb{S}_n$ such that $\pi_n \va_n \to \vx$.
    Here $\pi_n \va_n \in A$, because $A$ is closed under permutations.
    Thus $\vx \in A$, as $A$ is also topologically closed.
    We conclude that $[\vx] \in [A]$.
\end{proof}

\begin{restatable}{prop}{propositionopenembedding}
\label{prop:openembedding}
    Let $A \sub \mathcal{A}^n$ be open.
    Then $[A]$ is open in $\mathcal{A}^n /\, \mathbb{S}_n$ under the metric topology.
    In other words, the canonical map is open under the metric topology.
\end{restatable}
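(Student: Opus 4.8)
The plan is to establish openness of $[A]$ directly from the metric $d$, by exhibiting an open $d$-ball inside $[A]$ around each of its points. The whole argument rests on the fact that $\mathbb{S}_n$ acts on $\mathcal{A}^n$ by isometries, which is exactly the permutation invariance $\|\pi\vardot\|_{\mathcal{A}^n} = \|\vardot\|_{\mathcal{A}^n}$ already recorded above.

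First I would fix an arbitrary $[\va] \in [A]$ and pick a representative $\va \in A$. Since $A$ is open in $\mathcal{A}^n$, there is $\epsilon > 0$ with the norm-ball $\set{\vz \in \mathcal{A}^n : \|\vz - \va\|_{\mathcal{A}^n} < \epsilon} \sub A$. I then claim that the corresponding metric ball $\set{[\vx] : d([\va],[\vx]) < \epsilon}$ is contained in $[A]$, which immediately gives that $[A]$ is open in the metric topology.

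To verify the claim, take any $[\vx]$ with $d([\va],[\vx]) < \epsilon$. Because $d([\va],[\vx]) = \min_{\pi \in \mathbb{S}_n}\|\va - \pi\vx\|_{\mathcal{A}^n}$ is a minimum over the finite group $\mathbb{S}_n$, it is attained: there is some $\pi \in \mathbb{S}_n$ with $\|\va - \pi\vx\|_{\mathcal{A}^n} < \epsilon$. Hence $\pi\vx$ lies in the chosen norm-ball and therefore in $A$, so $[\vx] = [\pi\vx] \in [A]$. The final assertion that the canonical map is open then follows, since $[A]$ is precisely the image of $A$ under the canonical map.

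I do not expect a genuine obstacle here; the proof is short. The only two points requiring a moment's care are (i) that $d$ is a true minimum rather than an infimum, so that the strict inequality $d([\va],[\vx]) < \epsilon$ actually produces a permutation $\pi$ realizing the bound (this is where finiteness of $\mathbb{S}_n$ enters), and (ii) that membership $[\vx] \in [A]$ only requires some representative of $[\vx]$ to lie in $A$, which is immediate from the definition of the equivalence classes. Both are handled by the observations already made in this subsection.
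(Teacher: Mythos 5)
Your proof is correct and follows essentially the same route as the paper's: exhibit a norm-ball inside $A$ around a representative, show the corresponding $d$-ball lands in $[A]$ by picking the permutation that attains the minimum in $d$ (using finiteness of $\mathbb{S}_n$), and conclude. The only cosmetic difference is that you center the ball at the representative itself, whereas the paper phrases it via a ball $B_\e(\vy)$ containing the point; the substance is identical.
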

\begin{proof}
    Let $[\vx] \in [A]$.
    Because $A$ is open, there is some ball $B_\e(\vy)$ with $\e > 0$ and $\vy \in A$ such that $\vx \in B_\e(\vy) \sub A$.
    Then $[\vx] \in B_\e([\vy])$, since $d([\vx], [\vy])\le \|\vx - \vy\|_{\mathcal{A}^n} < \e$, and we claim that $B_\e([\vy]) \sub [A]$.
    Hence $[\vx] \in B_\e([\vy]) \sub [A]$, so $[A]$ is open.
    
    To show the claim, let $[\vz] \in B_\e([\vy])$.
    Then $d(\pi \vz, \vy) < \e$ for some $\pi \in \mathbb{S}_n$.
    Hence $\pi \vz \in B_\e(\vy) \sub A$, so $\pi \vz \in A$.
    Therefore, $[\vz] = [\pi \vz] \in [A]$.
\end{proof}

\begin{restatable}{prop}{propositionmetrizable}
    The quotient topology on $\mathcal{A}^n/\,\mathbb{S}_n$ induced by the canonical map is metrizable with the metric $d$.
\end{restatable}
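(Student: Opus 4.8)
The plan is to show that the quotient topology $\tau_q$ on $\mathcal{A}^n/\mathbb{S}_n$ coincides with the metric topology $\tau_d$ induced by $d$; since $d$ is a metric by \cref{proposition:quotient_space_metric}, metrizability of $\tau_q$ is then immediate. Write $q\colon \mathcal{A}^n \to \mathcal{A}^n/\mathbb{S}_n$, $\vx \mapsto [\vx]$, for the canonical map, which is surjective by construction. I would prove the two inclusions $\tau_d \subseteq \tau_q$ and $\tau_q \subseteq \tau_d$ separately, each by invoking one of the properties of $q$ already established above.

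For $\tau_d \subseteq \tau_q$, I would simply unwind the definition of the quotient topology: a set $U \subseteq \mathcal{A}^n/\mathbb{S}_n$ lies in $\tau_q$ precisely when $q^{-1}(U)$ is open in $\mathcal{A}^n$. The continuity of $q$ into $(\mathcal{A}^n/\mathbb{S}_n, \tau_d)$, established in the preceding proposition, says exactly that $q^{-1}(U)$ is open for every $\tau_d$-open $U$. Hence every $\tau_d$-open set is $\tau_q$-open, which is the first inclusion.

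For the reverse inclusion $\tau_q \subseteq \tau_d$, let $U \in \tau_q$, so that $A := q^{-1}(U)$ is open in $\mathcal{A}^n$. Since $q$ is surjective we have $q(A) = q(q^{-1}(U)) = U$, and the openness of the canonical map (\cref{prop:openembedding}) then gives that $[A] = q(A) = U$ is $\tau_d$-open. Combining the two inclusions yields $\tau_q = \tau_d$, so $\tau_q$ is metrizable with metric $d$.

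I do not expect any substantial obstacle here: all the analytic content has already been discharged in the earlier propositions (that $d$ is a metric, that $q$ is continuous, and that $q$ is an open map). The only points requiring care are bookkeeping ones\emdash{}matching each topological inclusion to the correct property of $q$, and invoking surjectivity of $q$ to guarantee $q(q^{-1}(U)) = U$ so that the open-map property can be applied to recover $U$ itself rather than merely a subset of it.
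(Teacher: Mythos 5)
Your argument is correct and is essentially the paper's own proof: both directions rest on exactly the same ingredients (continuity of the canonical map for $\tau_d \subseteq \tau_q$, and openness plus surjectivity, via $q(q^{-1}(U)) = U$, for $\tau_q \subseteq \tau_d$). The paper merely packages the two inclusions as the statement that the canonical map is a quotient map under the metric topology and then invokes uniqueness of the quotient topology, which is a cosmetic rather than substantive difference.
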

\begin{proof}
  Since the canonical map is surjective, there exists exactly one topology on $\mathcal{A}^n/\,\mathbb{S}_n$ relative to which the canonical map is a quotient map: the quotient topology \citep{munkres1974topology}. 
  
  Let $p \colon \mathcal{A}^n \to \mathcal{A}^n/\,\mathbb{S}_n$ denote the canonical map. It remains to show that $p$ is a quotient map under the metric topology induced by $d$; that is, we show that $U \subset \mathcal{A}^n/\,\mathbb{S}_n$ is open in $\mathcal{A}^n/\,\mathbb{S}_n$ under the metric topology if and only if $p^{-1}(U)$ is open in $\mathcal{A}^n$. 
  
  Let $p^{-1}(U)$ be open in $\mathcal{A}^n$. We have that $U = p(p^{-1}(U))$, so $U$ is open in $\mathcal{A}^n/\,\mathbb{S}_n$ under the metric topology by \cref{prop:openembedding}. Conversely, if $U$ is open in $\mathcal{A}^n/\,\mathbb{S}_n$ under the metric topology, then $p^{-1}(U)$ is open in $\mathcal{A}^n$ by continuity of the canonical map under the metric topology. 
\end{proof}

\subsection{Embeddings of Sets Into an RKHS}

Whereas $\mathcal{A}$ previously denoted an arbitrary Banach space, in this section we specialize to $\mathcal{A} = \mathcal{X} \times \mathcal{Y}$.
We denote an element in $\mathcal{A}$ by $(\vx, y)$ and an element in $\mathcal{Z}_M = \mathcal{A}^M$ by $((\vx_1,y_1),\ldots, (\vx_M, y_M))$.
Alternatively, we denote $((\vx_1,y_1),\ldots, (\vx_M, y_M))$ by $(\mX, \vy)$ where $\mX = (\vx_1, \ldots, \vx_M) \in \mathcal{X}^M$ and $\vy = (y_1, \ldots, y_M) \in \mathcal{Y}^M$.
We clarify that an element in $\mathcal{Z}_M = \mathcal{A}^M$ is permuted as follows:
for $\pi \in \mathbb{S}_M$,
\[
    \pi (\mX, \vy)
    = \pi ((\vx_1,y_1),\ldots, (\vx_M, y_M))
    = ((\vx_{\pi(1)},y_{\pi(1)}),\ldots, (\vx_{\pi(n)}, y_{\pi(n)}))
    = (\pi \mX, \pi \vy).
\]
Note that permutation-invariant functions on $\mathcal{Z}_M$ are in correspondence to functions on the quotient space induced by the equivalence class of permutations, $\mathcal{Z}_M /\, \sS_m$
The latter is a more natural representation.

\cref{lemma:encoding_varying_size} states that it is possible to homeomorphically embed sets into an RKHS.
This result is key to proving our main result. Before proving \cref{lemma:encoding_varying_size}, we provide several useful results. We begin by demonstrating that an embedding of sets of a fixed size into a RKHS is continuous and injective. 

\begin{restatable}{lem}{compactinjection} \label{lem:compact_injection}
    Consider a collection $\mathcal{Z}'_{M} \sub \mathcal{Z}_M$ that has multiplicity $K$.
    Set
    \[
        \phi:\mathcal{Y} \to \R^{K+1}, \quad
        \phi(y) = (y^0, y^1, \cdots, y^K)
    \]
    and let $\psi$ be an interpolating, continuous positive-definite kernel.
    Define
    \begin{equation}
        \mathcal{H}_{M} = \left\{
            \sum_{i=1}^M \phi(y_i) \psi(\vardot, \vx_i) :
            (\vx_i, y_i)_{i=1}^M \sub \mathcal{Z}'_M
        \right\}
        \subseteq \mathcal{H}^{K+1},
    \end{equation}
    where $\mathcal{H}^{K+1} = \mathcal{H} \times \cdots \times \mathcal{H}$ is the $(K+1)$-dimensional-vector--valued--function Hilbert space constructed from the RKHS $\mathcal{H}$ for which $\psi$ is a reproducing kernel and endowed with the inner product $\langle f, g \rangle_{\mathcal{H}^{K+1}} = \sum_{i=1}^{K+1} \langle f_i, g_i \rangle_{\mathcal{H}}$.
    Then the embedding
    \[
        E_M \colon [\mathcal{Z}'_{M}] \to \mathcal{H}_{M}, \quad
        E_M([(\vx_1, y_1), \ldots, (\vx_M, y_M)])
        = \sum_{i=1}^M \phi(y_i) \psi(\vardot, \vx_i)
    \]
    is injective, hence invertible, and continuous. 
\end{restatable}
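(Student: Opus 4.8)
The plan is to verify three things in turn: that $E_M$ descends to the quotient $[\mathcal{Z}'_M]$, that it is injective, and that it is continuous for the quotient metric $d$. Well-definedness is immediate, since the sum $\sum_{i=1}^M \phi(y_i)\psi(\vardot,\vx_i)$ is unchanged under any permutation of the index $i$, so $E_M$ factors through the canonical map $p\colon\mathcal{Z}'_M\to[\mathcal{Z}'_M]$.

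For continuity I would exploit that $p$ is a quotient map (\cref{app:quotient_space}), so it suffices to show $E_M\comp p$ is continuous on $\mathcal{A}^M=(\mathcal{X}\times\mathcal{Y})^M$. Each summand $(\vx,y)\mapsto\phi(y)\psi(\vardot,\vx)$ is continuous into $\mathcal{H}^{K+1}$: the map $\vx\mapsto\psi(\vardot,\vx)$ is continuous because $\|\psi(\vardot,\vx)-\psi(\vardot,\vx')\|_{\mathcal{H}}^2=\psi(\vx,\vx)-2\psi(\vx,\vx')+\psi(\vx',\vx')\to 0$ by continuity of $\psi$, the feature map $\phi$ is polynomial hence continuous, and scalar multiplication $\R^{K+1}\times\mathcal{H}\to\mathcal{H}^{K+1}$ is jointly continuous (using compactness of $\mathcal{Y}$ to bound $\phi$). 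A finite sum of continuous maps is continuous, so $E_M\comp p$, and hence $E_M$, is continuous.

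Injectivity is the crux. Suppose $E_M(Z)=E_M(Z')$ for $Z=((\vx_i,y_i))$ and $Z'=((\vx_i',y_i'))$. I would first collapse repeated input locations: write the distinct locations of $Z$ as $\vu_1,\ldots,\vu_p$, where $\vu_j$ carries a multiset of outputs $S_j$ with $|S_j|=r_j\le K$, the bound coming from $\operatorname{mult}\mathcal{Z}'_M=K$. Then the $k$-th component of $E_M(Z)$ equals $\sum_{j=1}^p\big(\sum_{y\in S_j}y^k\big)\psi(\vardot,\vu_j)$, and similarly for $Z'$. The interpolating property of $\psi$ forces the kernel sections at distinct locations to be linearly independent: if $\sum_j c_j\psi(\vardot,\vu_j)=0$, pairing with an $f\in\mathcal{H}$ that interpolates prescribed values at the $\vu_j$ and invoking the reproducing property yields each $c_j=0$. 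Applying this to the difference of the $0$-th components, whose coefficients $r_j>0$ are strictly positive, shows the distinct-location sets and their multiplicities coincide; after relabelling, $\vu_j=\vu_j'$ and $r_j=r_j'$ for all $j$.

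It then remains to recover the output multisets. Linear independence applied componentwise gives, for each $j$ and each $k=0,1,\ldots,K$, the equality of power sums $\sum_{y\in S_j}y^k=\sum_{y\in S_j'}y^k$. Since $|S_j|=r_j\le K$, the power sums of orders $1,\ldots,r_j$ determine the elementary symmetric polynomials of $S_j$ by Newton's identities, hence the monic polynomial whose roots are $S_j$, and therefore the multiset $S_j$ itself; thus $S_j=S_j'$. This shows $Z$ and $Z'$ agree as multisets, i.e.\ $[Z]=[Z']$, proving injectivity, while surjectivity onto $\mathcal{H}_M$ holds by construction, so $E_M$ is invertible onto its image. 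I expect the main obstacle to be exactly this injectivity argument: making rigorous the passage from the RKHS identity to a finite linear-independence statement at distinct locations, and aligning the multiplicity bound $K$ with the order-$K$ power features so that the available power sums suffice to reconstruct each output multiset.
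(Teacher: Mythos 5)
Your proof is correct and follows essentially the same route as the paper's: the density channel $\phi_1(y)=y^0=1$ together with the interpolating property pins down the input locations and their multiplicities, and the remaining power-sum components recover each output multiset at every location --- you invoke Newton's identities where the paper cites Lemma~4 of \citet{zaheer2017deep}, which is the same fact. The only presentational difference is in continuity, where you factor through the canonical map and argue summand by summand while the paper expands the $\mathcal{H}^{K+1}$-norm of the difference into kernel Gram forms; both are routine and equivalent.
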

\begin{proof}
    First, we show that $E_M$ is injective.
    Suppose that
    \[
        \sum_{i=1}^M \phi(y_i) \psi(\vardot, \vx_i)
        = \sum_{i=1}^{M} \phi(y'_i) \psi(\vardot, \vx'_i).
    \]
    Denote $\mX = (\vx_1, \ldots, \vx_M)$ and $\vy = (y_1, \ldots, y_M)$, and denote $\mX'$ and $\vy'$ similarly.
    Taking the inner product with any $f \in \mathcal{H}$ on both sides and using the reproducing property of $\psi$, this implies that
    \[
        \sum_{i=1}^M \phi(y_i) f(\vx_i) = \sum_{i=1}^{M} \phi(y'_i) f(\vx'_i)
    \]
    for all $f \in \mathcal{H}$.
    In particular, since by construction $\phi_1(\vardot) = 1$,
    \[
        \sum_{i=1}^M f(\vx_i) = \sum_{i=1}^{M} f(\vx'_i)
    \]
    for all $f \in \mathcal{H}$.
    Using that $\mathcal{H}$ is interpolating, choose a particular $\hat \vx \in \mX \cup \mX'$, and let $f \in \mathcal{H}$ be such that $f(\hat \vx) = 1$ and $f(\vardot) = 0$ at all other $\vx_i$ and $\vx'_i$.
    Then
    \[
        \sum_{i:\vx_i = \hat \vx} 1
        = \sum_{i:\vx'_i = \hat \vx} 1,
    \]
    so the number of such $\hat \vx$ in $\mX$ and the number of such $\hat \vx$ in $\mX'$ are the same.
    Since this holds for every $\hat \vx$, $\mX$ is a permutation of $\mX'$: $\mX = \pi(\mX')$ for some permutation $\pi \in \mathbb{S}_M$.
    Plugging in the permutation, we can write 
    \[
        \sum_{i=1}^M \phi(y_i) f(\vx_i)
        = \sum_{i=1}^{M} \phi(y'_i) f(\vx'_i)
        \overset{(\mX' = \pi^{-1}(\mX))}{=} \sum_{i=1}^{M} \phi(y'_i) f(\vx_{\pi^{-1}(i)})
        \overset{(i \gets \pi^{-1}(i))}{=} \sum_{i=1}^{M} \phi(y'_{\pi(i)}) f(\vx_{i}).
    \]
    Then, by a similar argument, for any particular $\hat \vx$, 
    \[
        \sum_{i:\vx_i = \hat \vx} \phi(y_i)
        = \sum_{i:\vx_i = \hat \vx} \phi(y'_{\pi(i)}).
    \]
    Let the number of terms in each sum equal $S$. Since $\mathcal{Z}_{M}'$ has multiplicity $K$, $S \leq K$. By Lemma 4 from \citet{zaheer2017deep}, the `sum-of-power mapping' from $\{ y_i : \vx_i = \hat \vx \}$ to the first $S+1$ elements of $\sum_{i:\vx_i = \hat \vx} \phi(y_i)$, i.e.~$\left(\sum_{i:\vx_i = \hat \vx} y_i^0, \ldots, \sum_{i:\vx_i = \hat \vx} y_i^S\right)$, is injective. Therefore,
    \[
        (y_i)_{i:\vx_i = \hat \vx}
        \quad\text{is a permutation of}\quad
        (y'_{\pi(i)})_{i:\vx_i = \hat \vx}.
    \]
    Note that $\vx_i = \hat \vx$ for all above $y_i$.
    Furthermore, note that also $\vx'_{\pi(i)} = \vx_i = \hat \vx$ for all above $y'_{\pi(i)}$.
    We may therefore adjust the permutation $\pi$ such that $y_i = y'_{\pi(i)}$ for all $i$ such that $\vx_i = \hat \vx$ whilst retaining that $\vx = \pi(\vx')$.
    Performing this adjustment for all $\hat \vx$, we find that $y = \pi(y')$ and $\vx = \pi(\vx')$.
    
    Second, we show that $E_M$ is continuous.
    Compute 
    \[\begin{aligned}
        &\left\|
            \sum_{i=1}^M \phi(y_i) \psi(\vardot, \vx_i)
            - \sum_{j=1}^{M} \phi(y'_j) \psi(\vardot, \vx'_j)
        \right\|^2_{\mathcal{H}^{K+1}}
        \\ & \quad =
            \sum_{i=1}^{K+1} \left(
                \phi^\top_i(\vy) \psi(\mX, \mX) \phi_i(\vy)
                - 2 \phi^\top_i(\vy) \psi(\mX, \mX') \phi_i(\vy')
                + \phi^\top_i(\vy') \psi(\mX', \mX') \phi_i(\vy')
            \right),
    \end{aligned}\]
    which goes to zero if $[\mX', \vy'] \to [\mX, \vy]$ by continuity of $\psi$.
\end{proof}

Having established the injection, we now show that this mapping is a homeomorphism, i.e.~that the inverse is continuous. This is formalized in the following lemma.

\begin{restatable}{lem}{closedinjection} \label{lem:closed_injection}
    Consider \cref{lem:compact_injection}.
    Suppose that $\mathcal{Z}'_M$ is also topologically closed in $\mathcal{A}^M$ and closed under permutations, and that $\psi$ also satisfies 
    \begin{inlinelist}
        \item $\psi(\vx, \vx') \ge 0$,
        \item $\psi(\vx, \vx) = \sigma^2 > 0$, and
        \item $\psi(\vx, \vx') \to 0$ as $\|\vx\| \to \infty$.
    \end{inlinelist}
    Then $\mathcal{H}_M$ is closed in $\mathcal{H}^{K+1}$ and $E_M^{-1}$ is continuous.
\end{restatable}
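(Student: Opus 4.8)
The plan is to derive both assertions—closedness of $\mathcal{H}_M$ and continuity of $E_M^{-1}$—from a single boundedness principle: whenever a sequence of embeddings $E_M([\mX_n, \vy_n])$ converges in $\mathcal{H}^{K+1}$, the spatial locations $\mX_n$ must remain bounded. This is the only place where the extra hypotheses (i)--(iii) on $\psi$ enter, and it is the crux of the lemma; everything else follows from compactness together with the continuity and injectivity of $E_M$ established in \cref{lem:compact_injection}.

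For the boundedness principle I would focus on the \emph{density channel}, the component of $E_M$ coming from $\phi_1 \equiv 1$: writing $h_n^{(0)} = \sum_{i=1}^M \psi(\vardot, \vx_i^{(n)})$, convergence of $E_M([\mX_n, \vy_n])$ in $\mathcal{H}^{K+1}$ forces $h_n^{(0)} \to g^{(0)}$ in $\mathcal{H}$ for some limit $g^{(0)}$. Two facts then combine. First, for any index $i_0$, evaluating at $\vx_{i_0}^{(n)}$ and using (i) and (ii),
\[
    h_n^{(0)}(\vx_{i_0}^{(n)})
    = \sum_{i=1}^M \psi(\vx_{i_0}^{(n)}, \vx_i^{(n)})
    \ge \psi(\vx_{i_0}^{(n)}, \vx_{i_0}^{(n)})
    = \sigma^2 > 0 .
\]
Second, every $f \in \mathcal{H}$ satisfies $f(\vz) \to 0$ as $\|\vz\| \to \infty$: this holds for finite linear combinations of kernel sections by (iii), and extends to all of $\mathcal{H}$ by density together with the bound $|f(\vz)| \le \|f\|_{\mathcal{H}}\,\|\psi(\vardot, \vz)\|_{\mathcal{H}} = \sigma \|f\|_{\mathcal{H}}$. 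Now suppose some $\|\vx_{i_0}^{(n)}\| \to \infty$ along a subsequence. Since $|h_n^{(0)}(\vx_{i_0}^{(n)}) - g^{(0)}(\vx_{i_0}^{(n)})| \le \sigma\|h_n^{(0)} - g^{(0)}\|_{\mathcal{H}} \to 0$, the lower bound $\sigma^2$ transfers to $g^{(0)}(\vx_{i_0}^{(n)})$, contradicting $g^{(0)}(\vx_{i_0}^{(n)}) \to 0$. Hence all locations stay bounded.

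With boundedness in hand, I would finish as follows. Because $\mathcal{Y}$ is compact and the $\mX_n$ are bounded, the configurations $(\mX_n, \vy_n)$ lie in a bounded subset of $\mathcal{A}^M$; intersecting with the closed, permutation-invariant set $\mathcal{Z}'_M$ and passing to a subsequence yields $(\mX_{n_k}, \vy_{n_k}) \to (\mX^\ast, \vy^\ast) \in \mathcal{Z}'_M$. Continuity of $E_M$ then gives $E_M([\mX_{n_k}, \vy_{n_k}]) \to E_M([\mX^\ast, \vy^\ast])$, so any limit $g$ of a convergent sequence in $\mathcal{H}_M$ equals $E_M([\mX^\ast, \vy^\ast]) \in \mathcal{H}_M$, proving $\mathcal{H}_M$ closed. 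For continuity of $E_M^{-1}$ I would argue by contradiction: if $E_M([\mX_n, \vy_n]) \to E_M([\mX, \vy])$ but $[\mX_n, \vy_n] \not\to [\mX, \vy]$, extract a subsequence bounded away from $[\mX, \vy]$, then (by the boundedness principle and compactness) a further subsequence converging to some $[\mX^\ast, \vy^\ast]$; continuity of $E_M$ forces $E_M([\mX^\ast, \vy^\ast]) = E_M([\mX, \vy])$, and injectivity of $E_M$ gives $[\mX^\ast, \vy^\ast] = [\mX, \vy]$, the desired contradiction.

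The main obstacle is precisely the boundedness principle: without it the domain $[\mathcal{Z}'_M]$ is non-compact (since $\mathcal{X} = \R^d$), so mass could escape to infinity and a convergent sequence of embeddings need not have a convergent—or even bounded—sequence of preimages. Conditions (i)--(iii) are exactly what rule this out, with non-negativity and the positive diagonal producing the uniform lower bound $\sigma^2$ at each data location and the decay at infinity contradicting it.
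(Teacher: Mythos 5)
Your proof is correct and follows essentially the same strategy as the paper's: the density channel together with conditions (i)--(iii) forces the input locations of any convergent sequence of embeddings to remain bounded, after which compactness of the restricted domain delivers both the closedness of $\mathcal{H}_M$ and the continuity of $E_M^{-1}$. The only cosmetic difference is that you obtain the contradiction from the general fact that every element of $\mathcal{H}$ vanishes at infinity (via density of finite combinations of kernel sections), whereas the paper runs a direct $\tfrac{1}{3}\sigma^2$-argument against a fixed member $f_1^{(N)}$ of the sequence, and you replace the compact-to-Hausdorff homeomorphism theorem by an explicit subsequence argument; both routes are valid.
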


\begin{remark}
    To define $\mathcal{Z}'_2$ with multiplicity one, one might be tempted to define
    \[
    \mathcal{Z}'_2 =
        \set{
            ((\vx_1, y_1), (\vx_2, y_2)) \in \mathcal{Z}_2
            : \vx_1 \neq \vx_2
        },
    \]
    which indeed has multiplicity one. 
    Unfortunately, $\mathcal{Z}'_2$ is not closed:
    if $[0, 1] \subseteq \mathcal{X}$ and $[0, 2] \subseteq \mathcal{Y}$,
    then $((0, 1), (1/n, 2))_{n=1}^\infty \subseteq \mathcal{Z}'_2$,
    but $((0, 1), (1/n, 2)) \to ((0, 1), (0, 2)) \notin \mathcal{Z}'_2$, because $0$ then has two observations $1$ and $2$.
    To get around this issue, one can require an arbitrarily small, but non-zero spacing $\epsilon > 0$ between input locations:
    \[
        \mathcal{Z}'_{2,\epsilon}
        = \set{
            ((\vx_1, y_1), (\vx_2, y_2)) \in \mathcal{Z}_2
            : \|\vx_1 - \vx_2\| \ge \epsilon
        }.
    \]
    This construction can be generalized to higher numbers of observations and multiplicities as follows:
    \[
        \mathcal{Z}'_{M,K,\epsilon}
        = \set{
            (\vx_{\pi(i)}, y_{\pi(i)})_{i=1}^M
            \in \mathcal{Z}_{M}
            :
            \|\vx_i - \vx_j\| \ge \epsilon \text{ for $i, j \in [K]$},\,
            \pi \in \mathbb{S}_M
        }.
    \]
\end{remark}

\begin{remark}
    Before moving on to the proof of \cref{lem:closed_injection}, we remark that \cref{lem:closed_injection} would directly follow if $\mathcal{Z}'_M$ were bounded:
    then $\mathcal{Z}'_M$ is compact, so $E_M$ is a continuous, invertible map between a compact space and a Hausdorff space, which means that $E_M^{-1}$ must be continuous.
    The intuition that the result must hold for unbounded $\mathcal{Z}'_M$ is as follows. 
    Since $\phi_1(\vardot)=1$, for every $f \in \mathcal{H}_M$, $f_1$ is a summation of $M$ ``bumps'' (imagine the EQ kernel) of the form $\psi(\vardot, \vx_i)$ placed throughout $\mathcal{X}$.
    If one of these bumps goes off to infinity, then the function cannot uniformly converge pointwise, which means that the function cannot converge in $\mathcal{H}$ (if $\psi$ is sufficiently nice).
    Therefore, if the function does converge in $\mathcal{H}$, $(\vx_i)_{i=1}^M$ must be bounded, which brings us to the compact case.
    What makes this work is the \emph{density channel} $\phi_1(\vardot)=1$, which forces $(\vx_i)_{i=1}^M$ to be well behaved.
    The above argument is formalized in the proof of \cref{lem:closed_injection}.
\end{remark}

\begin{proof}
    Define 
    \[
        \mathcal{Z}_J
        =
            ([-J, J]^d \times \mathcal{Y})^M
            \cap \mathcal{Z}'_M,
    \]
    which is compact in $\mathcal{A}^M$ as a closed subset of the compact set $([-J, J]^d \times \mathcal{Y})^M$.
    We aim to show that $\mathcal{H}_M$ is closed in $\mathcal{H}^{K+1}$ and $E^{-1}$ is continuous.
    To this end, consider a convergent sequence 
    \[
        f^{(n)} = {\conditionaltextstyle\sum_{i=1}^M} \phi(y^{(n)}_i) \psi(\vardot, \vx^{(n)}_i)
        \to f \in \mathcal{H}^{K+1}.
    \]
    Denote $\mX^{(n)} = (\vx^{(n)}_1, \ldots, \vx^{(n)}_M)$ and $\vy^{(n)} = (y^{(n)}_1, \ldots, y^{(n)}_M)$.
    Claim: $(\mX^{(n)})_{n=1}^\infty$ is a bounded sequence, so $(\mX^{(n)})_{n=1}^\infty \sub [-J, J]^{dM}$ for $J$ large enough, which means that $(\mX^{(n)}, \vy^{(n)})_{n=1}^\infty \sub \mathcal{Z}_J$ where $\mathcal{Z}_J$ is compact. Note that $[\mathcal{Z}_J]$ is compact in $\mathcal{A}^M/\,\mathbb{S}_M$ by continuity of the canonical map.
    
    First, we demonstrate that, assuming the claim, $\mathcal{H}_M$ is closed. Note that by boundedness of $(\mX^{(n)}, \vy^{(n)})_{n=1}^\infty$, $(f^{(n)})_{n=1}^\infty$ is in the image of $E_M|_{[\mathcal{Z}_J]} \colon [\mathcal{Z}_J] \to \mathcal{H}_M$.
    By continuity of $E_M|_{[\mathcal{Z}_J]}$ and compactness of $[\mathcal{Z}_J]$, the image of $E_M|_{[\mathcal{Z}_J]}$ is compact and therefore closed, since every compact subset of a metric space is closed.
    Therefore, the image of $E_M|_{[\mathcal{Z}_J]}$ contains the limit $f$.
    Since the image of $E_M|_{[\mathcal{Z}_J]}$ is included in $\mathcal{H}_M$, we have that $f \in \mathcal{H}_M$, which shows that $\mathcal{H}_M$ is closed.
    
    Next, we prove that, assuming the claim, $E_M^{-1}$ is continuous. Consider $E_M|_{[\mathcal{Z}_J]} \colon [\mathcal{Z}_J] \to E_M([\mathcal{Z}_J])$ restricted to its image.
    Then $(E_M|_{[\mathcal{Z}_J]})^{-1}$ is continuous, because a continuous bijection from a compact space to a metric space is a homeomorphism.
    Therefore
    \[
        E_M^{-1}(f^{(n)})
        = (\mX^{(n)}, \vy^{(n)})
        = (E_M|_{[\mathcal{Z}_J]})^{-1}(f^{(n)})
        \to (E_M|_{[\mathcal{Z}_J]})^{-1}(f)
        = (\mX, \vy).
    \]
    By continuity and invertibility of $E_M$, then $f^{(n)} \to E_M(\mX, \vy)$, so $E_M(\mX, \vy) = f$  by uniqueness of limits.
    We conclude that $E_M^{-1}(f^{(n)}) \to E_M^{-1}(f)$, which means that $E_M^{-1}$ is continuous.
    
    It remains to show the claim. Let $f_1$ denote the first element of $f$, i.e.~the density channel.
    Using the reproducing property of $\psi$,
    \[
        |f^{(n)}_1(\vx) - f_1(\vx)|
        = |\langle\psi(\vx, \vardot), f^{(n)}_1 - f_1\rangle|
        \le \|\psi(\vx, \vardot)\|_{\mathcal{H}} \|f_1^{(n)} - f_1\|_{\mathcal{H}}
        = \sigma \|f_1^{(n)} - f_1\|_{\mathcal{H}},
    \]
    so $f_1^{(n)} \to f_1$ in $\mathcal{H}$ means that it does so uniformly pointwise (over $\vx$).
    Hence, we can let $N \in \N$ be such that $n \ge N$ implies that $|f^{(n)}_1(\vx) - f_1(\vx)| < \tfrac13 \sigma^2$ for all $\vx$.
    Let $R$ be such that $|\psi(\vx, \vx^{(N)}_i)| < \tfrac13 \sigma^2 / M$ for $\|\vx\| \ge R$ and all $i \in [M]$. 
    Then, for $\|\vx\| \ge R$,
    \[
        |f_1^{(N)}(\vx)|
        \le {\conditionaltextstyle\sum_{i=1}^M} |\psi(\vx, \vx^{(N)}_i)|
        < \tfrac13 \sigma^2
        \implies
        |f_1(\vx)| \le |f^{(N)}_1(\vx)| + |f^{(N)}_1(\vx) - f_1(\vx)| < \tfrac23 \sigma^2.
    \]
    At the same time, by pointwise non-negativity of $\psi$, we have that
    \[
        f_1^{(n)}(\vx^{(n)}_i)
        = {\textstyle\sum_{j=1}^M} \psi(\vx^{(n)}_j, \vx^{(n)}_i)
        \ge \psi(\vx^{(n)}_i, \vx^{(n)}_i)
        = \sigma^2.
    \]
    Towards contradiction, suppose that $(\mX^{(n)})_{n=1}^\infty$ is unbounded. 
    Then $(\vx^{(n)}_i)_{n=1}^\infty$ is unbounded for some $i \in [M]$. Therefore, $\|\vx_i^{(n)}\| \ge R$ for some $n \ge N$, so
    \[
        \tfrac23 \sigma^2
        > |f_1(\vx_i^{(n)})|
        \ge |f^{(n)}_1(\vx_i^{(n)})| - |f^{(n)}_1(\vx_i^{(n)}) - f_1(\vx_i^{(n)})|
        \ge \sigma^2 - \tfrac13 \sigma^2
        = \tfrac23 \sigma^2,
    \]
    which is a contradiction.
\end{proof}

The following lemma states that we may construct an encoding for sets containing no more than $M$ elements into a function space, where the encoding is injective and every restriction to a fixed set size is a homeomorphism. 

\begin{restatable}{lem}{lemmavaryingsize}
\label{lemma:encoding_varying_size}
    For every $m \in [M]$, consider a collection $\mathcal{Z}'_m \subseteq \mathcal{Z}_{m}$ that
    \begin{inlinelist}
        \item has multiplicity $K$,
        \item is topologically closed, and
        \item is closed under permutations.
    \end{inlinelist}
    Set
    \[
        \phi:\mathcal{Y} \to \R^{K+1}, \quad
        \phi(y) = (y^0, y^1, \cdots, y^K)
    \]
    and let $\psi$ be an interpolating, continuous positive-definite kernel that satisfies
    \begin{inlinelist}
        \item $\psi(\vx, \vx') \ge 0$,
        \item $\psi(\vx, \vx) = \sigma^2 > 0$, and
        \item $\psi(\vx, \vx') \to 0$ as $\|\vx\| \to \infty$.
    \end{inlinelist}
    Define
    \begin{equation}
        \mathcal{H}_{m} = \left\{
            \sum_{i=1}^m \phi(y_i) \psi(\vardot, \vx_i) :
            (\vx_i, y_i)_{i=1}^m \sub \mathcal{Z}_m'
        \right\}
        \subseteq \mathcal{H}^{K+1},
    \end{equation}
    where $\mathcal{H}^{K+1} = \mathcal{H} \times \cdots \times \mathcal{H}$ is the
    $(K+1)$-dimensional-vector--valued--function Hilbert space constructed from the RKHS $\mathcal{H}$ for which $\psi$ is a reproducing kernel and endowed with the inner product $\langle f, g \rangle_{\mathcal{H}^{K+1}} = \sum_{i=1}^{K+1} \langle f_i, g_i \rangle_{\mathcal{H}}$.
    Denote
    \[
        [\mathcal{Z}'_{\le M}] = \bigcup_{m=1}^M [\mathcal{Z}'_m]
        \quad\text{and}\quad
        \mathcal{H}_{\le M} = \bigcup_{m=1}^M \mathcal{H}_m.
    \]
    Then $(\mathcal{H}_m)_{m=1}^M$ are pairwise disjoint. It follows that the embedding $E$
    \[
        E\colon
            [\mathcal{Z}'_{\le M}]
            \to
            \mathcal{H}_{\le M},
        \quad
        E([Z]) = E_m([Z]) \quad \text{if} \quad [Z] \in [\mathcal{Z}'_m]
    \]
    is injective, hence invertible.
    Denote this inverse by $E^{-1}$, where $E^{-1}(f) = E_m^{-1}(f)$ if $f \in \mathcal{H}_m$.
\end{restatable}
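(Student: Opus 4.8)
The plan is to reduce the statement to two separate claims and dispatch them in turn: first, that the images $\mathcal{H}_1, \ldots, \mathcal{H}_M$ are pairwise disjoint, and second, that the glued map $E$ is injective. Note that $E$ is automatically well defined on its domain, because the pieces $[\mathcal{Z}'_m]$ are already pairwise disjoint: an element of $\mathcal{Z}_m = (\mathcal{X}\times\mathcal{Y})^m$ is an $m$-tuple, so a set of cardinality $m$ simply cannot also be a set of cardinality $m' \ne m$. Thus all the content lives on the range side, and the crux is the disjointness of the $\mathcal{H}_m$; once that is in hand, the piecewise definition of $E^{-1}$ is unambiguous and injectivity of $E$ follows quickly.

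First I would prove disjointness using only the density channel. Suppose $f \in \mathcal{H}_m \cap \mathcal{H}_{m'}$, so that $f$ admits two representations $\sum_{i=1}^m \phi(y_i)\psi(\vardot, \vx_i) = \sum_{j=1}^{m'} \phi(y'_j)\psi(\vardot, \vx'_j)$. Because $\phi_1 \equiv 1$ by construction, the first components agree: $\sum_{i=1}^m \psi(\vardot, \vx_i) = \sum_{j=1}^{m'} \psi(\vardot, \vx'_j)$. Taking the inner product in $\mathcal{H}$ against an arbitrary $g$ and using the reproducing property gives $\sum_{i=1}^m g(\vx_i) = \sum_{j=1}^{m'} g(\vx'_j)$ for all $g \in \mathcal{H}$. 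Now, exactly as in the injectivity half of \cref{lem:compact_injection}, I would use that $\mathcal{H}$ is interpolating: for each distinct location $\hat\vx$ among the $\vx_i$ and $\vx'_j$, choose $g$ equal to $1$ at $\hat\vx$ and $0$ at the remaining distinct locations, which forces $|\set{i : \vx_i = \hat\vx}| = |\set{j : \vx'_j = \hat\vx}|$. Hence the two multisets of input locations coincide; in particular their total sizes are equal, giving $m = m'$. This establishes $\mathcal{H}_m \cap \mathcal{H}_{m'} = \varnothing$ for $m \ne m'$.

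With disjointness in place, injectivity of $E$ is immediate: if $E([Z]) = E([Z'])$ with $[Z] \in [\mathcal{Z}'_m]$ and $[Z'] \in [\mathcal{Z}'_{m'}]$, then the common value lies in $\mathcal{H}_m \cap \mathcal{H}_{m'}$, so disjointness forces $m = m'$, and then injectivity of the single-size embedding $E_m$ (\cref{lem:compact_injection}) gives $[Z] = [Z']$. This also makes the piecewise inverse $E^{-1}$ well defined. I expect the main (though mild) obstacle to be recognizing that the set cardinality $m$ can be read off from $f$ using only the density channel, so that no new work is needed for mismatched sizes; the argument is then essentially the first half of \cref{lem:compact_injection} reused verbatim. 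It is worth noting that the extra kernel conditions (nonnegativity, $\psi(\vx,\vx)=\sigma^2$, decay at infinity) are not needed here---they are required only for the homeomorphism property in \cref{lem:closed_injection}---so disjointness and injectivity rest solely on $\mathcal{H}$ being interpolating and on $\phi_1 \equiv 1$.
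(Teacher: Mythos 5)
Your proposal is correct and follows essentially the same route as the paper: reduce everything to pairwise disjointness of the $\mathcal{H}_m$, read off the cardinality $m$ from the density channel $\phi_1 \equiv 1$ via the reproducing property and the interpolating assumption on $\mathcal{H}$ (reusing the injectivity argument of \cref{lem:compact_injection}), and then conclude injectivity of $E$ from that of each $E_m$. Your observations that well-definedness on the domain side is automatic and that the extra kernel conditions are only needed for \cref{lem:closed_injection} are both accurate.
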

\begin{proof}
    Recall that $E_m$ is injective for every $m \in [M]$. Hence, to demonstrate that $E$ is injective it remains to show that $(\mathcal{H}_m)_{m=1}^M$ are pairwise disjoint.
    To this end, suppose that
    \[
        \sum_{i=1}^m \phi(y_i) \psi(\vardot, \vx_i)
        = \sum_{i=1}^{m'} \phi(y'_i) \psi(\vardot, \vx'_i)
    \]
    for $m \neq m'$.
    Then, by arguments like in the proof of \cref{lem:compact_injection},
    \[
        \sum_{i=1}^m \phi(y_i)
        = \sum_{i=1}^{m'} \phi(y'_i).
    \]
    Since $\phi_1(\vardot)=1$, this gives $m = m'$, which is a contradiction.
    Finally, by repeated application of \cref{lem:closed_injection}, $E^{-1}_m$ is continuous for every $m \in [M]$.
\end{proof}

\begin{restatable}{lem}{lemmatotalcontinuity}
\label{lemma:total_continuity}
    Let $\Phi\colon [\mathcal{Z}'_{\le M}] \to C_b(\mathcal{X}, \mathcal{Y})$ be a map from $[\mathcal{Z}'_{\le M}]$ to $C_b(\mathcal{X}, \mathcal{Y})$, the space of continuous bounded functions from $\mathcal{X}$ to $\mathcal{Y}$, such that every restriction $\Phi|_{[\mathcal{Z}'_m]}$ is continuous, and 
    let $E$ be from \cref{lemma:encoding_varying_size}.
    Then
    \[
        \Phi \comp E^{-1}\colon \mathcal{H}_{\le M} \to C_b(\mathcal{X}, \mathcal{Y})
    \]
    is continuous.
\end{restatable}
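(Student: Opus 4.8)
The plan is to exploit the fact that $\mathcal{H}_{\le M}$ decomposes into the finitely many pieces $\mathcal{H}_1, \ldots, \mathcal{H}_M$, each of which is closed, and to reduce the claim to continuity of $\Phi \comp E^{-1}$ on each individual piece. On the piece $\mathcal{H}_m$, the inverse $E^{-1}$ coincides with $E_m^{-1}$ by construction in \cref{lemma:encoding_varying_size}, so the restriction $(\Phi \comp E^{-1})|_{\mathcal{H}_m}$ equals $\Phi|_{[\mathcal{Z}'_m]} \comp E_m^{-1}$. Here $E_m^{-1}$ is continuous by \cref{lem:closed_injection}, and $\Phi|_{[\mathcal{Z}'_m]}$ is continuous by hypothesis, so the composition is continuous on $\mathcal{H}_m$ for every $m \in [M]$.

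The remaining issue is that continuity on each piece does not in general imply continuity on the union, so I would supply the missing topological input. First I would note that each $\mathcal{H}_m$ is closed in $\mathcal{H}^{K+1}$ (again by \cref{lem:closed_injection}), hence closed in the subspace $\mathcal{H}_{\le M}$. Since $\mathcal{H}_{\le M}$ is covered by the \emph{finitely many} closed sets $\mathcal{H}_1, \ldots, \mathcal{H}_M$, the pasting (gluing) lemma for finite closed covers applies: for any closed $C \subseteq C_b(\mathcal{X}, \mathcal{Y})$, the preimage $(\Phi \comp E^{-1})^{-1}(C) = \bigcup_{m=1}^M ((\Phi \comp E^{-1})|_{\mathcal{H}_m})^{-1}(C)$ is a finite union of sets, each closed in the corresponding $\mathcal{H}_m$ and therefore closed in $\mathcal{H}_{\le M}$, so the preimage is closed. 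This yields continuity of $\Phi \comp E^{-1}$. The pairwise disjointness of the pieces established in \cref{lemma:encoding_varying_size} makes the usual agreement-on-overlaps hypothesis of the pasting lemma vacuous. Alternatively, since $\mathcal{H}^{K+1}$ is a metric space, one may argue sequentially: given $f^{(n)} \to f$ with $f \in \mathcal{H}_{m^*}$, closedness of each $\mathcal{H}_m$ together with their disjointness forces all but finitely many $f^{(n)}$ into $\mathcal{H}_{m^*}$, whence continuity on $\mathcal{H}_{m^*}$ closes the argument.

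I expect the main (though modest) obstacle to be exactly this passage from piecewise to global continuity: one might naively hope that continuity of each restriction suffices, but this fails without control on the cover. The decisive inputs are the closedness of the $\mathcal{H}_m$ and the finiteness of the cover, supplied respectively by \cref{lem:closed_injection} and \cref{lemma:encoding_varying_size}; together these are precisely what licenses the finite closed pasting lemma and thereby the conclusion.
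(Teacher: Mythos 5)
Your proof is correct and follows essentially the same route as the paper's: continuity of each restriction $\Phi|_{[\mathcal{Z}'_m]} \comp E_m^{-1}$ on the closed pieces $\mathcal{H}_m$, followed by the pasting lemma for the finite closed cover of $\mathcal{H}_{\le M}$. You merely spell out the preimage-of-closed-sets argument that the paper delegates to a citation of the pasting lemma, and you correctly identify closedness (from \cref{lem:closed_injection}) and finiteness of the cover as the decisive inputs.
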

\begin{proof}
    Recall that, due to \cref{lem:compact_injection}, for every $m \in [M]$, $E^{-1}_{m}$ is continuous and has image $[\mathcal{Z}'_m]$.
    By the continuity of $\Phi|_{[\mathcal{Z}'_m]}$, then $\Phi|_{[\mathcal{Z}'_m]} \comp E^{-1}_{m}$ is continuous for every $m \in [M]$.
    Since $\Phi \comp E^{-1}|_{\mathcal{H}_m} = \Phi|_{[\mathcal{Z}'_m]} \comp E^{-1}_{m}$ for all $m \in [M]$, we have that $\Phi \comp E^{-1}|_{\mathcal{H}_m}$ is continuous for all $m \in [M]$.
    Therefore, as $\mathcal{H}_m$ is closed in $\mathcal{H}_{\le M}$ for every $m \in [M]$, the pasting lemma \citep{munkres1974topology} yields that $\Phi \comp E^{-1}$ is continuous.
\end{proof}

From here on, we let $\psi$ be a stationary kernel, which means that it only depends on the difference of its arguments and can be seen as a function $\mathcal{X} \to \R$.

\subsection{Proof of \cref{thm:representation}}
\label{app:representation_thm_proof}

With the above results in place, we are finally ready to prove our central result, \cref{thm:representation}.
\begin{theorem}
    For every $m \in [M]$, consider a collection $\mathcal{Z}'_m \subseteq \mathcal{Z}_{m}$ that
    \begin{inlinelist}
        \item has multiplicity $K$,
        \item is topologically closed, 
        \item is closed under permutations, and
        \item is closed under translations.
    \end{inlinelist}
    Set
    \[
        \phi:\mathcal{Y} \to \R^{K+1}, \quad
        \phi(y) = (y^0, y^1, \cdots, y^K)
    \]
    and let $\psi$ be an interpolating, continuous positive-definite kernel that satisfies
    \begin{inlinelist}
        \item $\psi(\vx, \vx') \ge 0$,
        \item $\psi(\vx, \vx) = \sigma^2 > 0$, and
        \item $\psi(\vx, \vx') \to 0$ as $\|\vx\| \to \infty$.
    \end{inlinelist}
    Define
    \begin{equation}
        \mathcal{H}_{m} = \left\{
            \sum_{i=1}^m \phi(y_i) \psi(\vardot, \vx_i) :
            (\vx_i, y_i)_{i=1}^m \sub \mathcal{Z}_m'
        \right\}
        \subseteq \mathcal{H}^{K+1},
    \end{equation}
    where $\mathcal{H}^{K+1} = \mathcal{H} \times \cdots \times \mathcal{H}$ is the
    $(K+1)$-dimensional-vector--valued--function Hilbert space constructed from the RKHS $\mathcal{H}$ for which $\psi$ is a reproducing kernel and endowed with the inner product $\langle f, g \rangle_{\mathcal{H}^{K+1}} = \sum_{i=1}^{K+1} \langle f_i, g_i \rangle_{\mathcal{H}}$.
    Denote
    \[
        \mathcal{Z}'_{\le M} = \bigcup_{m=1}^M \mathcal{Z}'_m
        \quad\text{and}\quad
        \mathcal{H}_{\le M} = \bigcup_{m=1}^M \mathcal{H}_m.
    \]
    Then a function $\Phi\colon \mathcal{Z}'_{\le M} \to C_b(\mathcal{X}, \mathcal{Y})$ satisfies
    \begin{inlinelist}
        \item continuity of the restriction $\Phi|_{\mathcal{Z}_m}$ for every $m \in [M]$,
        \item permutation invariance (\cref{property:sn_invariance}), and
        \item translation equivariance (\cref{property:translation_equivariance})
    \end{inlinelist}
    if and only if it has a representation of the form
    \[\conditionaltextstyle
        \Phi(Z) = \rho\left(
            E(Z)
        \right), \quad
        E((\vx_1, y_1), \ldots, (\vx_m, y_m)) = \sum_{i=1}^m \phi(y_i) \psi(\vardot - \vx_i)
    \]
    where $\rho\colon \mathcal{H}_{\le M}\to C_b(\mathcal{X}, \mathcal{Y})$ is continuous and translation equivariant.
\end{theorem}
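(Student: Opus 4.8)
The plan is to prove the two directions separately and to treat the statement as an assembly of the homeomorphism machinery already established in \cref{lem:compact_injection,lem:closed_injection,lemma:encoding_varying_size,lemma:total_continuity}; the heavy analytic lifting lives in those lemmas, so the work remaining in the theorem itself is bookkeeping plus one genuinely new ingredient. That ingredient is the fact that the embedding $E$ intertwines the two group actions. Because $\psi$ is now stationary, I would first record the identity
\[
    E(T_\vtau Z) = \sum_{i=1}^m \phi(y_i)\,\psi(\vardot - (\vx_i + \vtau)) = \sum_{i=1}^m \phi(y_i)\,\psi((\vardot - \vtau) - \vx_i) = T'_\vtau E(Z),
\]
so that $E$ is itself translation equivariant; it is also manifestly permutation invariant (being a sum over the elements of $Z$) and continuous on each fixed size by \cref{lem:compact_injection}. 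This identity is the hinge for both directions.

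For the ``if'' direction, suppose $\Phi = \rho \comp E$ with $\rho$ continuous and translation equivariant. Permutation invariance of $\Phi$ is immediate from that of $E$. Continuity of each restriction $\Phi|_{\mathcal{Z}_m}$ follows by composing the continuous $E_m$ (\cref{lem:compact_injection}) with the continuous $\rho$. Translation equivariance follows by chaining the intertwining identity with that of $\rho$: $\Phi(T_\vtau Z) = \rho(T'_\vtau E(Z)) = T'_\vtau \rho(E(Z)) = T'_\vtau \Phi(Z)$. This direction is routine.

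For the ``only if'' direction, the natural construction is $\rho := \bar\Phi \comp E^{-1}$ on $\mathcal{H}_{\le M}$, where $\bar\Phi$ is the descent of $\Phi$ to the quotient $[\mathcal{Z}'_{\le M}]$ (well defined by permutation invariance) and $E^{-1}$ exists by the injectivity in \cref{lemma:encoding_varying_size}. Then $\rho \comp E = \Phi$ by construction, and continuity of $\rho$ is exactly the content of \cref{lemma:total_continuity}. The remaining obligation, and the main obstacle, is translation equivariance of $\rho$. I would first verify that $T'_\vtau$ maps $\mathcal{H}_{\le M}$ into itself: writing $h = E([Z])$, the intertwining identity gives $T'_\vtau h = E(T_\vtau [Z])$, and $T_\vtau Z$ remains in $\mathcal{Z}'_m$ precisely because each $\mathcal{Z}'_m$ is closed under translations (hypothesis (iv)). With this in hand, $\rho(T'_\vtau h) = \bar\Phi(E^{-1}(E(T_\vtau[Z]))) = \Phi(T_\vtau Z) = T'_\vtau \Phi(Z) = T'_\vtau \rho(h)$. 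The delicate point is ensuring that the group actions are compatible under $E^{-1}$ and that closure under translations is exactly what keeps $E^{-1}(T'_\vtau h)$ inside the domain; stationarity of $\psi$ and hypothesis (iv) are the two ingredients that make this step go through.
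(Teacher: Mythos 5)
Your proposal is correct and follows essentially the same route as the paper: sufficiency by direct verification using stationarity of $\psi$ and equivariance of $\rho$, and necessity by descending $\Phi$ to the quotient, setting $\rho = \Phi \comp E^{-1}$, and invoking \cref{lemma:encoding_varying_size,lemma:total_continuity} for invertibility and continuity. Your explicit intertwining identity $E(T_\vtau Z) = T'_\vtau E(Z)$ and the check that $T'_\vtau$ preserves $\mathcal{H}_{\le M}$ merely unpack what the paper compresses into ``$E^{-1}$ is translation equivariant because $\psi$ is stationary.''
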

\begin{proof}[Proof of sufficiency.]
    To begin with, note that permutation invariance (\cref{property:sn_invariance}) and translation equivariance (\cref{property:translation_equivariance}) for $\Phi$ are well defined, because $\mathcal{Z}'_{\le M}$ is closed under permutations and translations by assumption.
    First, $\Phi$ is permutation invariant, because addition is commutative and associative.
    Second, that $\Phi$ is translation equivariant (\cref{property:translation_equivariance}) follows from a direct verification and that $\rho$ is also translation equivariant:
    \[\begin{aligned}
        \Phi(T_\vtau Z)
        &= \rho\left(
            \sum_{i=1}^M \phi(y_i) \psi(\vardot - (\vx_i + \vtau))
        \right)  \\
        &= \rho\left(
            \sum_{i=1}^M \phi(y_i) \psi((\vardot - \vtau) - \vx_i)
        \right) \\
        &= \rho\left(
            \sum_{i=1}^M \phi(y_i) \psi(\vardot - \vx_i)
        \right)(\cdot - \vtau) \\
        &= \Phi(Z)(\cdot - \vtau) \\
        & = T'_\vtau \Phi(Z).
    \end{aligned}\]
\end{proof}
\begin{proof}[Proof of necessity.]
    Our proof follows the strategy used by \citet{zaheer2017deep,wagstaff2019limitations}.
    To begin with, since $\Phi$ is permutation invariant (\cref{property:sn_invariance}), we may define
    \[
        \Phi\colon \bigcup_{m=1}^M [\mathcal{Z}'_{m}] \to C_b(\mathcal{X}, \mathcal{Y}), \quad
        \Phi(Z) = \Phi([Z]),
    \]
    for which we verify that every restriction $\Phi|_{[\mathcal{Z}'_m]}$ is continuous.
    By invertibility of $E$ from \cref{lemma:encoding_varying_size}, we have $[Z] = E^{-1}(E([Z]))$.
    Therefore,
    \[
        \Phi(Z)
        = \Phi([Z])
        = \Phi(E^{-1}(E([Z])))
        = (\Phi \comp E^{-1})\left(
            \sum_{i=1}^M \phi(y_i) \psi(\vardot - \vx_i)
        \right).
    \]
    Define $\rho \colon \mathcal{H}_{\le M} \to C_b(\mathcal{X}, \mathcal{Y})$ by $\rho = \Phi \comp E^{-1}$.
    First, $\rho$ is continuous by \cref{lemma:total_continuity}.
    Second, $E^{-1}$ is translation equivariant, because $\psi$ is stationary.
    Also, by assumption $\Phi$ is translation equivariant (\cref{property:translation_equivariance}).
    Thus, their composition $\rho$ is also translation equivariant.
\end{proof}

\begin{remark}
    The function $\rho \colon \mathcal{H}_{\le M} \to C_b(\mathcal{X}, \mathcal{Y})$ may be continuously extended to the entirety of $\mathcal{H}^{K+1}$ using a generalisation of the Tietze Extension Theorem by \citet{dugundji1951extension}.
    There are variants of Dugundji's Theorem that also preserve translation equivariance.
\end{remark}

\clearpage
\newpage

\section{Baseline Neural Process Models}
\label{app:base_lines}

In both our 1d and image experiments, our main comparison is to conditional neural process models. In particular, we compare to a vanilla CNP (1d only; \citet{garnelo2018conditional}) and an \anp{} \citep{kim2018attentive}. Our architectures largely follow the details given in the relevant publications.

\paragraph{CNP baseline.}
Our baseline CNP follows the implementation provided by the authors.\footnote{https://github.com/deepmind/neural-processes} The encoder is a 3-layer MLP with 128 hidden units in each layer, and \textsc{ReLU} non-linearities. The encoder embeds every context point into a representation, and the representations are then averaged across each context set. Target inputs are then concatenated with the latent representations, and passed to the decoder. The decoder follows the same architecture, outputting mean and standard deviation channels for each input. 

\paragraph{Attentive CNP baseline.}
The \anp{} we use corresponds to the deterministic path of the model described by \cite{kim2018attentive} for image experiments.
Namely, an encoder first embeds each context point $c$ to a latent representation $(\mathbf{x}^{(c)}, \mathbf{y}^{(c)}) \mapsto \mathbf{r}_{xy}^{(c)} \in \mathbb{R}^{128}$. For the image experiments, this is achieved using a 2-hidden layer MLP of hidden dimensions $128$. For the 1d experiments, we use the same encoder as the CNP above. 
Every context point then goes through two stacked self-attention layers.
Each self-attention layer is implemented with an 8-headed attention, a skip connection, and two layer normalizations (as described in \cite{parmar2018image}, modulo the dropout layer).
To predict values at each target point $t$, we embed $\mathbf{x}^{(t)} \mapsto \mathbf{r}_x^{(t)}$ and $\mathbf{x}^{(c)} \mapsto \mathbf{r}_x^{(c)}$ using the same single hidden layer MLP of dimensions $128$.
A target representation $\mathbf{r}_{xy}^{(t)}$ is then estimated by applying cross-attention (using an 8-headed attention described above) with keys $\mathrm{K} \coloneqq \{\mathbf{r}_x^{(c)}\}_{c=1}^C$, values $\mathrm{V} \coloneqq \{\mathbf{r}_{xy}^{(c)}\}_{c=1}^C$, and query $\mathbf{q} \coloneqq \mathbf{r}_{x}^{(t)}$.
Given the estimated target representation $\hat{\mathbf{r}}_{xy}^{(t)}$, the conditional predictive posterior is given by a Gaussian pdf with diagonal covariance parametrised by $(\vmu^{(t)}, \vsigma_{\text{pre}}^{(t)}) = \mathrm{decoder}(\mathbf{r}_{xy}^{(t)})$ where $\vmu^{(t)}, \vsigma_{\text{pre}}^{(t)} \in \mathbb{R}^3$ and $\mathrm{decoder}$ is a 4 hidden layer MLP with $64$ hidden units per layer for the images, and the same decoder as the CNP for the 1d experiments.

Following \cite{le2018empirical}, we enforce we set a minimum standard deviation $\vsigma_{\text{min}}^{(t)} = [0.1; 0.1; 0.1]$ to avoid infinite log-likelihoods by using the following post-processed standard deviation:

\begin{equation}\label{eqn:sigma_pos}
  \vsigma_{\text{post}}^{(t)} = 0.1 \vsigma_{\text{min}}^{(t)} + (1-0.1)  \log(1 + \exp(\vsigma_{\text{pre}}^{(t)}))  
\end{equation}

\clearpage
\newpage

\section{1-Dimensional Experiments}
\label{app:1d_experiments}

In this section, we give details regarding our experiments for the 1d data. We begin by detailing model architectures, and then provide details for the data generating processes and training procedures. The density at which we evaluate the grid differs from experiment to experiment, and so the values are given in the relevant subsections. In all experiments, the weights are optimized using Adam \citep{kingma2014adam} and weight decay of $10^{-5}$ is applied to all model parameters. The learning rates are specified in the following subsections.

\subsection{CNN Architectures}
\label{app:architecture}

Throughout the experiments (\cref{sec:synth_gp,sec:plasticc,sec:predator_prey}), we consider two models: \convcp{} (which utilizes a smaller architecture), and \convcp{}XL (with a larger architecture). For all architectures, the input kernel $\psi$ was an EQ (exponentiated quadratic) kernel with a learnable length scale parameter, as detailed in \cref{sec:convcps_practice}, as was the kernel for the final output layer $\psi_{\rho}$.
When dividing by the density channel, we add $\varepsilon=10^{-8}$ to avoid numerical issues.
The length scales for the EQ kernels are initialized to twice the spacing $1/\gamma^{1/d}$ between the discretization points $(\vt_i)_{i=1}^T$, where $\gamma$ is the density of these points and $d$ is the dimensionality of the input space $\mathcal{X}$.

Moreover, we emphasize that the size of the receptive field is a product of the width of the CNN filters and the spacing between the discretization points.
Consequently, for a fixed width kernel of the CNN, as the number of discretization points increases, the receptive field size decreases. One potential improvement that was not employed in our experiments, is the use of depthwise-separable convolutions \citep{chollet2017xception}. These dramatically reduce the number of parameters in a convolutional layer, and can be used to increase the CNN filter widths, thus allowing one to increase the number of discretization points without reducing the receptive field.

The architectures for \convcp{} and \convcp{}XL are described below.

\paragraph{\convcp{}.} For the 1d experiments, we use a simple, 4-layer convolutional architecture, with \textsc{ReLU} nonlinearities. The kernel size of the convolutional layers was chosen to be 5, and all employed a stride of length 1 and zero padding of 2 units. The number of channels per layer was set to $\left[16, 32, 16, 2 \right]$, where the final channels where then processed by the final, EQ-based layer of $\rho$ as mean and standard deviation channels. We employ a \textsc{softplus} nonlinearity on the standard deviation channel to enforce positivity. This model has 6,537 parameters.

\paragraph{\convcp{}XL.} Our large architecture takes inspiration from UNet \citep{ronneberger2015u}. We employ a 12-layer architecture with skip connections. The number of channels is doubled every layer for the first 6 layers, and halved every layer for the final 6 layers. We use concatenation for the skip connections. The following describes which layers are concatenated, where $L_i \leftarrow \left[L_j, L_k \right]$ means that the input to layer $i$ is the concatenation of the activations of layers $j$ and $k$:
\begin{itemize}
    \item $L_8 \leftarrow \left[L_5, L_7 \right]$,
    \item $L_9 \leftarrow \left[L_4, L_8 \right]$,
    \item $L_{10} \leftarrow \left[L_3, L_9 \right]$,
    \item $L_{11} \leftarrow \left[L_2, L_{10} \right]$,
    \item $L_{12} \leftarrow \left[L_1, L_{11} \right]$.
\end{itemize}
Like for the smaller architecture, we use \textsc{ReLU} nonlinearities, kernels of size 5, stride 1, and zero padding for two units on all layers.

\subsection{Synthetic 1d Experimental Details and Additional Results}
\label{app:synth_1d_experiments}
\begin{figure}[htb]
     \rotatebox{90}{\hspace{-2mm}    \tiny \convcp}
\begin{subfigure}{0.3\textwidth}
  \includegraphics[width=\linewidth]{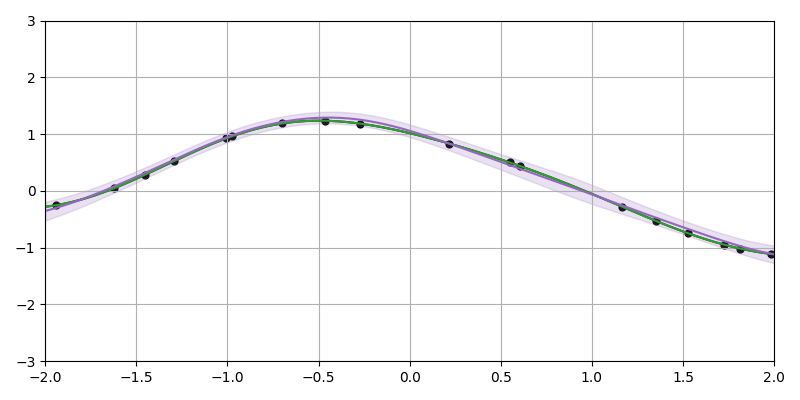}
  \label{fig:eq_convnp_standard_app}
\end{subfigure}\hfil 
\begin{subfigure}{0.3\textwidth}
  \includegraphics[width=\linewidth]{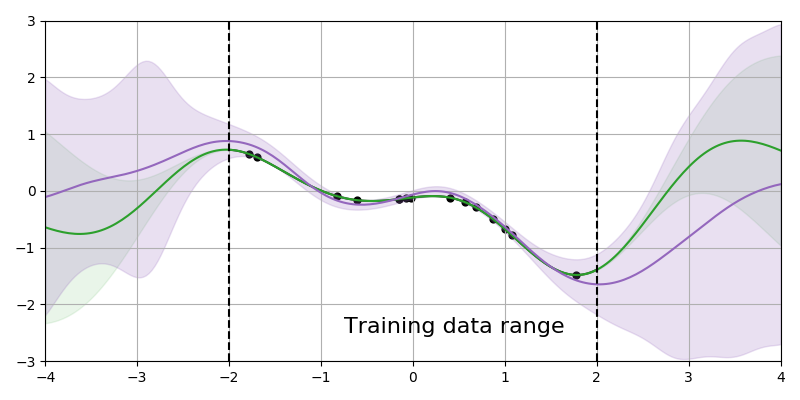}
  \label{fig:eq_convnp_extrapolation_no_data_app}
\end{subfigure}\hfil 
\begin{subfigure}{0.3\textwidth}
  \includegraphics[width=\linewidth]{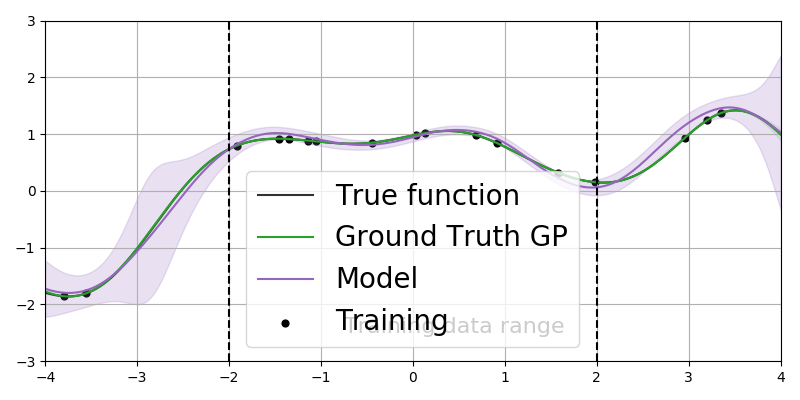}
  \label{fig:eq_convnp_extrapolation_app}
\end{subfigure}\\
\rotatebox{90}{\hspace{-2mm}    \tiny \anp}
\begin{subfigure}{0.3\textwidth}
  \includegraphics[width=\linewidth]{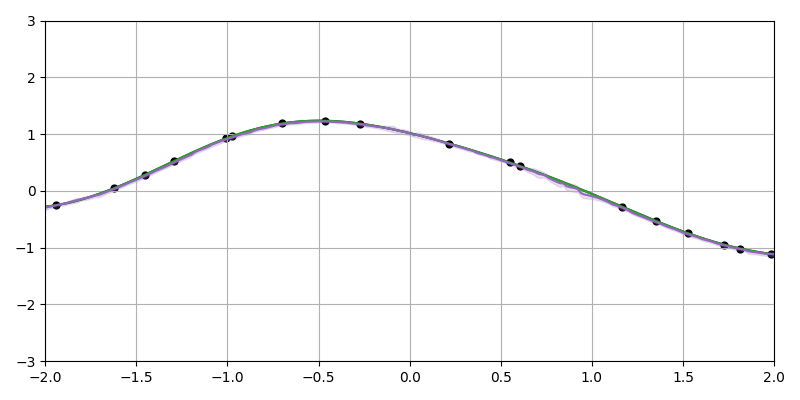}
  \label{fig:eq_anp_standard_app}
\end{subfigure}\hfil 
\begin{subfigure}{0.3\textwidth}
  \includegraphics[width=\linewidth]{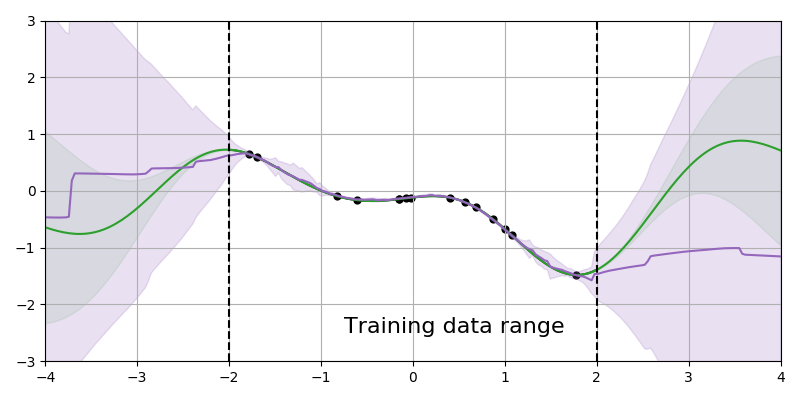}
  \label{fig:eq_anp_extrapolation_no_data_app}
\end{subfigure}\hfil
\begin{subfigure}{0.3\textwidth}
  \includegraphics[width=\linewidth]{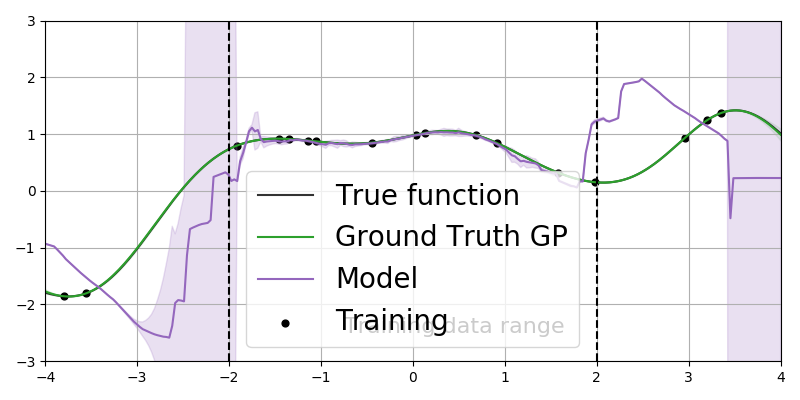}
  \label{fig:eq_anp_extrapolation_app}
\end{subfigure}\\
\rotatebox{90}{\hspace{-2mm}    \tiny CNP}
\begin{subfigure}{0.3\textwidth}
  \includegraphics[width=\linewidth]{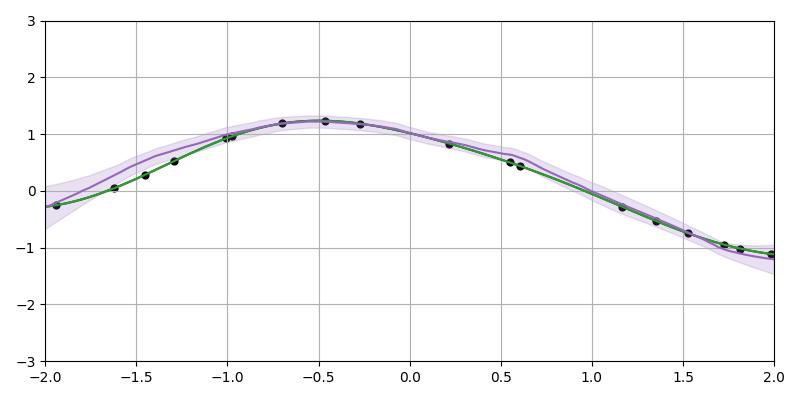}
  \label{fig:eq_cnp_standard_app}
\end{subfigure}\hfil
\begin{subfigure}{0.3\textwidth}
  \includegraphics[width=\linewidth]{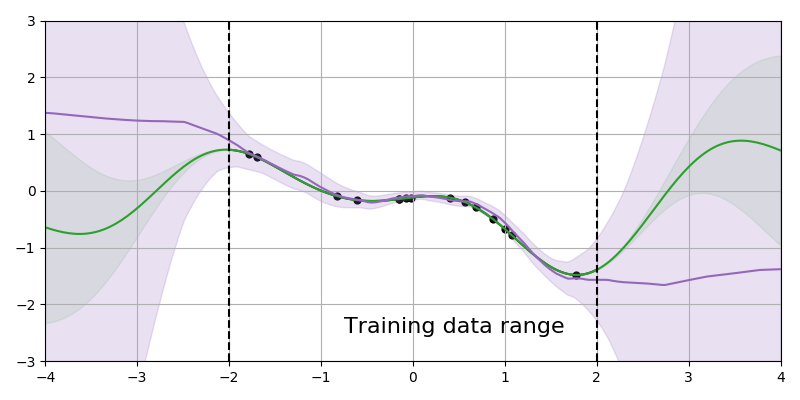}
  \label{fig:eq_cnp_extrapolation_no_data_app}
\end{subfigure}\hfil 
\begin{subfigure}{0.3\textwidth}
  \includegraphics[width=\linewidth]{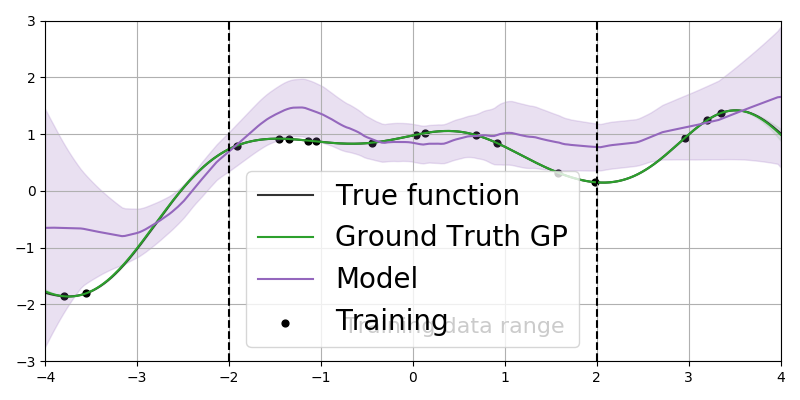}
  \label{fig:eq_cnp_extrapolation_app}
\end{subfigure}
\caption{Example functions learned by the (top) \convcp, (center) \anp, and (bottom) CNP when trained on an EQ kernel (with length scale parameter 1). ``True function'' refers to the sample from the GP prior from which the context and target sets were sub-sampled. ``Ground Truth GP'' refers to the GP posterior distribution when using the exact kernel and performing posterior inference based on the context set. The left column shows the predictive posterior of the models when data is presented in same range as training. The centre column shows the model predicting outside the training data range when no data is observed there. The right-most column shows the model predictive posteriors when presented with data outside the training data range.}
\label{fig:eq_gp_app}
\end{figure}
\begin{figure}[htb]

     \rotatebox{90}{\hspace{-2mm}    \tiny \convcp}
\begin{subfigure}{0.3\textwidth}
  \includegraphics[width=\linewidth]{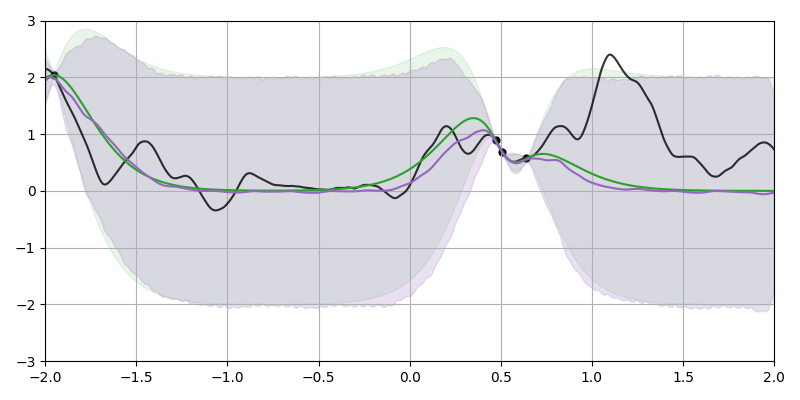}
  \label{fig:matern_convnp_standard_app}
\end{subfigure}\hfil 
\begin{subfigure}{0.3\textwidth}
  \includegraphics[width=\linewidth]{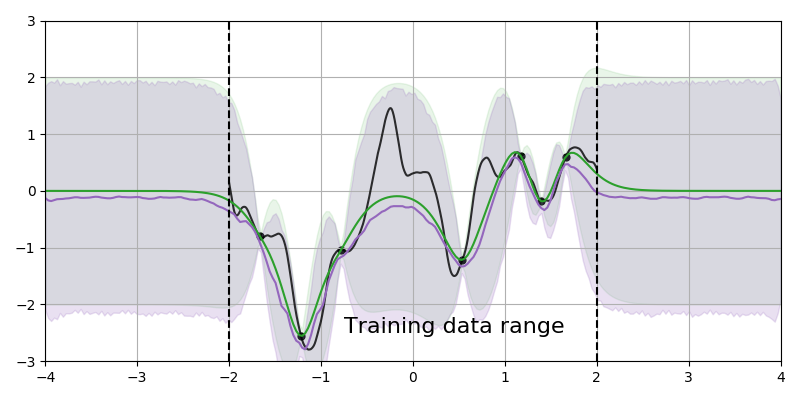}
  \label{fig:matern_convnp_extrapolation_no_data_app}
\end{subfigure}\hfil 
\begin{subfigure}{0.3\textwidth}
  \includegraphics[width=\linewidth]{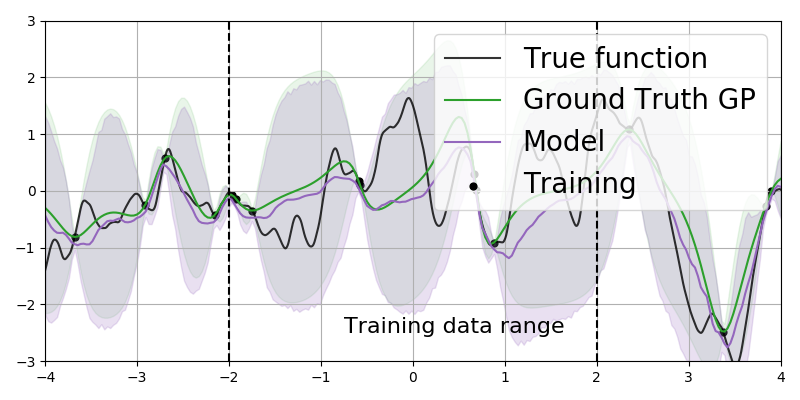}
  \label{fig:matern_convnp_extrapolation_app}
\end{subfigure}\\
\rotatebox{90}{\hspace{-2mm}    \tiny \anp}
\begin{subfigure}{0.3\textwidth}
  \includegraphics[width=\linewidth]{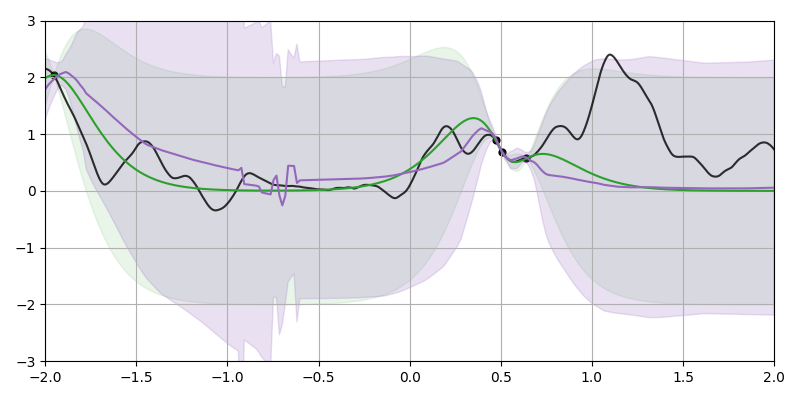}
  \label{fig:matern_anp_standard_app}
\end{subfigure}\hfil 
\begin{subfigure}{0.3\textwidth}
  \includegraphics[width=\linewidth]{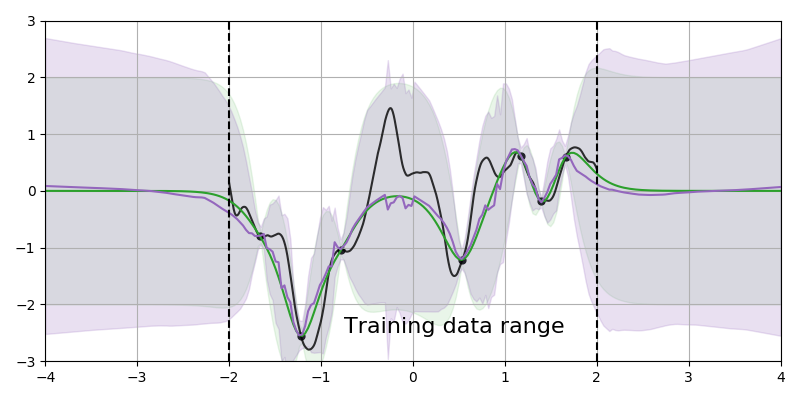}
  \label{fig:matern_anp_extrapolation_no_data_app}
\end{subfigure}\hfil
\begin{subfigure}{0.3\textwidth}
  \includegraphics[width=\linewidth]{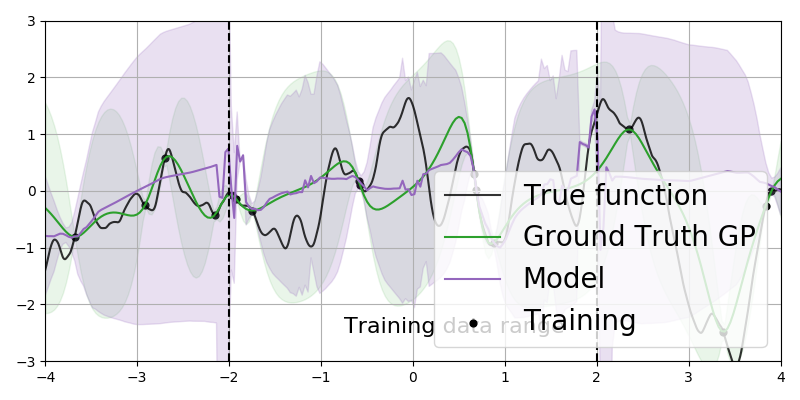}
  \label{fig:matern_anp_extrapolation_app}
\end{subfigure}\\
\rotatebox{90}{\hspace{-2mm}    \tiny CNP}
\begin{subfigure}{0.3\textwidth}
  \includegraphics[width=\linewidth]{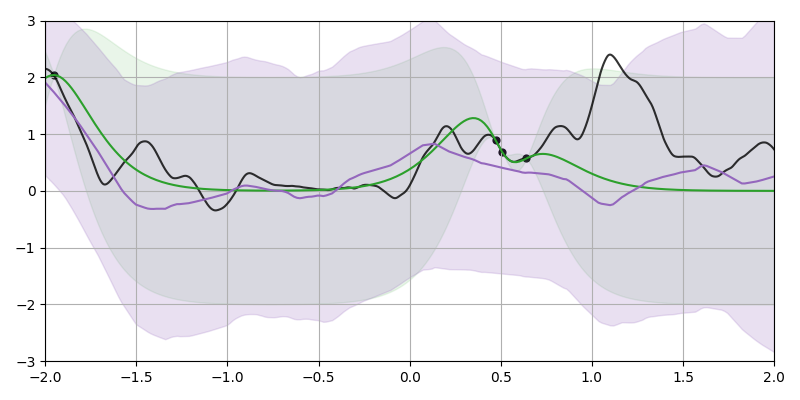}
  \label{fig:matern_cnp_standard_app}
\end{subfigure}\hfil
\begin{subfigure}{0.3\textwidth}
  \includegraphics[width=\linewidth]{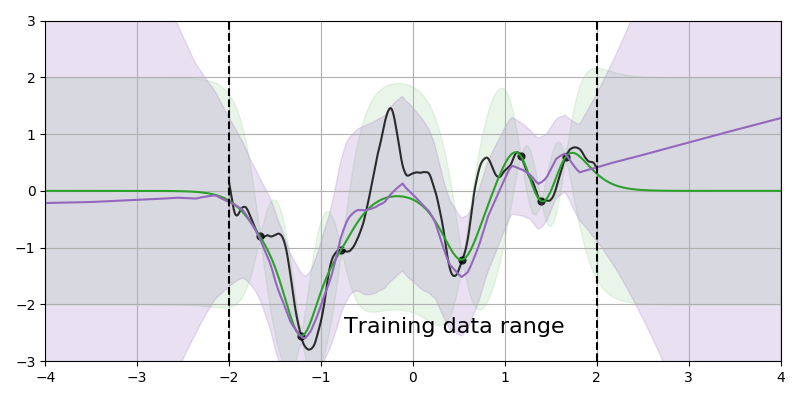}
  \label{fig:matern_cnp_extrapolation_no_data_app}
\end{subfigure}\hfil 
\begin{subfigure}{0.3\textwidth}
  \includegraphics[width=\linewidth]{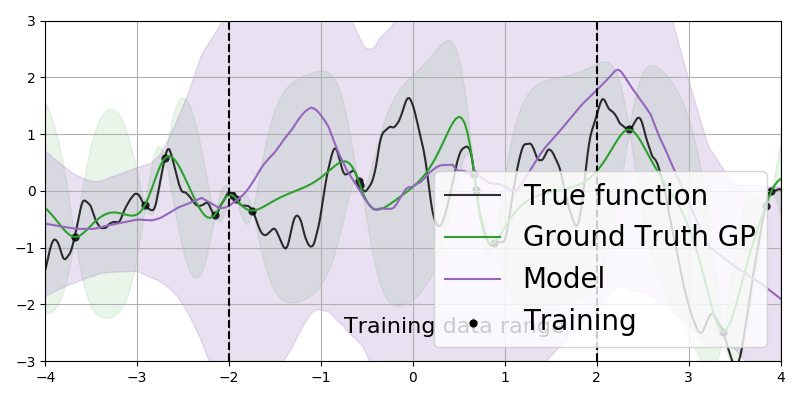}
  \label{fig:matern_cnp_extrapolation_app}
\end{subfigure}
\caption{Example functions learned by the (top) \convcp, (center) \anp, and (bottom) CNP when trained on a Mat\'ern-$5 / 2$ kernel (with length scale parameter $0.25$). ``True function'' refers to the sample from the GP prior from which the context and target sets were sub-sampled. ``Ground Truth GP'' refers to the GP posterior distribution when using the exact kernel and performing posterior inference based on the context set. The left column shows the predictive posterior of the models when data is presented in same range as training. The centre column shows the model predicting outside the training data range when no data is observed there. The right-most column shows the model predictive posteriors when presented with data outside the training data range.}
\label{fig:matern_gp_app}
\end{figure}
\begin{figure}[htb]
     \rotatebox{90}{\hspace{-2mm}    \tiny \convcp}
\begin{subfigure}{0.3\textwidth}
  \includegraphics[width=\linewidth]{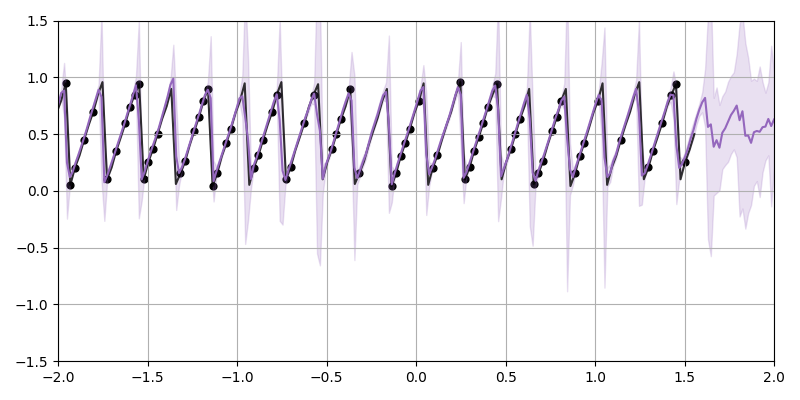}
  \label{fig:sawtooth_convnp_standard_app}
\end{subfigure}\hfil 
\begin{subfigure}{0.3\textwidth}
  \includegraphics[width=\linewidth]{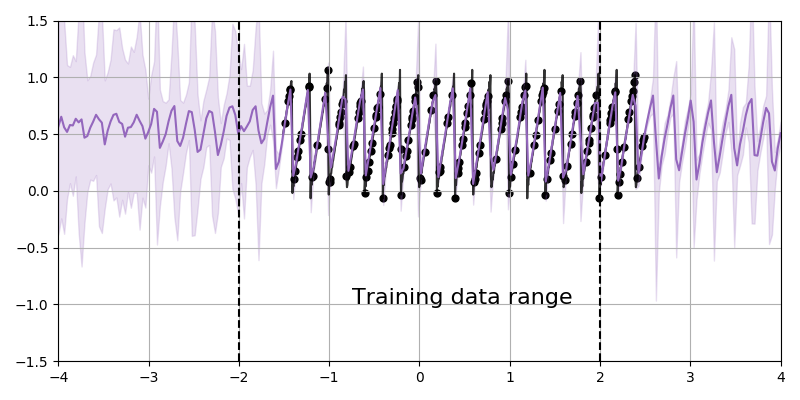}
  \label{fig:sawtooth_convnp_extrapolation_no_data_app}
\end{subfigure}\hfil 
\begin{subfigure}{0.3\textwidth}
  \includegraphics[width=\linewidth]{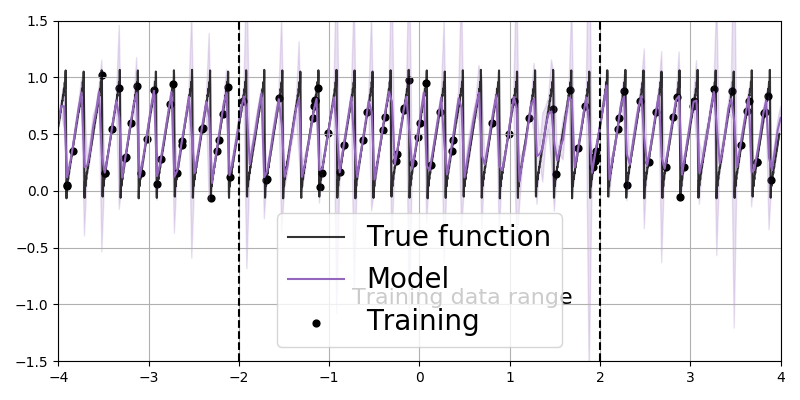}
  \label{fig:sawtooth_convnp_extrapolation_app}
\end{subfigure}\\
\rotatebox{90}{\hspace{-2mm}    \tiny \anp}
\begin{subfigure}{0.3\textwidth}
  \includegraphics[width=\linewidth]{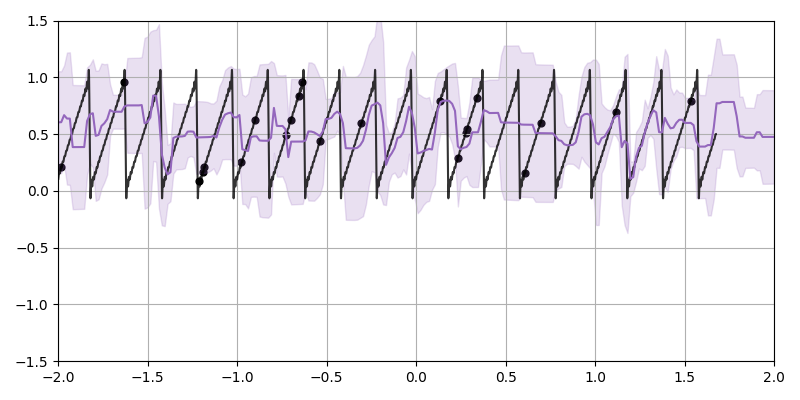}
  \label{fig:sawtooth_anp_standard_app}
\end{subfigure}\hfil 
\begin{subfigure}{0.3\textwidth}
  \includegraphics[width=\linewidth]{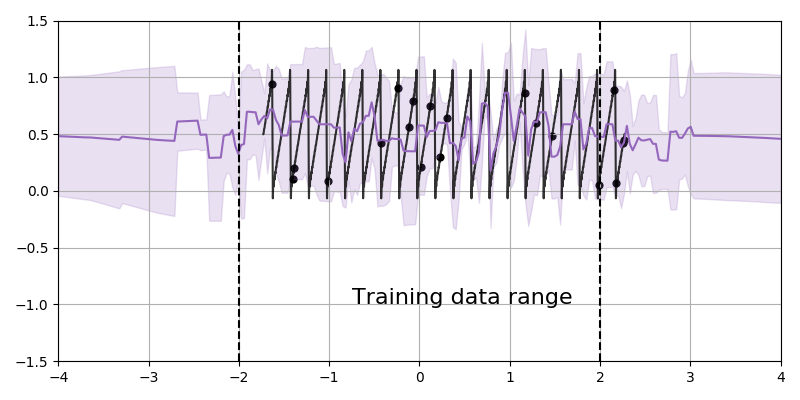}
  \label{fig:sawtooth_anp_extrapolation_no_data_app}
\end{subfigure}\hfil
\begin{subfigure}{0.3\textwidth}
  \includegraphics[width=\linewidth]{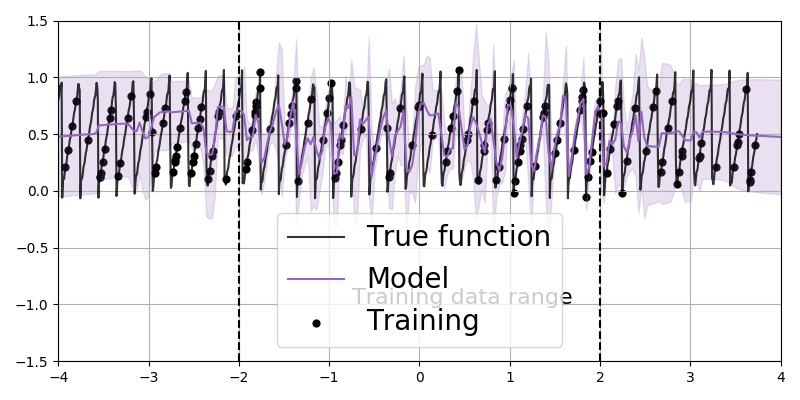}
  \label{fig:sawtooth_anp_extrapolation_app}
\end{subfigure}\\
\rotatebox{90}{\hspace{-2mm}    \tiny CNP}
\begin{subfigure}{0.3\textwidth}
  \includegraphics[width=\linewidth]{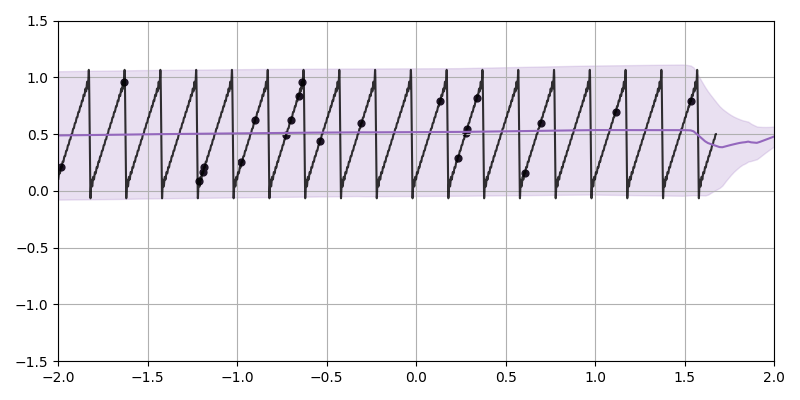}
  \label{fig:sawtooth_cnp_standard_app}
\end{subfigure}\hfil
\begin{subfigure}{0.3\textwidth}
  \includegraphics[width=\linewidth]{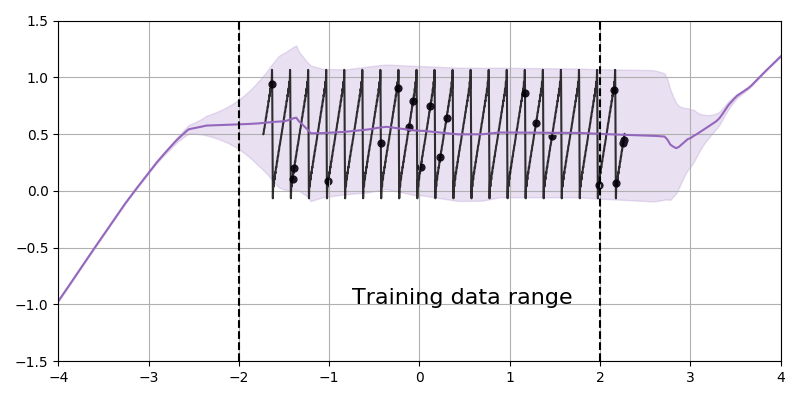}
  \label{fig:sawtooth_cnp_extrapolation_no_data_app}
\end{subfigure}\hfil 
\begin{subfigure}{0.3\textwidth}
  \includegraphics[width=\linewidth]{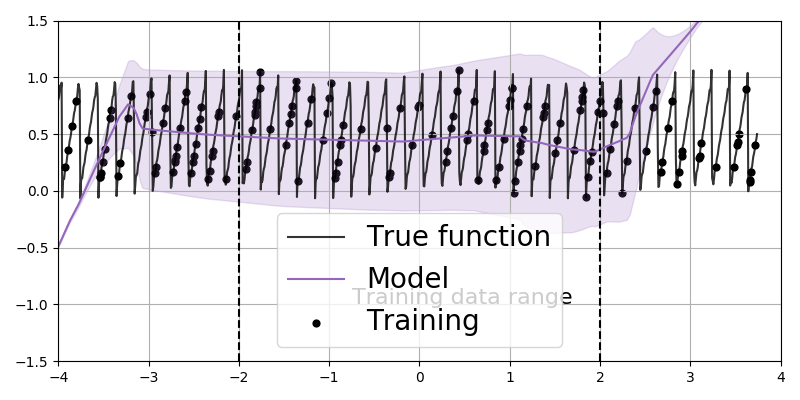}
  \label{fig:sawtooth_cnp_extrapolation_app}
\end{subfigure}
\caption{Example functions learned by the (top) \convcp, (center) \anp, and (bottom) CNP when trained on a random sawtooth sample. The left column shows the predictive posterior of the models when data is presented in the same range as training. The centre column shows the model predicting outside the training data range when no data is observed there. The right-most column shows the model predictive posteriors when presented with data outside the training data range.}
\label{fig:sawtooth_app}
\end{figure}

The kernels used for the Gaussian Processes which generate the data in this experiment are defined as follows:
\begin{itemize}
    \item EQ: $$k(x, x') = e^{-\frac12(\frac{x-x'}{0.25})^2},$$
    \item weakly periodic: $$k(x, x') = e^{-\frac12 (f_1(x) - f_1(x'))^2 -\frac12 (f_2(x) - f_2(x'))^2} \cdot e^{-\frac18(x-x')^2},$$ with $f_1(x) = \cos(8 \pi x )$ and $f_2(x) = \sin(8\pi x )$, and
    \item Matern--$\tfrac52$: $$k(x, x') = (1 + 4\sqrt{5} d  + \frac53 d^2) e^{-\sqrt{5} d}$$ with $d = 4|x - x'|$.
\end{itemize}
During the training procedure, the number of context points and target points for a training batch are each selected randomly from a uniform distribution over the integers between 3 and 50. This number of context and target points are randomly sampled from a function sampled from the process (a Gaussian process with one of the above kernels or the sawtooth process), where input locations are uniformly sampled from the interval $[-2, 2]$. All models in this experiment were trained for 200 epochs using 256 batches per epoch of batch size 16. We discretize $E(Z)$ by evaluating 64 points per unit in this setting. We use a learning rate of $3\mathrm{e}{-4}$ for all models, except for \convcp{}XL on the sawtooth data, where we use a learning rate of $1\mathrm{e}{-3}$ (this learning rate was too large for the other models).

The random sawtooth samples are generated from the following function:
\begin{equation}
    y_{\text{sawtooth}}(t) = \frac{A}{2} - \frac{A}{\pi}\sum_{k=1}^\infty (-1)^k \frac{\sin(2\pi k f t)}{k},
\end{equation}
where $A$ is the amplitude, $f$ is the frequency, and $t$ is ``time''. Throughout training, we fix the amplitude to be one. We truncate the series at an integer $K$. At every iteration, we sample a frequency uniformly in $[3, 5]$, $K$ in $[10, 20]$, and a random shift in $[-5, 5]$. As the task is much harder, we sample context and target set sizes over $[3,100]$. Here the CNP and \anp{} employ learning rates of $10^{-3}$. All other hyperparameters remain unchanged.

We include additional figures showing the performance of \convcp{}s, \anp{}s and CNPs on GP and sawtooth function regression tasks in \cref{fig:eq_gp_app,fig:matern_gp_app,fig:sawtooth_app}.

\subsection{PLAsTiCC Experimental Details}
\label{app:plasticc}

The \convcp{} was trained for 200 epochs using 1024 batches of batch size 4 per epoch. For training and testing, the number of context points for a batch are each selected randomly from a uniform distribution over the integers between 1 and the number of points available in the series (usually between 10--30 per bandwidth). The remaining points in the series are used as the target set. For testing, a batch size of 1 was used and statistics were computed over 1000 evaluations. We compare \convcp{} to the GP models used in \citep{boone2019avocado} using the implementation in \url{https://github.com/kboone/avocado}. The data used for training and testing is normalized according to $t(v) = (v - m)/s$ with the values in \cref{tab:values_plasticc}.
\begin{table}[t]
\centering
\begin{tabular}{ccc}
\toprule
Variable & $m$ & $s$
\\ \midrule
\texttt{time} & 5.94 $\times 10^4$ & 8.74 $\times 10^2$ \\
\texttt{lsstu} & 1.26 & 1.63 $\times 10^2$ \\
\texttt{lsstg} & -0.13 & 3.84  $\times 10^2$\\
\texttt{lsstr} & 3.73 & 3.41  $\times 10^2$\\
\texttt{lssti} & 5.53 & 2.85  $\times 10^2$\\
\texttt{lsstz} & 6.43 & 2.69  $\times 10^2$\\
\texttt{lssty} & 6.27 & 2.93  $\times 10^2$
\\ \bottomrule
\end{tabular}
\caption{Values used to normalise the data in the PLAsTiCC experiments.}
\label{tab:values_plasticc}
\end{table}
These values are estimated from a batch sampled from the training data. 
To remove outliers in the GP results, log-likelihood values less than $-10$ are removed from the evaluation. These same datapoints were removed from the \convcp{} results as well.

For this dataset, we only used the \convcp{}XL, as we found the \convcp{} to underfit. The learning rate was set to $10^{-3}$, and we discretize $E(Z)$ by evaluating 256 points per unit.

\subsection{Predator--Prey Experimental Details}
\label{app:sim2real}

We describe the way simulated training data for the experiment in \cref{sec:predator_prey} was generated from the Lotka--Volterra model. 
The description is borrowed from \citep{wilkinson2011stochastic}.

Let $X$ be the number of predators and $Y$ the number of prey at any point in our simulation. According to the model, one of the following four events can occur:
\begin{enumerate}
    \item[$A$:] A single predator is born according to rate $\theta_1 X Y$, increasing $X$ by one.
    \item[$B$:] A single predator dies according to rate $\theta_2 X$, decreasing $X$ by one.
    \item[$C$:] A single prey is born according to rate $\theta_3 Y$, increasing $Y$ by one.
    \item[$D$:] A single prey dies (is eaten) according to rate $\theta_4 X Y$, decreasing $Y$ by one.
\end{enumerate}
The parameter values $\theta_1$, $\theta_2$, $\theta_3$, and $\theta_4$, as well as the initial values of $X$ and $Y$ govern the behavior of the simulation. We choose $\theta_1 = 0.01$, $\theta_2 = 0.5$, $\theta_3 = 1$, and $\theta_4 = 0.01$, which are also used in \citep{papamakarios2016fast} and generate reasonable time series. Note that these are likely not the parameter values that would be estimated from the Hudson's Bay lynx--hare data set \citep{leigh1968ecological}, but they are used because they yield reasonably oscillating time series. Obtaining oscillating time series from the simulation is sensitive to the choice of parameters and many parametrizations result in populations that simply die out. 

Time series are simulated using Gillespie's algorithm \citep{gillespie1977exact}:
\begin{enumerate}
    \item Draw the time to the next event from an exponential distribution with rate equal to the total rate $\theta_1 X Y + \theta_2 X  + \theta_3 Y + \theta_4 X Y$.
    \item Select one of the above events $A$, $B$, $C$, or $D$ at random with probability proportional to its rate.
    \item Adjust the appropriate population according to the selected event, and go to 1.
\end{enumerate}

The simulations using these parameter settings can yield a maximum population of approximately $300$ while the context set in the lynx--hare data set has an approximate maximum population of about $80$ so we scaled our simulation population by a factor of $2/7$.
We also remove time series which are longer than $100$ units of time, which have more than $10000$ events, or where one of the populations is entirely zero.
The number of context points $n$ for a training batch are each selected randomly from a uniform distribution between 3 and 80, and the number of target points is $150 - n$.
These target and context points are then sampled from the simulated series. The Hudson's Bay lynx--hare data set has time values that range from $1845$ to $1935$. However, the values supplied to the model range from $0$ to $90$ to remain consistent with the simulated data.

For evaluation, an interval of $18$ points is removed from the the Hudson's Bay lynx--hare data set to act as a target set, while the remaining $72$ points act as the context set. This construction highlights the model's interpolation as well as its uncertainty in the presence of missing data.  

Models in this setting were trained for 200 epochs with 256 batches per epoch, each batch containing 50 tasks. For this data set, we only used the \convcp{}, as we found the \convcp{}XL to overfit. The learning rate was set to $10^{-3}$, and we discretize $E(Z)$ by evaluating 100 points per unit.

We attempted to train an \anp{} for comparison, but due to the nature of the synthetic data generation, many of the training series end before $90$ time units, the length of the Hudson's Bay lynx-hare series. Effectively, this means that the \anp{} was asked to predict outside of its training interval, a task that it struggles with, as shown in \cref{sec:synth_gp}. The plots in \cref{fig:anppredprey} show that the \anp{} is able to learn the first part of the time series but is unable to model data outside of the first 20 or so time units. Perhaps with more capacity and training epochs the \anp{} training would be more successful. Note from \cref{fig:lynxhare} that our model does better on the synthetic data than on the real data. This could be due to the parameters of the Lotka--Volterra model used being a poor estimate for the real data.

\begin{figure}[htb]
\centering
\begin{subfigure}{0.5\textwidth}
  \includegraphics[width=\linewidth]{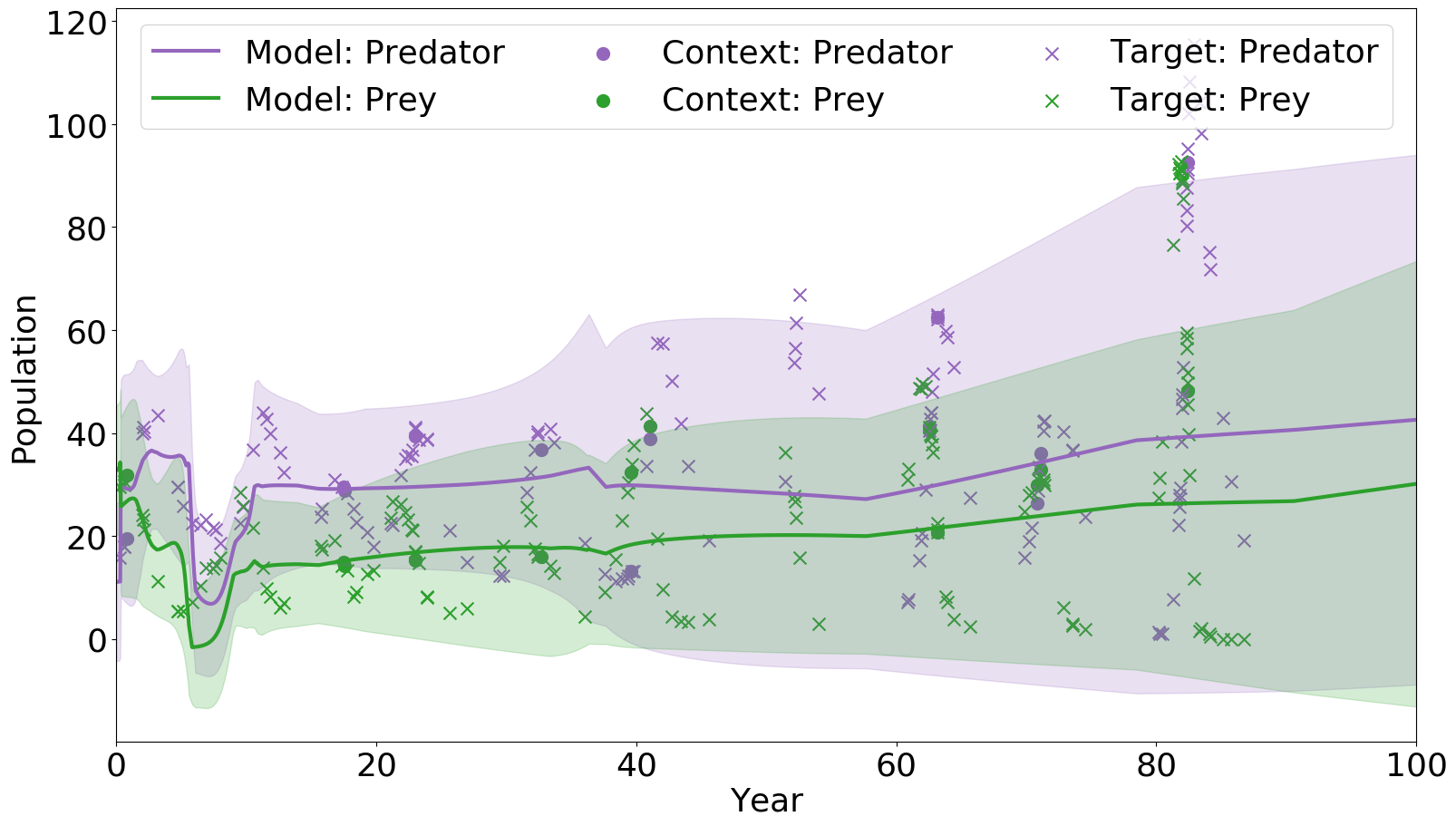}
\end{subfigure}\hfill
\begin{subfigure}{0.5\textwidth}
  \includegraphics[width=\linewidth]{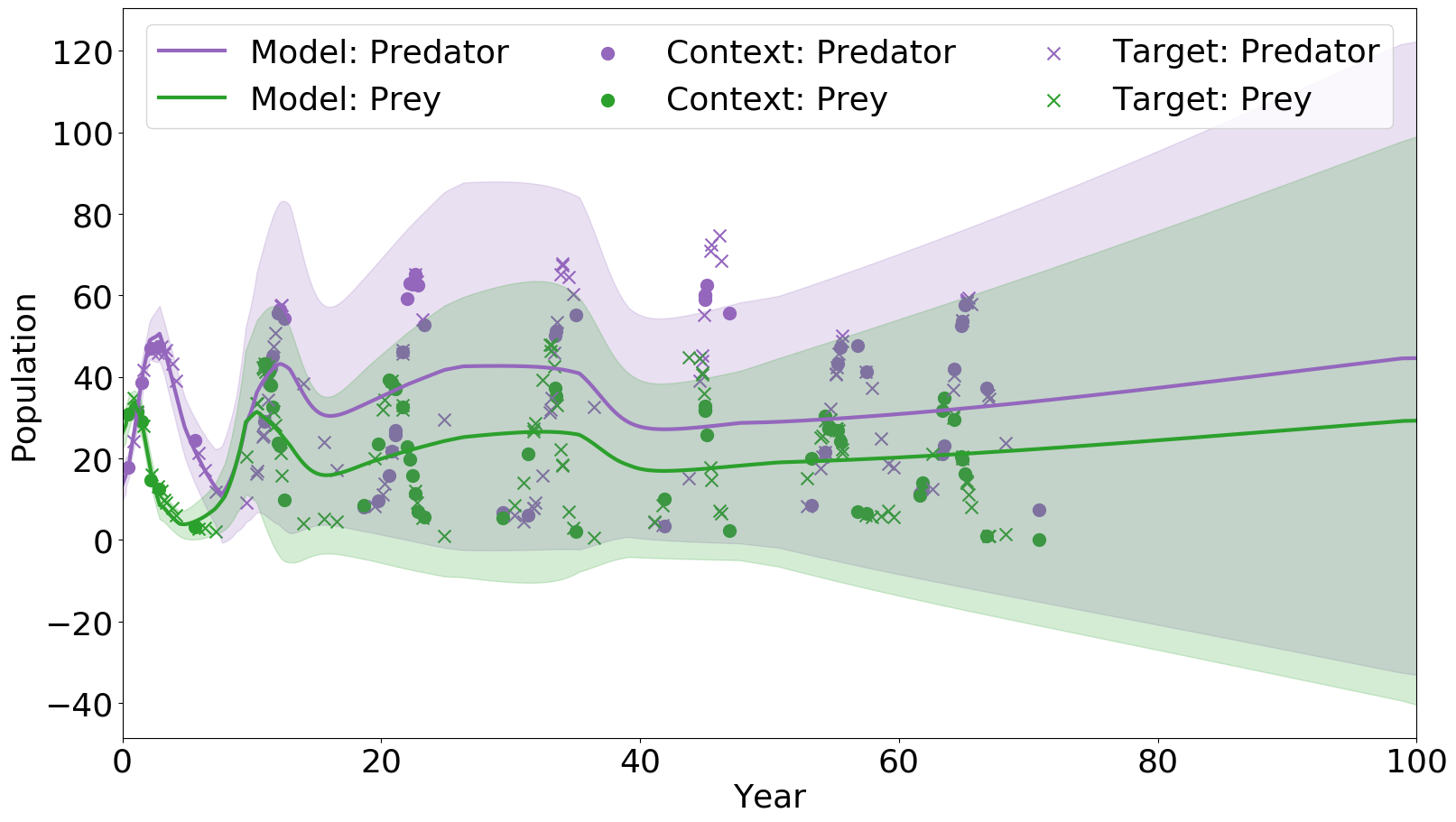}
\end{subfigure}
    \caption{\anp{} performance on two samples from the Lotka–Volterra process (sim).}
    \label{fig:anppredprey}
\end{figure}

\newpage

\section{Image Experimental Details and Additional Results}

\subsection{Experimental Details}
\label{app:image_details}

\paragraph{Training details.}
In all experiments, we sample the number of context points uniformly from $\mathcal{U}(\frac{n_{\text{total}}}{100}, \frac{n_{\text{total}}}{2})$, and the number of target points is set to $n_{\text{total}}$.
The context and target points are sampled randomly from each of the 16 images per batch.
The weights are optimised using Adam \citep{kingma2014adam} with learning rate $5\times 10^{-4}$.
We use a maximum of $100$ epochs, with early stopping of 15 epochs patience.
All pixel values are divided by 255 to rescale them to the $[0,1]$ range.
In the following discussion, we assume that images are RGB, but very similar models can be used for greyscale images or other gridded inputs (e.g.\ 1d time series sampled at uniform intervals).

\paragraph{Proposed convolutional CNP.}
Unlike \anp{} and off-the-grid \convcp{}, on-the-grid \convcp{} takes advantage of the gridded structure.
Namely, the target and context points can be specified in terms of the image, a context mask $\mathrm{M}_{c}$, and a target mask $\mathrm{M}_{t}$ instead of sets of input--value pairs.
Although this is an equivalent formulation, it makes it more natural and simpler to implement in standard deep learning libraries.
In the following, we dissect the architecture and algorithmic steps succinctly summarized in \cref{sec:convcps_practice}.
Note that all the convolutional layers are actually depthwise separable \citep{chollet2017xception};
this enables a large kernel size (i.e.\ receptive fields) while being parameter and computationally efficient.

\begin{enumerate}
    \item Let $ \mathrm{I}$ denote the image. 
    Select all context points $\mathrm{signal} \coloneqq \mathrm{M}_c \odot \mathrm{I} $ and append a density channel $\mathrm{density} \coloneqq \mathrm{M}_c$, which intuitively says that ``there is a point at this position'':  $[\mathrm{signal}, \mathrm{density}]^\top$.
    Each pixel value will now have 4 channels: 3 RGB channels and 1 density channel $\mathrm{M}_c$.
    Note that the mask will set the pixel value to 0 at a location where the density channel is 0, indicating there are no points at this position (a missing value).
    
    \item Apply a convolution to the density channel $\mathrm{density}' = \textsc{conv}_{\vtheta}(\mathrm{density})$ and a normalized convolution to the signal $\mathrm{signal}' \coloneqq \textsc{conv}_{\vtheta}(\mathrm{signal})/\mathrm{density}'$.
    The normalized convolution makes sure that the output mostly depends on the scale of the signal rather than the number of observed points.
    The output channel size is 128 dimensional.
    The kernel size of $\textsc{conv}_{\vtheta}$ depends on the image shape and model used (\cref{table:image_architecture}).
    We also enforce element-wise positivity of the trainable filter by taking the absolute value of the kernel weights $\vtheta$ before applying the convolution.
    As discussed in \cref{app:imgs_ablation}, the normalization and positivity constraints do not empirically lead to improvements for on-the-grid data.
    Note that in this setting, $E(Z)$ is $[\mathrm{signal}', \mathrm{density}']^\top$.
    

    \item Now we describe the on-the-grid version of $\rho(\cdot)$, which we decompose into two stages. In the first stage, we apply a $\textsc{CNN}$ to $[\mathrm{signal}', \mathrm{density}']^\top$. This CNN is composed of residual blocks \citep{he2016deep}, each consisting of 1 or 2 (\cref{table:image_architecture}) convolutional layers with ReLU activations and no batch normalization.
    The number of output channels in each layer is 128. 
    The kernel size is the same across the whole network, but depends on the image shape and model used (\cref{table:image_architecture}).
    
    \item In the second stage of $\rho(\cdot)$, we apply a shared pointwise $\mathrm{MLP}: \R^{128} \to \R^{2C}$ (we use the same architecture as used for the \anp{} decoder) to the output of the first stage at each pixel location in the target set. Here $C$ denotes the number of channels in the image. The first $C$ outputs of the MLP are treated as the means of a Gaussian predictive distribution, and the last $C$ outputs are treated as the standard deviations.
    These then pass through the positivity-enforcing function shown in \cref{eqn:sigma_pos}. 

\end{enumerate}

\begin{table}[!htb]
\centering
\caption{CNN architecture for the image experiments.}
\label{table:image_architecture}
\begin{tabular}{@{}lrrrrrr@{}}
\toprule
Model           & Input Shape    & \thead{$\textsc{conv}_{\vtheta}$ \\ Kernel Size}      & \thead{$\textsc{CNN}$ \\ Kernel Size }       & \thead{ $\textsc{CNN}$ Num. \\ Res. Blocks } &  \thead{ Conv. Layers \\  per Block }  \\ \midrule
\convcp{}          & $< 50$ pixels  & 9                    & 5                         & 4                         & 1                     \\
                & $> 50$ pixels  & 7                    & 3                         & 4                         & 1\\
\convcp{} XL       & any            &  9                   & 11                        & 6                         & 2\\ \bottomrule
\end{tabular}
\end{table}

\subsection{Zero Shot Multi MNIST (ZSMM) data}
\label{app:zsmm}

\begin{figure}[htb]
\centering
\begin{subfigure}{0.40\textwidth}
  \includegraphics[width=\linewidth]{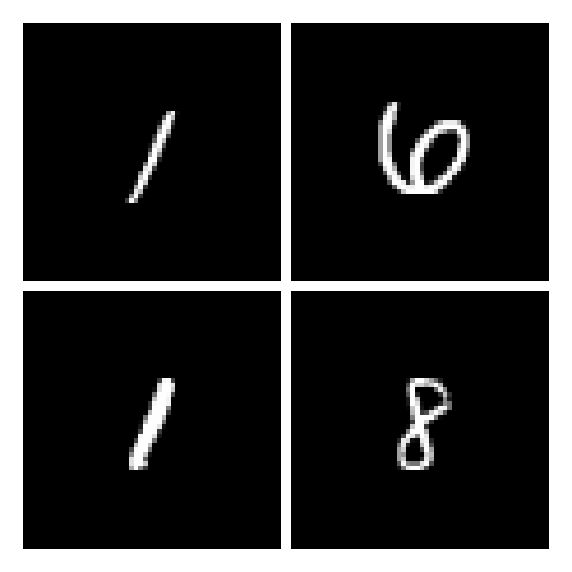}
  \subcaption{Train}
  \label{fig:zsmm_train}
\end{subfigure}
\begin{subfigure}{0.40\textwidth}
  \includegraphics[width=\linewidth]{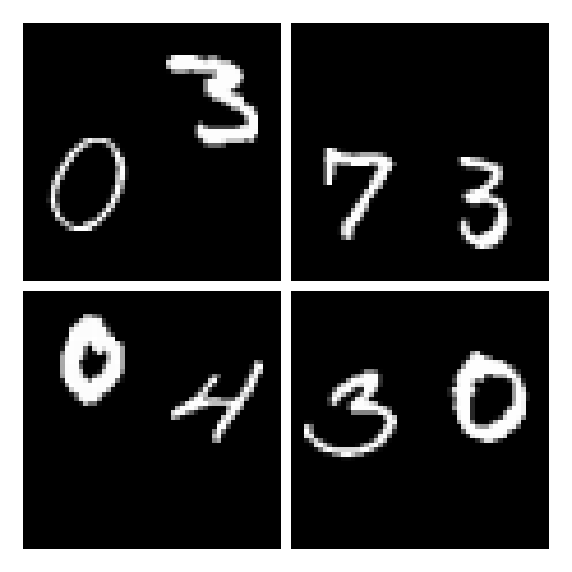}
  \subcaption{Test}
  \label{fig:zsmm_test}
\end{subfigure}

\caption{Samples from our generated Zero Shot Multi MNIST (ZSMM) data set.}
\label{fig:zsmm}
\end{figure}

In the real world, it is very common to have multiple objects in our field of view which do not interact with each other. 
Yet, many image data sets in machine learning contain only a single, well-centered object.
To evaluate the translation equivariance and generalization capabilities of our model, we introduce the zero-shot multi-MNIST setting.

The training set contains all 60000 $28\times28$ MNIST training digits centered on a black $56\times56$ background. (\cref{{fig:zsmm_train}}).
For the test set, we randomly sample with replacement 10000 pairs of digits from the MNIST test set, place them on a black $56\times56$ background, and translate the digits in such a way that the digits can be arbitrarily close but cannot overlap (\cref{{fig:zsmm_test}}). 
Importantly, the scale of the digits and the image size are the same during training and testing.

\subsection{\anp{} and \convcp{} Qualitative Comparison}
\label{app:anp_vs_ccp}

\begin{figure}[htb]
\centering
\begin{subfigure}{0.49\textwidth}
  \includegraphics[width=\linewidth]{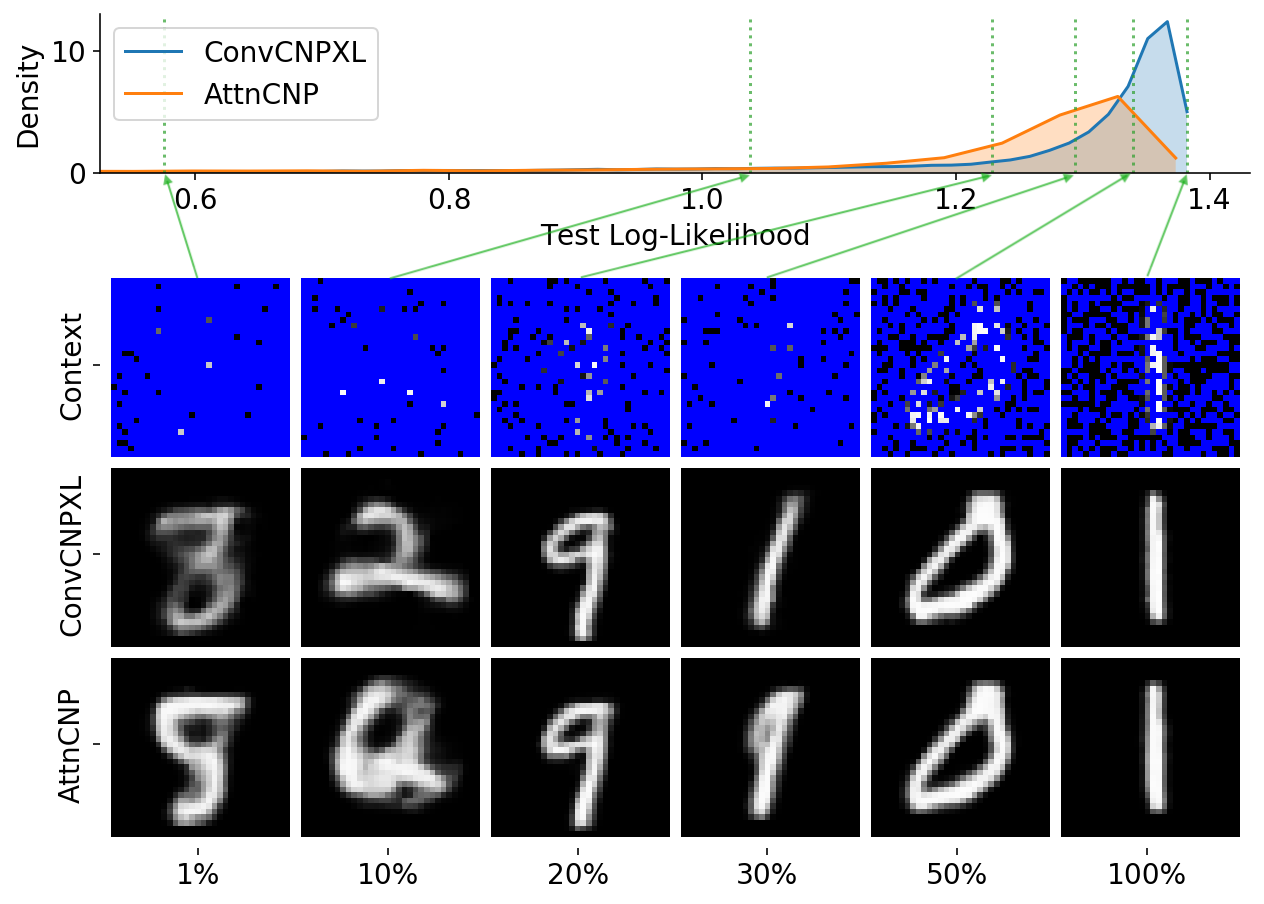}
  \subcaption{MNIST}
  \label{fig:qualitative_mnist}
\end{subfigure}
\begin{subfigure}{0.49\textwidth}
  \includegraphics[width=\linewidth]{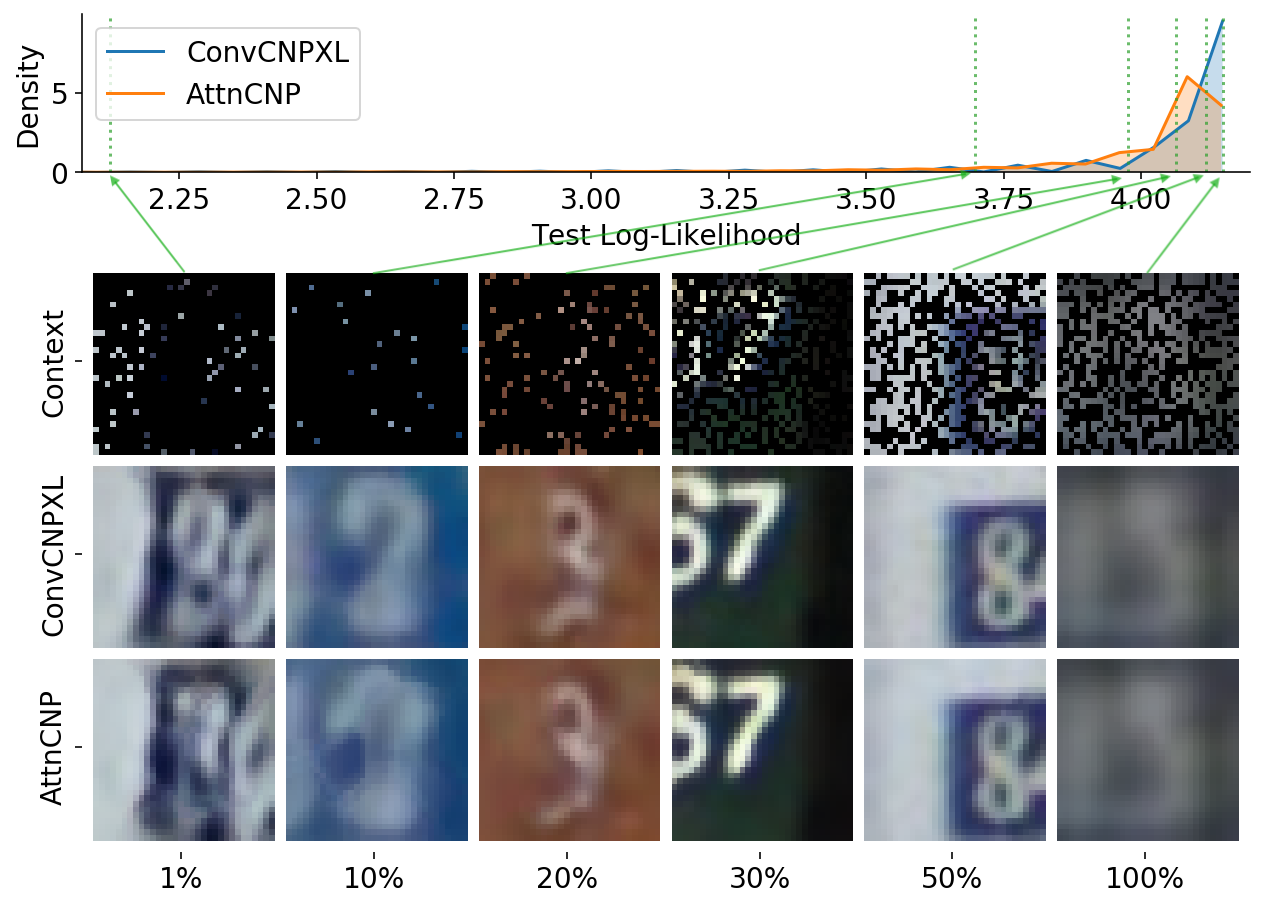}
  \subcaption{SVHN}
  \label{fig:qualitative_svhn}
\end{subfigure} \\

\begin{subfigure}{0.49\textwidth}
  \includegraphics[width=\linewidth]{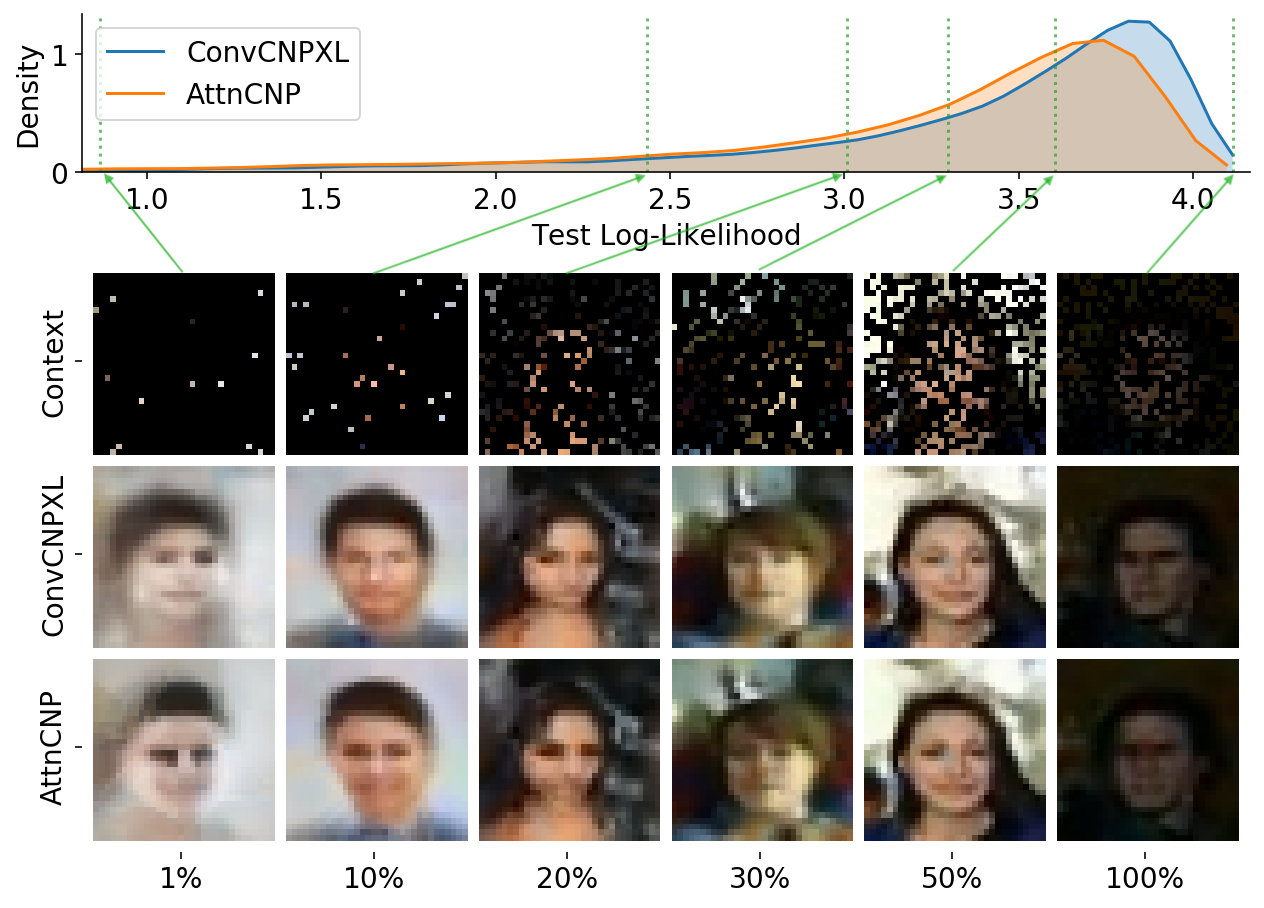}
  \subcaption{CelebA $32\times32$}
  \label{fig:qualitative_celeba32}
\end{subfigure}
\begin{subfigure}{0.49\textwidth}
  \includegraphics[width=\linewidth]{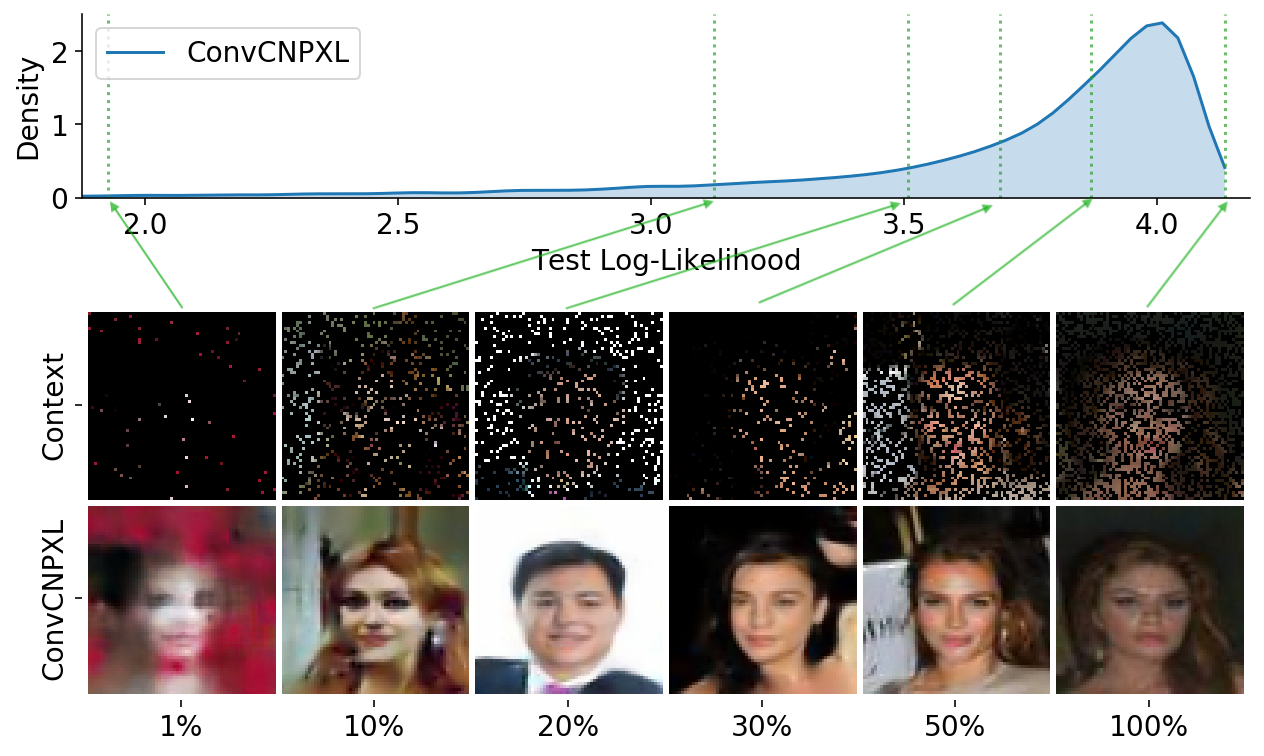}
  \subcaption{CelebA $64\times64$}
  \label{fig:qualitative_celeba64}
\end{subfigure}

\caption{Log-likelihood and qualitative comparisons between \anp{} and \convcp{} on four standard benchmarks.
The top row shows the log-likelihood distribution for both models.
The images below correspond to the context points (top), \convcp{} target predictions (middle), and \anp{} target predictions (bottom). 
Each column corresponds to a given percentile of the \convcp{} distribution. \anp{} could not be trained on CelebA64 due to its memory inefficiency.
}
\label{fig:qualitative_comparisons}
\end{figure}

\Cref{fig:qualitative_comparisons} shows the test log-likelihood distributions of an \anp{} and \convcp{} model as well as some qualitative comparisons between the two. 


\begin{figure}
  \centering

  \includegraphics[width=\linewidth]{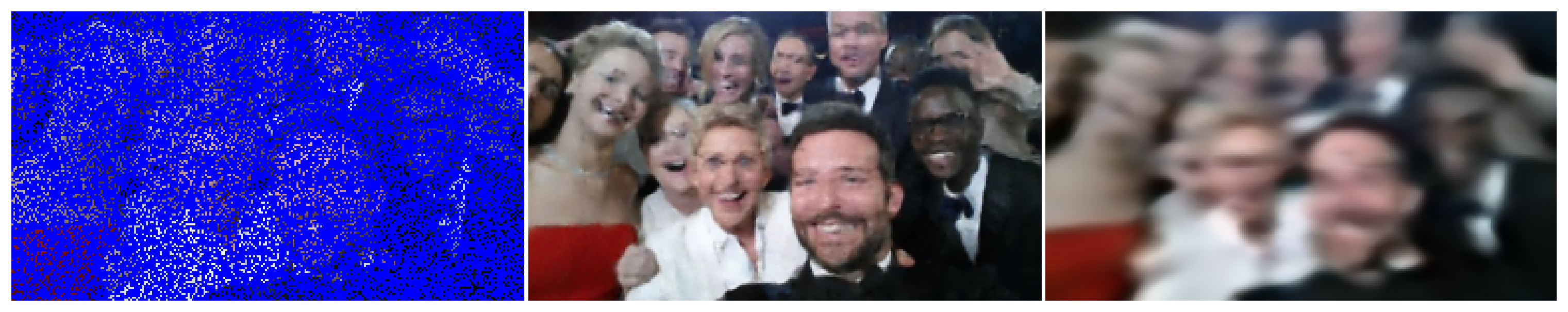}

\caption{Qualitative evaluation of a \convcp{} (center) and \anp{} (right) trained on CelebA32 and tested on a downscaled version ($146\times259$) of Ellen's Oscar selfie \citep{degeneres} with $20\%$ of the pixels as context (left).}
  
  \label{fig:image_data_vs_anp}
  
\end{figure}

Although most mean predictions of both models look relatively similar for SVHN and CelebA32, the real advantage of \convcp{} becomes apparent when testing the generalization capacity of both models.
\Cref{fig:image_data_vs_anp} shows \convcp{} and \anp{} trained on CelebA32 and tested on a downscaled version of Ellen's famous Oscar selfie.
We see that \convcp{} generalizes better in this setting. \footnote{The reconstruction looks worse than \cref{fig:qualitative_ellen} despite the larger context set, because the test image has been downscaled and the models are trained on a low resolution CelebA32.
These constraints come from \anp{}'s large memory footprint.}

\subsection{Ablation Study: First Layer}
\label{app:imgs_ablation}

\begin{table}[!htb]
\centering
\caption{Log-likelihood from image ablation experiments (6 runs).}
\label{table:images_ablation}
\begin{tabular}{@{}lrrrrrr@{}}
\toprule
Model               & MNIST                        & SVHN                       & CelebA32                 & CelebA64                  & ZSMM                       \\ \midrule
\convcp{}           & 1.19 $\pm${0.01}    & 3.89 $\pm${0.01}  & 3.19 $\pm${0.02}& 3.64 $\pm${0.01} & 1.21 $\pm${0.00}           \\       
$\ldots$ no density      & 1.15 $\pm${0.01}             & 3.88 $\pm${0.01}           & 3.15 $\pm${0.02}         & 3.62 $\pm${0.01}          & 1.13 $\pm${0.08}           \\
$\ldots$ no norm.        & 1.19 $\pm${0.01}             & 3.86 $\pm${0.03}           & 3.16 $\pm${0.03}         & 3.62 $\pm${0.01}          & 1.20 $\pm${0.01}           \\
$\ldots$ no abs.         & 1.15 $\pm${0.02}             & 3.83 $\pm${0.02}           & 3.08 $\pm${0.03}         & 3.56 $\pm${0.01}          & 1.15 $\pm${0.01}           \\
$\ldots$ no abs. norm.   & 1.19 $\pm${0.01}             & 3.86 $\pm${0.03}           & 3.16 $\pm${0.03}         & 3.62 $\pm${0.01}          & 1.20 $\pm${0.01}           \\
$\ldots$ EQ              & 1.18 $\pm${0.00}             & 3.89 $\pm${0.01}           & 3.18 $\pm${0.02}         & 3.63 $\pm${0.01}          & 1.21 $\pm${0.00}  \\ \bottomrule
\end{tabular}
\end{table}

To understand the importance of the different components of the first layer, we performed an ablation study by removing the density normalization (\convcp{} no norm.), removing the density channel (\convcp{} no dens.), removing the positivity constraints (\convcp{} no abs.), removing the positivity constraints and the normalization (\convcp{} no abs.\ norm.), and replacing the fully trainable first layer by an EQ kernel similar to the continuous case (\convcp{} EQ). 

\Cref{table:images_ablation} shows the following:
\begin{inlinelist}
\item Appending a density channel helps.
\item Enforcing the positivity constraint is only important when using a normalized convolution.
\item Using a less expressive EQ filter does not significantly decrease performance, suggesting that the model might be learning similar filters (\cref{app:first_filter}).
\end{inlinelist}

\subsection{Qualitative Analysis of the First Filter}
\label{app:first_filter}

\begin{figure}[htb]
\centering

\includegraphics[width=\linewidth]{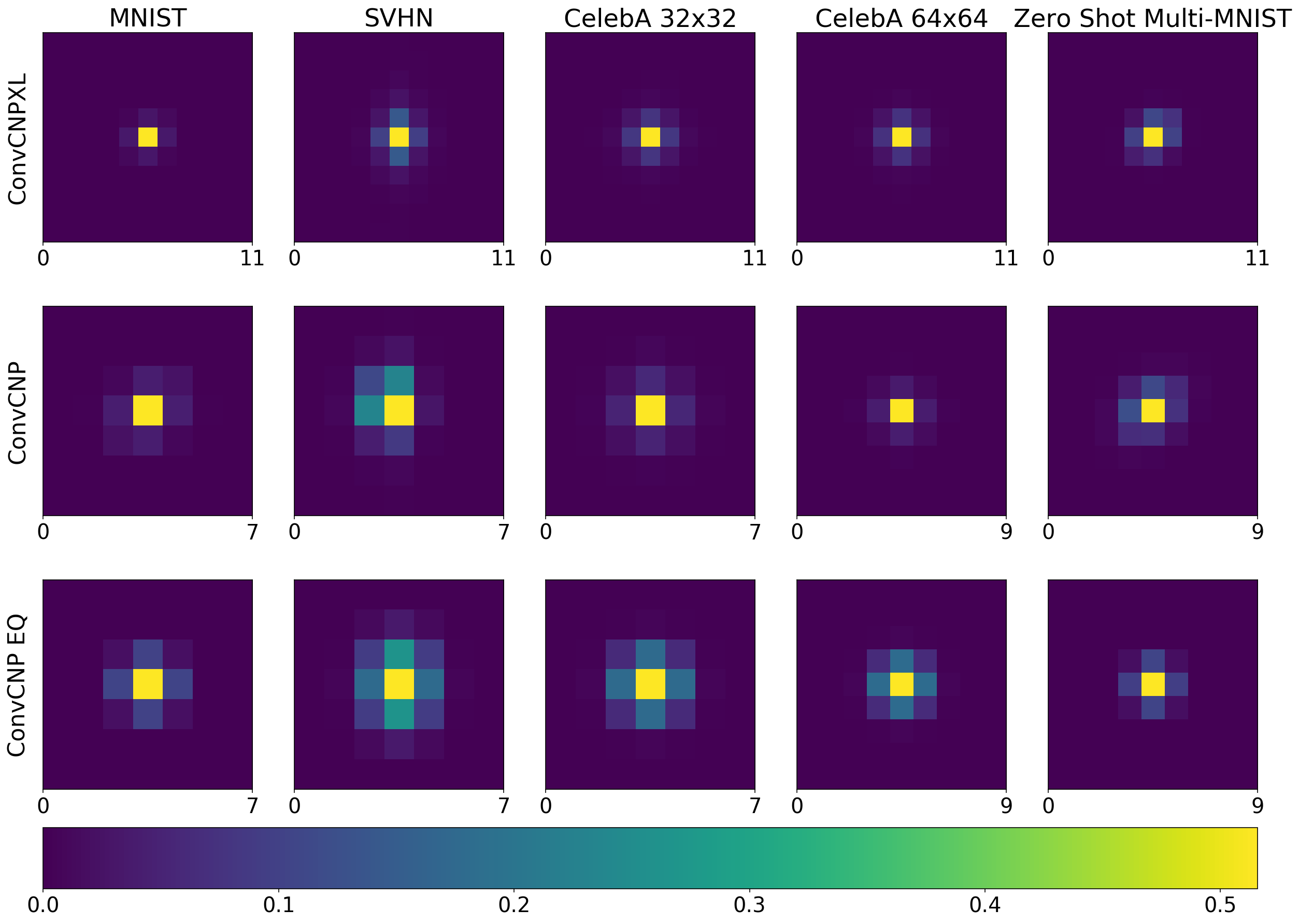}

\caption{First filter learned by \convcp{}XL, \convcp{}, and \convcp{} EQ for all our datasets. In the case of RGB images, the plotted filters are for the first channel (red).  
Note that not all filters are of the same size.
}
\label{fig:first_filters}
\end{figure}

As discussed in \cref{app:imgs_ablation}, using a less expressive EQ filter does not significantly decrease performance. 
\Cref{fig:first_filters} shows that this happens because the fully trainable kernel learns to approximate the EQ filter.

\subsection{Effect of Receptive Field on Translation Equivariance}
\label{app:receptive_field_zsmm}

\begin{figure}
\centering
 \includegraphics[width=0.8\textwidth]{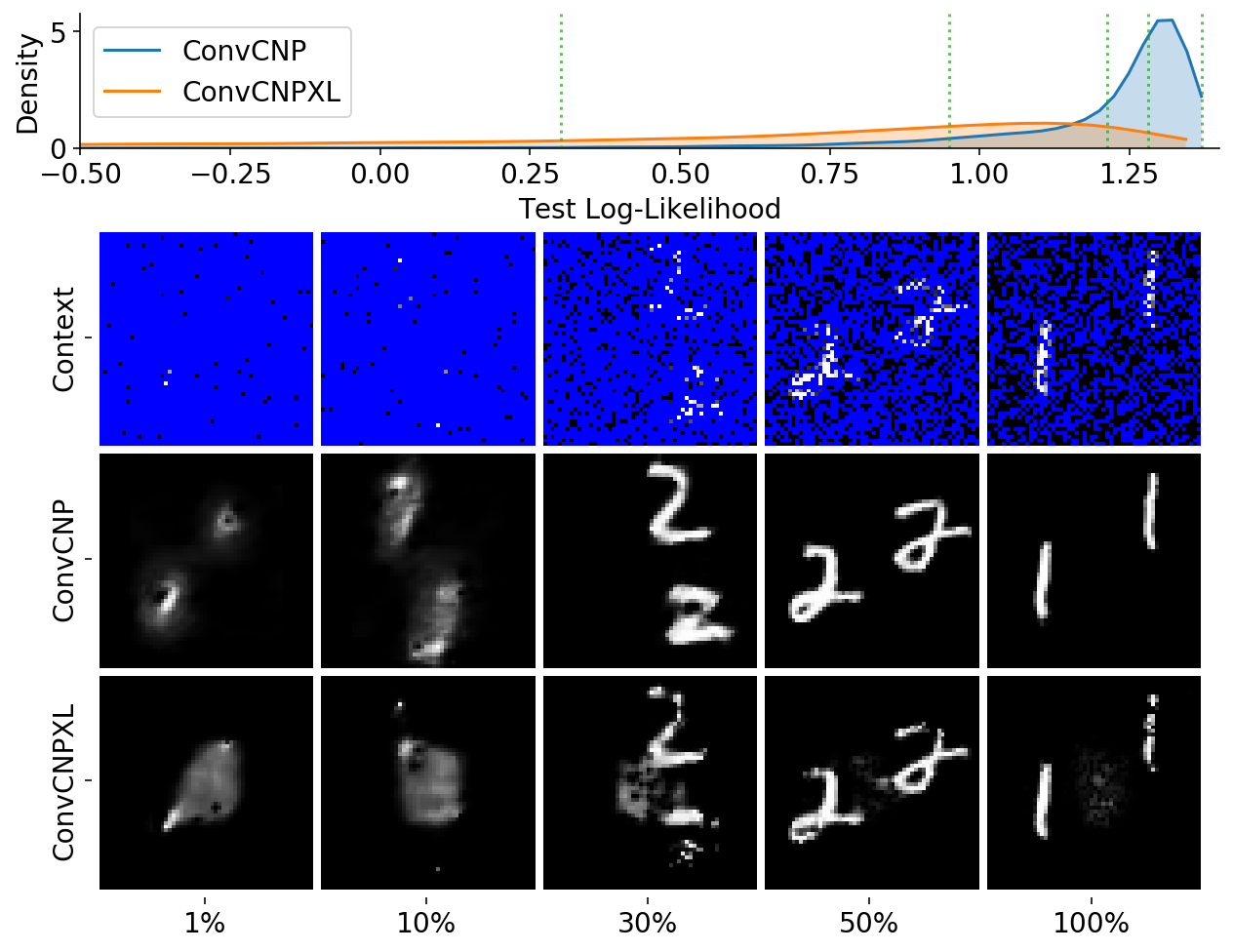}

\caption{Log-likelihood and qualitative results on ZSMM. 
The top row shows the log-likelihood distribution for both models.
The images below correspond to the context points (top), \convcp{} target predictions  (middle), and \convcp{}XL target predictions (bottom). 
Each column corresponds to a given percentile of the \convcp{} distribution.}
\label{fig:zsmm_convcnp_vs_xl}
\end{figure}

As seen in \cref{table:images_loglike}, a \convcp{}XL with large receptive field performs significantly worse on the ZSMM task than \convcp{}, which has a smaller receptive field.
\cref{fig:zsmm_convcnp_vs_xl} shows a more detailed comparison of the models, and suggests that \convcp{}XL learns to model non-stationary behaviour, namely that digits in the training set are centred.
We hypothesize that this issue stems from the the treatment of the image boundaries.
Indeed, if the receptive field is large enough and the padding values are significantly different than the inputs to each convolutional layer, the model can learn position-dependent behaviour by ``looking'' at the distance from the padded boundaries.

\begin{figure}
  \centering
  
 \includegraphics[width=0.8\textwidth]{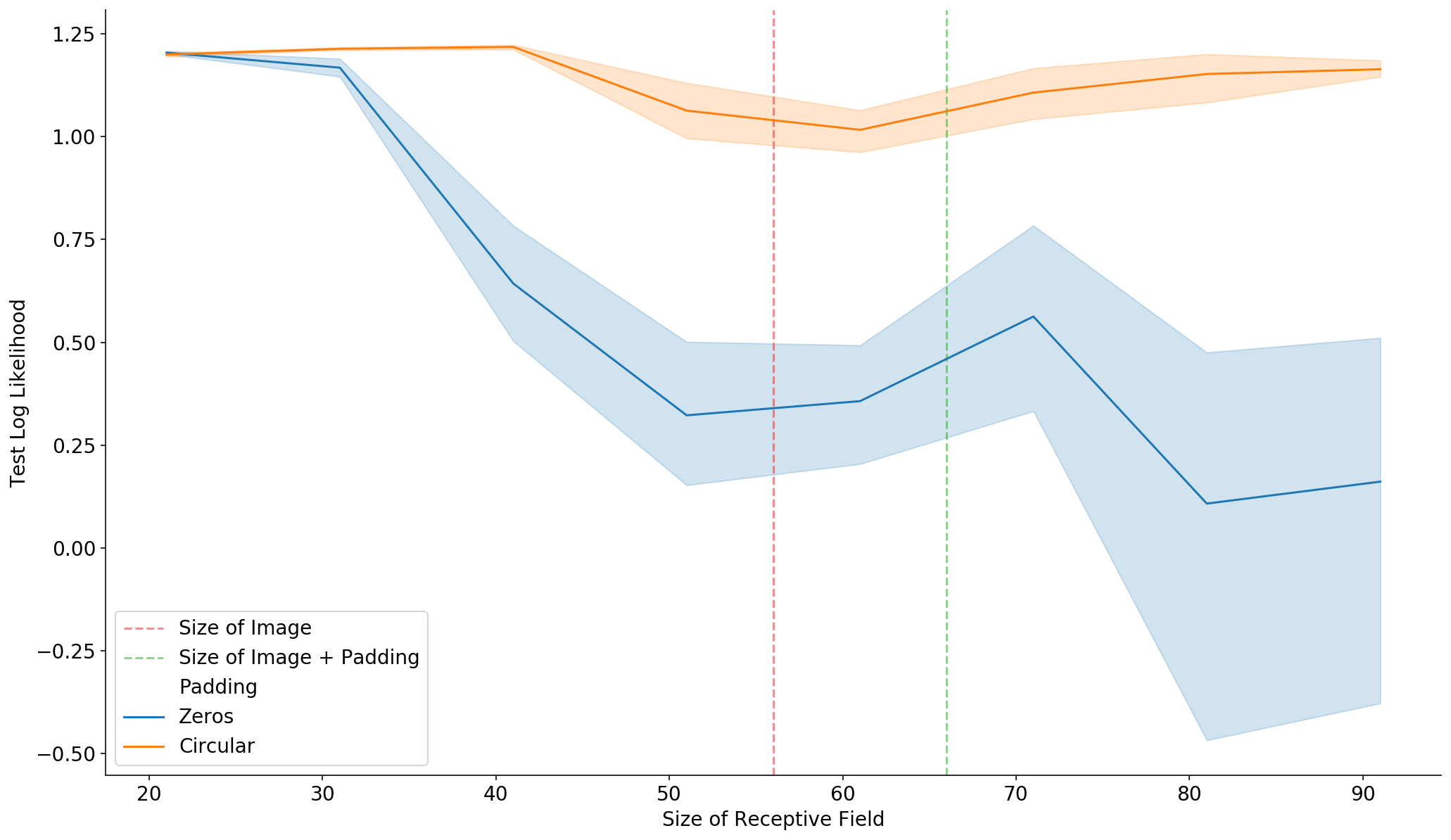}

\caption{Effect of the receptive field size on ZSMM's log-likelihood. 
The line plot shows the mean and standard deviation over 6 runs. 
The blue curve corresponds to a model with zero padding, while the orange one corresponds to ``circular'' padding.}
\label{fig:padding_zsmm}
\end{figure}

For ZSMM, \cref{fig:padding_zsmm} suggests that ``circular'' padding, where the padding is implied by tiling the image, helps prevent the model from learning non-stationarities, even as the size of the receptive field becomes larger.
We hypothesize that this is due to the fact that ``circularly'' padded values are harder to distinguish from actual values than zeros.
We have not tested the effect of padding on other datasets, and note that ``circular'' padding could result in other issues. 




\newpage

\section{Author Contributions}
\label{app:author_contributions}

Richard, Jonathan, and Wessel formulated the project. Richard and Jonathan helped coordinate the team and were closely involved with all aspects of the project. Andrew realised that a density channel was necessary in the model. Yann proposed using normalized convolutions and showed that they led to improvements. All authors contributed to the writing and editing of the paper. Wessel and Jonathan developed the theory. Andrew verified the proof and suggested improvements. James also suggested improvements. Wessel wrote the majority of Appendix A with assistance from Andrew and Jonathan. Andrew and Jonathan performed the initial experiments on simple time-series. Wessel, Jonathan, and James refined these experiments and produced the final versions for the paper. James wrote this section of the paper. James led the experiments on complex time-series and wrote this section of the paper. Andrew worked on the first on-the-grid experiments. Yann redesigned and reimplemented the on-the-grid computational framework and performed all the image experiments shown in the paper. Yann wrote this section of the paper.

\end{document}